\newcommand{\secsqueezeBefore}{\vspace{-1.5mm}}
\newcommand{\secsqueezeAfter}{\vspace{-1.5mm}}
\newcommand{\figsqueezeAfter}{\vspace{-1mm}}
\newtheorem{Thm}{Theorem}[section]
\newtheorem{Lem}{Lemma}[section]
\newtheorem{Prp}{Proposition}[section]
\newtheorem*{Clm*}{Claim}
\newtheorem*{Prp*}{Proposition}
\DeclareMathOperator*{\argmin}{arg\,min}
\DeclareMathOperator{\Tr}{Tr}
\newcommand{\diag}{\operatorname{diag}}
\newcommand{\BigO}{\mathcal O}
\newcommand{\Cov}{\operatorname{Cov}}
\newcommand{\D}{\mathcal D}
\newcommand{\E}{\mathbb E}
\newcommand{\N}{\mathcal N}
\newcommand{\R}{\mathbb R}
\newcommand{\X}{\mathcal X}
\newcommand{\Y}{\mathcal Y}
\newcommand{\Z}{\mathcal Z}
\let\olda=\@
\let\bk=\!
\let\svsqrt\sqrt
\newsavebox\Nsqrt
\def\sr#1{\ThisStyle{%
		\savebox\Nsqrt{\scalebox{.5}[1]{$\SavedStyle\svsqrt{\phantom{\cramped{#1#1}}}$}}%
		\ooalign{\usebox{\Nsqrt}\cr\kern.2pt\usebox{\Nsqrt}\cr\hfil$\SavedStyle\cramped{#1}$}}}
\def\*#1{\mathbf{\bm #1}}
\def\@#1{{\bm{#1}}}
\def\!#1{[#1]}
\def\parsq{}  %
\newcommand{\taupar}{\tau\parsq}
\newcommand{\upspar}{\upsilon\parsq}
\newcommand{\vtaupar}{\@\tau\parsq}
\newcommand{\vupspar}{\@\upsilon\parsq}
\newcommand{\hvtaupar}{\@{\hat\tau}\parsq}
\newcommand{\hvupspar}{\@{\hat\upsilon}\parsq}
\newcommand{\iid}{i.i.d.\olda\xspace}
\newcommand{\norm}[1]{\lVert#1\rVert}
\newcommand{\bigNorm}[1]{\bigl\lVert#1\bigr\rVert}
\newcommand{\set}[1]{\{#1\}}
\newcommand{\card}[1]{\lvert#1\rvert}
\newcommand{\bigSet}[1]{\bigl\{#1\bigr\}}
\newcommand{\bigBracks}[1]{\bigl[#1\bigr]}
\newcommand{\Bracks}[1]{\left[#1\right]}
\newcommand{\BigBracks}[1]{\Bigl[#1\Bigr]}
\newcommand{\BigParens}[1]{\Bigl(#1\Bigr)}
\newcommand{\bigParens}[1]{\bigl(#1\bigr)}
\newcommand{\biggParens}[1]{\biggl(#1\biggr)}
\title{SureMap: Simultaneous mean estimation for single-task and multi-task disaggregated evaluation\looseness-1}
\author{%
	Mikhail Khodak\thanks{Work done in part while at Microsoft Research and supported in part by a CMU TCS Presidential Fellowship.} \\
	Princeton University\\
	\texttt{mkhodak@cs.cmu.edu} \And
	Lester Mackey, Alexandra Chouldechova, Miroslav Dud\'{i}k \\
	Microsoft Research \\
	\texttt{\{lmackey,alexandrac,mdudik\}@microsoft.com}
}
\begin{document}

\maketitle
\setcounter{footnote}{0}

\secsqueezeBefore
\begin{abstract}
\secsqueezeAfter
	Disaggregated evaluation---estimation of performance of a machine learning model on different subpopulations---is a core task when assessing performance and group-fairness of AI systems.
	A key challenge is that evaluation data is scarce, and subpopulations arising from intersections of attributes (e.g., race, sex, age) are often tiny.
Today, it is common for multiple clients to procure the same AI model from a model developer, and the task of disaggregated evaluation is faced by each customer individually.  This gives rise to what we call the \textit{multi-task disaggregated evaluation problem}, wherein multiple clients seek to conduct a disaggregated evaluation of a given model in their own data setting (task).  In this work we develop a disaggregated evaluation method called {\bf SureMap} that has high estimation accuracy for both multi-task \textit{and} single-task disaggregated evaluations of blackbox models.  SureMap's efficiency gains come from
	(1)~transforming the problem into structured simultaneous Gaussian mean estimation and (2)~incorporating external data, e.g., from the AI system creator or from their other clients.  Our method combines {\em maximum a posteriori} (MAP) estimation using a well-chosen prior together with cross-validation-free tuning via Stein's unbiased risk estimate (SURE).
	We evaluate SureMap on disaggregated evaluation tasks in multiple domains, observing significant accuracy improvements over several strong competitors.\looseness-1
\end{abstract}

\secsqueezeBefore
\section{Introduction}
\secsqueezeAfter

Evaluation is a key challenge in modern AI, with much effort spent deciding what metrics to measure, with which methods, and on what data. %
This challenge is especially acute in fairness assessment, which requires not only high-quality data to run a model and score its outputs but also demographic information for defining groups.  Due to the high cost of obtaining high-quality evaluation data,
the issue of sample complexity---sample size needed to get a good performance estimate---remains salient, especially when we want to release not just one overall measure but instead to output a {\em disaggregated evaluation} that captures variation among demographic subpopulations of the data~\citep{barocas2021designing}.  For instance, we might want to assess group fairness by examining the
variation in performance across groups of users defined by intersections of the demographic attributes age, race, and sex.
The naive approach of independently evaluating each group's performance on its own data can fail because the sample sizes of intersectional groups rapidly decrease as we consider more attributes~\citep{herlihy2024structured}.
Recent work has shown how to improve upon naive methods by combining data from multiple subpopulations to inform their individual performance estimates~\citep{miller2021model,herlihy2024structured}.\looseness-1

In today's technology landscape it is common for multiple clients to \textit{procure} the same model (e.g., an automated speech recognition or language model) from an AI developer, with each client performing a disaggregated evaluation of \textit{the same model} on \textit{their own data}. We refer to this problem
as the \textbf{multi-task disaggregated evaluation}.  %
We formalize and study this problem, showing that one can improve the disaggregated evaluations of individual clients by using multi-task data
in the form of summary statistics from other clients or from the model provider.
Our approach uses the (out-of-distribution) multi-task data
to set the parameters of a multivariate normal prior and then performs {\em maximum a posteriori} (MAP) inference on the (in-distribution) client data.
Formally, we model the problem as Gaussian mean estimation and design a simple-yet-expressive additive prior that can capture many different relationships between subpopulations.
Drawing upon classical statistics, we fit prior parameters by minimizing Stein's unbiased risk estimator \citep[SURE,][]{stein1981estimation}.
While motivated by multi-task considerations, we show that our method also performs well in the %
single-task setting.\looseness=-1

\secsqueezeBefore
\subsection{Contributions}
\secsqueezeAfter

\begin{enumerate}[leftmargin=*,topsep=-1mm,itemsep=1pt,parsep=0pt]
	\item {\bf SureMap}: We introduce a method that uses SURE to tune the parameters of a well-chosen Gaussian prior before applying MAP estimation.
	The prior is motivated by its attainment of a good efficiency--expressivity tradeoff, requiring only a linear (in the number of subpopulations) number of parameters to recover several natural baselines for disaggregated evaluation. %
	\item {\bf Datasets}: Disaggregated evaluation has few benchmarks~\citep{herlihy2024structured}, so we introduce new ones for both the single-task and multi-task settings, covering automated speech recognition~(ASR) and also tabular domains (with linear models and also in-context LLMs).\looseness-1
	\item {\bf Single-task:} We find that SureMap is always competitive with strong baselines from prior work, while improving significantly in some settings with intersectional sensitive attributes.
	\item {\bf Multi-task:} Incorporating data from multiple clients into SureMap yields significant improvements across all evaluated settings. This multi-task approach is more accurate even with just one additional task and is the only method to consistently outperform the naive and pooling baselines.\looseness-1
\end{enumerate}

\secsqueezeBefore
\subsection{Related work}
\secsqueezeAfter

Disaggregated evaluation is a core task in the fairness assessment of AI systems~\citep{barocas2021designing}.
Past work has sought to improve estimation accuracy by combining information across different groups, e.g., via Bayesian modeling~\citep{miller2021model}, Gaussian process approximation of loss surfaces~\citep{piratla2021active}, and structured regression~\citep{herlihy2024structured}.
The last work found that classical James--Stein-type mean estimation~\citep{james1961estimation,bock1975minimax} is often competitive, and so we adopt it as our first non-naive baseline.
We also compare to structured regression itself, which turns out to have a tight mathematical connection to SureMap; indeed, apart from our use of Gaussian~(ridge) rather than Laplace~(lasso) priors~(regularization)---as well as our use of a more flexible tuning based on SURE rather than cross-validation---the method of \citet{herlihy2024structured} can be viewed as the discriminative counterpart to our generative approach
(see \textsection\ref{app:structreg} for details).\looseness-1

Within the disaggregated evaluation literature we are the first to formulate and study {\em multi-task} disaggregated evaluation.  This is an important direction because (a)~model providers often have their own data or data from multiple clients that can inform the evaluation and (b)~transferring information across distributions is a key way to handle very low-sample regimes.
We also contribute several datasets that we hope will spur further development in disaggregated evaluation.\looseness-1

SureMap relies on applying classical mean estimation tools to quantities modeled as Gaussian means.
Notably, \citet{miller2021model} model {\em scores} via well-studied distributions---e.g., Gaussians---but since scores are related non-linearly to metrics it is unclear if this can lead to similarly simple estimators.
To tune parameters, we use SURE,
a popular statistical approach~\citep{li1985from,donoho1995adapting}.
Specifically, in the empirical Bayes tradition, we use it to set the MAP estimator of a hierarchical model.
Using SURE to tune the scale of an isotropic Gaussian prior was shown to be asymptotically (in the dimension) optimal in the case of heteroskedastic data distributions~\citep{xie2012sure}.
Since disaggregated evaluation data is highly heteroskedastic due to variation in group size, this is positive evidence for our approach, although our prior is non-isotropic and has many more variance parameters.\looseness-1

\secsqueezeBefore
\section{Setup}\label{sec:setup}
\secsqueezeAfter

We first describe the disaggregated evaluation problem~(\textsection\ref{sec:baselines}), recast it as a Gaussian mean estimation~(\textsection\ref{sec:model}), and motivate a multi-task variant~(\textsection\ref{sec:multitask_setting}), all while introducing several baselines estimators.\looseness-1

\secsqueezeBefore
\subsection{Setting and baselines}\label{sec:baselines}
\secsqueezeAfter

We want to assess a predictive model $p:\X\to\Y$ under some distribution $\D$ over input space $\X$ and output space $\Y$ using error measure $\ell:\Y\times \Y\to\R$.
For example, in image classification, $\D$ is a distribution over (image, label) pairs and $\ell$ is the 0-1 error.
To simplify notation, we mainly deal with the composite function $f(z)=\ell(y,p(x))$ acting on points $z=(x,y)$ in the product space $\Z=\X\times\Y$.\looseness-1

In {\em disaggregated} evaluation, $\Z$ is assumed to be a union of $d\ge1$ disjoint subsets $\Z_g$, each associated to some subpopulation or {\em group} $g\in\!d$, where we use $\!k$ to denote the set $\{1,\dots,k\}$.
As a running example, suppose each point $z\in\Z$ is an individual whose sex $s$ and age $a$ are categorical variables with $d_1$ and $d_2$ possible values, respectively.
Then $d=d_1d_2$ and each point has an associated index $g=(s-1)d_2+a$ denoting its intersection of sex $s\in\!{d_1}$ and age $a\in\!{d_2}$.
The task is then to use a set $S\sim\D^n$ of $n\ge1$ \iid samples from the distribution $\D$ to estimate the vector of true subpopulation errors $\@\mu\in\R^d$, with components $\mu_g=\E_{z\sim\D\mid\Z_g}[f(z)]$.\footnote{We assume existence of the first (and, in \textsection\ref{sec:model}, of the second) moments of $f(z)$, $z\sim\D\mid\Z_g$, across all $g\in\!d$.} We write $\@{\hat\mu}(S)\in\R^d$ for an estimator of $\@\mu$ with components denoted as $\hat\mu_g(S)$.

Empirically, we measure estimation accuracy, i.e., the performance of the performance estimate, via {\bf mean absolute error} (MAE) $L_\@\mu^\textrm{MAE}(\@{\hat\mu}(S))
=\frac1d\|\@{\hat\mu}(S)-\@\mu\|_1$,
which is easy to interpret and less sensitive to outliers than {\bf mean squared error} (MSE).
Our method development, however, is based on a count-weighted version of MSE, where we denote group counts as $n_g=|S\cap\Z_g|$:
\begin{equation}
L_\@\mu^\*n(\@{\hat\mu}(S))
=\frac1d\sum_{g=1}^dn_g(\hat\mu_g(S)-\mu_g)^2.
\end{equation}
We conclude the setup with two baselines.
The first is the {\bf naive estimator} returning group means:\footnote{
	If $n_g=0$ we let $\hat\mu_g^\textrm{naive}$ fall back to pooling~(Eq.~\ref{eq:pooled}), i.e., $\hat\mu_g^\textrm{naive}=\hat\mu_g^\textrm{pooled}$;
	in the next section, assume $n_g>0~\forall~g$.}
\begin{equation}\label{eq:naive}
\hat\mu_g^\textrm{naive}(S)
=\frac1{n_g}\sum_{z\in S\cap\Z_g}f(z).
\end{equation}
While unbiased, $\@{\hat\mu}^\textrm{naive}$ can perform poorly on groups with few samples.
The second baseline, the {\bf pooled estimator}, returns an identical quantity---the overall sample mean---for all groups:
\begin{equation}\label{eq:pooled}
\hat\mu_g^\textrm{pooled}(S)
=\frac1n\sum_{z\in S}f(z)
=\frac1n\sum_{g=1}^d n_g\hat\mu_g^\textrm{naive}(S).
\end{equation}
This estimator is generally biased (unless all group means are equal), but it can perform well in low-sample regimes thanks to a much lower variance.

\secsqueezeBefore
\subsection{A Gaussian model for disaggregated evaluation}\label{sec:model}
\secsqueezeAfter

We use a simple but natural model to aid in the design of disaggregated evaluation methods.
Specifically, denoting the naive estimator by $\*y=\@{\hat\mu}^\textrm{naive}(S)\in\R^d$, we model group $g$'s entry $y_g$ as being drawn from a Gaussian with (unknown) mean $\mu_g$ and (known) variance $\sigma^2/n_g$, where $\sigma^2$ is shared across groups.
This reduces the problem of disaggregated evaluation, as defined in \textsection\ref{sec:baselines}, to that of estimating the mean of a multivariate Gaussian with known diagonal covariance $\Sigma_{g,g}=\sigma^2/n_g$ given a single sample $\*y\sim\N(\@\mu,\@\Sigma)$.
Our model has many advantages in the disaggregated evaluation setting:\looseness-1
\begin{enumerate}[leftmargin=*,topsep=-1mm,itemsep=1pt,parsep=0pt]
	\item By the central limit theorem, $\*y$ is asymptotically normal with mean $\@\mu$ and diagonal covariance $\@\Sigma$ for many distributions $\D$ of interest, even when the underlying data is non-Gaussian.
	Furthermore, because the methods derived from our model only take $\*y$ and $\@\Sigma$ as input, they can be applied even when the evaluated statistic is not the pointwise average assumed by the setup in \textsection\ref{sec:baselines}, so long as $\*y\sim\N(\@\mu,\@\Sigma)$ holds asymptotically.
	An example of this is when $y_g=\hat\mu_g^\textrm{naive}(S)$ corresponds to the area under the ROC curve~(AUC) computed over group $g$'s data $S\cap\Z_g$~\citep{lehman1951consistency};
    we demonstrate SureMap's applicability to AUC empirically in \textsection\ref{app:evaluation} (Figures~\ref{app:diabetes-auc} \&~\ref{app:slacs-auc}).\looseness-1
	\item While a shared variance is a strong assumption, it is perhaps the simplest way of incorporating the inductive bias that $\Sigma_{g,g}$ will be highly correlated with the inverse of $n_g$, the number of samples from group $g$.
	In practice, we set $\sigma^2$ to be the pooled estimate $\smash[t]{\frac1{n-d}\sum_{g=1}^d\sum_{z\in S\cap\Z_g}(f(z)-y_g)^2}$.\looseness-1
	\item Gaussian mean estimation is one of the best-studied problem in statistics, with numerous well-tested baselines and approaches for developing new methods.
	In particular, we make significant use of the classic James--Stein approach~\citep{james1961estimation,bock1975minimax}, SURE~\citep{stein1981estimation}, and empirical Bayesian estimation methods~\citep{robbins1964empirical}.
	\item In the multi-task setting, clients are likely to be unwilling to share their actual data but possibly more willing to share group summary statistics.
	Thus methods developed for our Gaussian model---which only require the group means $\*y$, group counts $\*n$, and an estimate of $\sigma^2$---will be more broadly applicable than methods that act directly on the dataset $S\subset\Z$.
\end{enumerate}

This model can also be naturally extended---using a non-diagonal covariance $\@\Sigma$---to disaggregated evaluation with non-disjoint groups, e.g., to simultaneously estimate performance both for all women and for only women in their forties.  In the interest of brevity we focus on the disjoint group setting.

\secsqueezeBefore
\subsection{The multi-task setting}\label{sec:multitask_setting}
\secsqueezeAfter

We can easily extend this model to study multi-task disaggregated evaluation, in which for each task $t=1,\dots,T$ (e.g., a client of the model provider) we observe a set $S_t\subset\Z$ of $n_t$ samples from the task distribution $\D_t$.
The goal is then to output $T$ vectors $\@{\hat\mu}_t$ that are close on-average to the tasks' subpopulation errors $\mu_{t;g}=\E_{z\sim\D_t}[f(z)|z\in\Z_g]$.
Converting to our Gaussian model, we observe $T$ vectors $\*y_t\sim\N(\@\mu_t,\@\Sigma_t)$---where we set $\Sigma_{t;g,g}=\sigma^2/n_{t;g}$ for some globally shared $\sigma^2$ and task-specific group count vectors $\*n_t\in\smash[b]{\mathbb Z_{\ge0}^d}$---and must output $T$ mean estimates $\@{\hat\mu}_t(\{\*y_t\}_{t=1}^T)\in\R^d$.

We consider two natural multi-task baseline estimators. The first is the {\bf global naive estimator} (or global estimator for short), which combines the data from all tasks, computes a single global vector of group averages, and uses it as the estimate for each task:
\begin{equation}\label{eq:mt-pooling}
	\@{\hat\mu}_t^\textrm{global}(\{S_t\}_{t=1}^T)
	=\@{\hat\mu}^\textrm{naive}\Biggl(\bigcup_{t=1}^TS_t\bk\Biggr)
	\quad\textrm{or}\quad
	\@{\hat\mu}_t^\textrm{global}(\{\*y_t\}_{t=1}^T)
	=\Biggl(\sum_{t=1}^T\@\Sigma_t^{-1}\bk\Biggr)^{\bk\bk-1}\sum_{t=1}^T\@\Sigma_t^{-1}\*y_t.
\end{equation}
While low variance, the global estimator ignores variation across tasks. Our second baseline---the {\bf multi-task offset estimator}---shifts the global estimate on each task to ensure that the task's pooled mean is preserved (thus accounting for the variation in pooled means across individual tasks):
\begin{equation}\label{eq:mt-offset}
	\@{\hat\mu}_t^\textrm{offset}(\{\*y_t\}_{t=1}^T)
	=\@\theta + \@{\hat\mu}^\textrm{pooled}(\*y_t-\@\theta),
	\qquad\textrm{where}\qquad
	\@\theta
	=\@{\hat\mu}_t^\textrm{global}(\{\*y_t\}_{t=1}^T).
\end{equation}

\secsqueezeBefore
\section{Methods}
\secsqueezeAfter

In the last section we reduced the problem of disaggregated evaluation to that of estimating a mean $\@\mu\in\R^d$ given a sample $\*y\sim\N(\@\mu,\@\Sigma)$, where $\@\Sigma$ is known and diagonal.
We now design a method, {\bf SureMap}, for the latter problem.
Our technical approach involves the following two steps:
\begin{enumerate}[leftmargin=*,topsep=-1mm,itemsep=1pt,parsep=0pt]
	\item {\bf Choosing a parameterized mean estimator.}
	We use the MAP estimator under a multivariate normal prior that we design specifically for intersectional subpopulations.
	\item {\bf Tuning the estimator's hyperparameters.}
    We use SURE to estimate the quality of our estimator, which we then optimize over the choice
    of hyperparameters using the L-BFGS-B algorithm.
\end{enumerate}

\secsqueezeBefore
\subsection{Designing a parameterized estimator}
\secsqueezeAfter

As mean estimation is a vast area, we use three criteria for designing an estimator:
it should (1)~dominate baselines such as $\@{\hat\mu}^\textrm{naive}$ and $\@{\hat\mu}^\textrm{pooled}$; (2)~have relatively few hyperparameters; and (3)~handle {\em heteroskedasticity} stemming from variation in group sizes.
One natural source of candidates are James--Stein-type shrinkage estimators:
the original James--Stein estimator famously dominates $\@{\hat\mu}^\textrm{naive}$ in MSE and has no hyperparameters to tune~\citep{james1961estimation}, satisfying our first two desiderata.
Furthermore, while~\citet{james1961estimation} assumed an isotropic $\@\Sigma$, subsequent estimators such as the following variant of an estimator due to~\citet{bock1975minimax} do handle heteroskedastic $\@\Sigma$:\footnote{
	\citet{feldman2014revisiting} show that using $d$ in the numerator instead of Bock's $\smash{\frac{\Tr\@\Sigma}{\|\@\Sigma\|}}$ performs better;
	in \textsection\ref{app:sure}	we show that their modified form can be derived by minimizing an upper bound on the $\@\Sigma^{-1}$-weighted MSE.
}\looseness=-1
\begin{equation}\label{eq:bock}
	\@{\hat\mu}_\@\theta^\textrm{Bock}(\*y)
	=\@\theta+\biggl(1-\frac{d-2}{(\*y-\@\theta)^\top\@\Sigma^{-1}(\*y-\@\theta)}\biggr)_{\bk+}(\*y-\@\theta),
\end{equation}
where $(\cdot)_+=\max\{\cdot,0\}$, and $\@\theta\in\R^d$ is a default estimate towards which $\*y$ is shrunk.\footnote{
	We typically use $\@\theta=\@0$, but in \textsection\ref{app:sure} we derive a variant shrinking towards $\@\theta=\@{\hat\mu}^\textrm{pooled}(\*y)$.
}

However, empirically we find that this often underperforms the pooled estimator in low-sample regimes;
further, the form of $\@{\hat\mu}_{\@\theta}^\textrm{Bock}$ shows that the amount of shrinkage towards $\@\theta$ is the same for each coordinate $g\in\!d$, despite intuition suggesting that we should shrink less for groups $g$ with more samples.
Corrections to this tend to be involved and difficult to generalize~\citep{efron1973stein}.\looseness-1

We thus turn to a different family of well-known Gaussian mean estimators:
those that return the mode of the posterior distribution assuming $\@\mu$ is sampled from the conjugate prior $\N(\@\theta,\@\Lambda)$ with mean $\@\theta\in\R^d$ and positive-definite covariance $\@\Lambda\in\R^{d\times d}$ (e.g.,~\citet[Equation~3.12]{gelman2014bayesian}):\looseness-1
\begin{equation}
\label{eq:MAP}
	\@{\hat\mu}_{\@\theta,\@\Lambda}^\textrm{MAP}(\*y)
	=(\@\Lambda^{-1}+\@\Sigma^{-1})^{-1}(\@\Lambda^{-1}\@\theta+\@\Sigma^{-1}\*y).
\end{equation}
MAP naturally handles heteroskedasticity by weighting low-variance coordinates $g$ more heavily, satisfying our third criterion;
however, its most general form has $\BigO(d^2)$ hyperparameters, violating the second.
We next use problem structure to restrict the number of hyperparameters needed to define $\@\Lambda$ while still allowing $\@{\hat\mu}_{\@\theta,\@\Lambda}^\textrm{MAP}$ to express every baseline introduced in \textsection\ref{sec:baselines}, satisfying our first criterion.

\begin{figure}[!t]
	\begin{minipage}{0.465\linewidth}%
		\begin{algorithm}[H]
			\DontPrintSemicolon
			\KwIn{target $f:\Z\to\R$, samples $S\subset\Z$, partition $\{\Z_g\}_{g=1}^d$ of $\Z$, each group $g$ an intersection of $k\in\mathbb Z_{>0}$ attributes}
    \smallskip
			\tcp{compute naive group means}
			\For{group $g\in\!d$}{
			$n_g\gets|S\cap\Z_g|$\\
			$y_g\gets\frac1{n_g}\sum_{z\in S\cap\Z_g}f(z)$
			}
    \smallskip
			\tcp{estimate group variances}
			$\sigma^2\gets\frac1{|S|-d}\sum_{g=1}^d\sum_{z\in S\cap Z_g}(f(z)-y_g)^2$\\
			$\@\Sigma^{-1}\gets\diag(\*n)/\sigma^2$\\
    \smallskip
			\tcp{compute auxiliary matrix}
			\SetKwProg{Method}{Method}{:}{}
			\SetKwFunction{Amatrix}{$\*A$}
			\Method{\Amatrix{$\vtaupar$}}{
				\tcp{compute prior covariance (matrices $\*C_A$ are defined in \textsection\ref{sec:prior} and \textsection\ref{app:formal})}
				$\@\Lambda\gets\sum_{A\subseteq[k]}\tau_A^2\*C_A$\\
				\KwOut{$(\*I_d+\@\Lambda\@\Sigma^{-1})^{-1}$}
			}
    \smallskip
			\tcp{optimize SURE using L-BFGS-B}
			$\hvtaupar\gets\argmin\limits_{\vtaupar\in\R_{\ge0}^{2^k}}\|\*A(\vtaupar)\*y\|_{\@\Sigma^{-1}}^2-2\Tr(\*A(\vtaupar))$\\
    \smallskip
			\tcp{estimate group means using MAP}
			\KwOut{$\*y-\*A(\hvtaupar)\*y$}
			\caption{\label{alg:suremap}
				Single-task SureMap.\\\hphantom{\textbf{Algorithm 1:}}(For multi-task SureMap see \textsection\ref{app:computation}.)%
            }
		\end{algorithm}
	\end{minipage}
\hfill
	\begin{minipage}{0.5\linewidth}
			\centering
			\includegraphics[width=\linewidth]{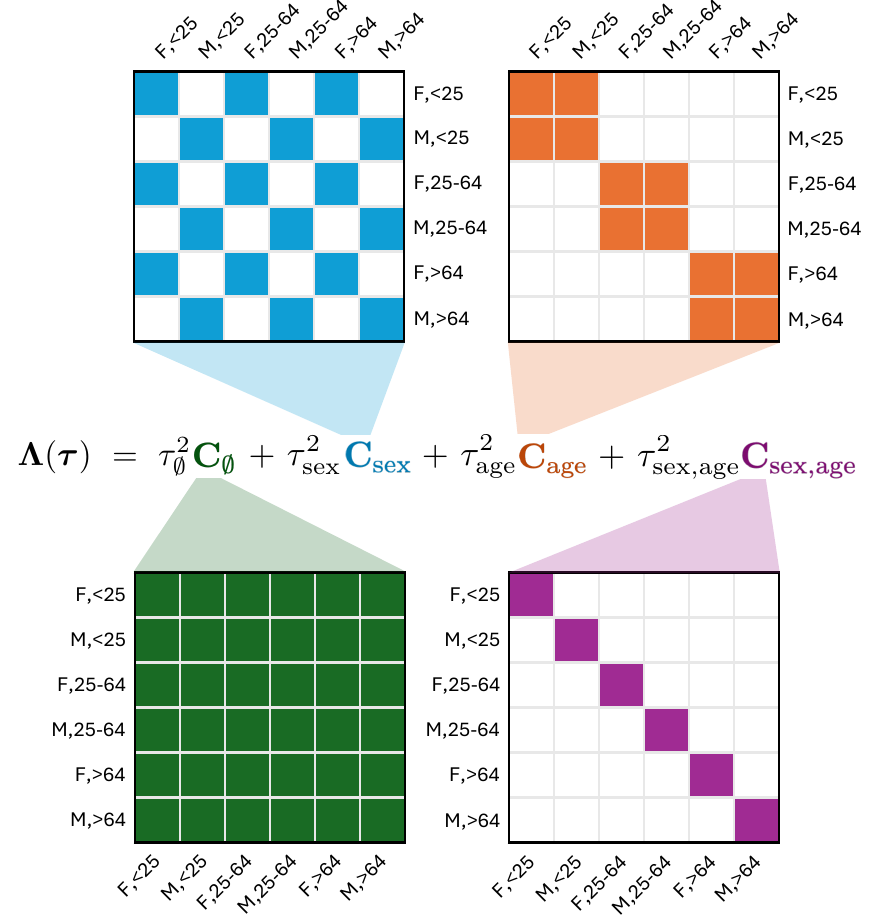}
			\caption{\label{fig:structure}
				Matrices $\*C_A\in\{0,1\}^{d\times d}$, whose linear combination defines the covariance $\@\Lambda(\vtaupar)$ of our additive intersectional effects prior (Eq.~\ref{eq:structure}).
				In this example, there are two categories for sex (\texttt{F,M}) and three for age (\texttt{<25,25-64,>64}), yielding $d=6$ groups.
				Shaded squares are 1s, unshaded are 0s.
			}
	\end{minipage}
\figsqueezeAfter
\end{figure}

\secsqueezeBefore
\subsubsection{An additive intersectional effects prior}\label{sec:prior}
\secsqueezeAfter

For brevity, we build up our estimator somewhat informally;
a full description is in \textsection\ref{app:formal}.
We return to our simple example where each group $g\in\!d$ corresponds to an intersection $(s,a)\in\!{d_1}\times\!{d_2}$ of two attributes:
a sex $s\in\!{d_1}$ and an age $a\in\!{d_2}$.
A simple prior that additively incorporates individual attribute effects into intersectional group means is the following:
\begin{equation}
\mu_g
=\tau_\emptyset\zeta+\tau_\textrm{sex}\zeta_a^\textrm{sex}+\tau_\textrm{age}\zeta_a^\textrm{age}+\tau_\textrm{sex,age}\zeta_g+\theta_g
\end{equation}
where $\@\theta\in\R^d$ and $\tau_\emptyset,\tau_\textrm{sex},\tau_\textrm{age},\tau_{\textrm{sex,age}}\in\R_{\ge0}$ are hyperparameters, $\zeta\sim\N(0,1)$ is a scalar effect shared across all groups, the vector $\@\zeta^\textrm{sex}\in\R^{d_1}$ has its $s$th entry $\zeta_s^\textrm{sex}\sim\N(0,1)$ shared by all groups $g$ whose sex is $s$, the vector $\@\zeta^\textrm{age}\in\R^{d_2}$ has its $a$th entry $\zeta_a^\textrm{age}\sim\N(0,1)$ shared by all groups $g$ whose age is $a$, and the vector $\@\zeta\sim\N(\*0_d,\*I_d)$ contains an independent noise term $\zeta_g$ for each group $g$.

The hyperparameter $\tau_\emptyset$ quantifies how much we expect all of the means to be shifted (by a shared positive or negative value) from the default $\@\theta$. Hyperparameters $\tau_\textrm{sex}$ and $\tau_\textrm{age}$ express how large we expect contributions of $\textrm{sex}$ and $\textrm{age}$ alone to be towards the means. And finally, non-zero $\tau_\textrm{sex,age}$ gives the prior flexibility to model heterogeneity across all intersectional groups.

Given the vector of hyperparameters $\smash{\vtaupar=(\taupar_\emptyset,\dotsc,\taupar_\textrm{sex,age})\in\R_{\ge0}^4}$, the prior can be written more compactly as $\@\mu\sim\N(\@\theta,\@\Lambda(\vtaupar))$, where the covariance is
	\begin{equation}\label{eq:structure}
		\@\Lambda(\vtaupar)
		=\sum_{A\subseteq\set{\textrm{sex,age}}}\tau_A^2\*C_A
		=\tau_\emptyset^2\*C_\emptyset+\tau_\textrm{sex}^2\*C_\textrm{sex}+\tau_\textrm{age}^2\*C_\textrm{age}+\tau_\textrm{sex,age}^2\*C_\textrm{sex,age}
	\end{equation}
for matrices $\*C_A\in\{0,1\}^{d\times d}$ s.t.\ each entry  $C_{A;g,h}$ is one iff groups $g$ and $h$ agree on the attributes included in $A$.
In particular, we have that $\*C_\emptyset=\*1_{d\times d}$ is the all-ones matrix, the entries $C_{\textrm{sex};g,h}$ of $\smash[t]{\*C_\textrm{sex}\in\{0,1\}^{d\times d}}$ are one iff groups $g$ and $h$ share the same sex attribute, the matrix $\*C_\textrm{age}$ is analogous,
and $\*C_\textrm{sex,age}=\*I_d$ is the $d\times d$ identity.
This structure is visualized in Figure~\ref{fig:structure}.

\secsqueezeBefore
\subsubsection{Efficiency and expressivity}\label{sec:expressivity}
\secsqueezeAfter

As detailed in \textsection\ref{app:formal}, this prior can be naturally extended to any number of attributes $k$ using a covariance matrix $\@\Lambda(\vtaupar)\in\R^{d\times d}$ specified by a vector $\vtaupar\in\R_{\ge0}^{2^k}$ of $2^k$ hyperparameters.
Since $k\le\lfloor\log_2d\rfloor$, the total number of hyperparameters (including $\@\theta\in\R^d$) is $d+2^k=\BigO(d)$, which is much smaller than the $\BigO(d^2)$ complexity of the general case.
We can further reduce this by fixing the entries of $\@\theta$, constraining them to be identical, or setting them using external (e.g., multi-task) data.

Despite this reduction in hyperparameters, we can show that for a suitable choice of $\vtaupar$,
the estimator $\@{\hat\mu}_{\@\theta,\@\Lambda(\vtaupar)}^\textrm{MAP}$ recovers many estimators of interest, including the naive estimator and the (possibly offset) pooled estimator (see \textsection\ref{app:expressivity}). This means that MAP with our structured prior should be able to outperform all four baselines from the previous section, if appropriately tuned.

\secsqueezeBefore
\subsection{Tuning by minimizing expected risk}\label{sec:sure}
\secsqueezeAfter

Having specified a parameterized estimator, there remains the question of setting its parameters $\@\theta$ and~$\vtaupar$.
One might want to treat this as a hyperparameter tuning problem and use a data-splitting approach;
however, the dimensionality of the problem makes standard techniques either expensive or noisy, and data splitting introduces additional randomness and design decisions into an already data-poor environment.
We instead make continued use of our Gaussian assumption and turn to SURE, which given a differentiable estimator $\@{\hat\mu}:\R^d\to\R^d$ returns an unbiased estimate of its weighted
MSE $L_\@\mu^\*n$ using sample data $\*y\sim\N(\@\mu,\@\Sigma)$:
\begin{equation}\label{eq:sure}
	\hat R_\@\mu^\*n(\*y)
	=\frac{\sigma^2}d\BigParens{\|\@{\hat\mu}(\*y)-\*y\|_{\@\Sigma^{-1}}^2-d+2\@\nabla_\*y\cdot\@{\hat\mu}(\*y)},
\end{equation}
where given any $\*W\succ\*0_{d\times d}$ we denote $\norm{\*x}_\*W^2=\langle\*x,\*W\*x\rangle~\forall~\*x\in\R^d$.
Using SURE we can now tune the parameters of $\@{\hat\mu}$ by minimizing $\smash{\hat R_\@\mu^\*n(\*y)}$ in a manner similar to empirical risk minimization.

\secsqueezeBefore
\subsubsection{Single-task SureMap}
\secsqueezeAfter

In the single-task setting, we fix $\@\theta=\*0_d$ and tune the variance parameters $\vtaupar\in\R_{\ge0}^{2^k}$.
Letting $\*A(\vtaupar)=(\@\Lambda^{-1}(\vtaupar)+\@\Sigma^{-1})^{-1}\@\Lambda^{-1}(\vtaupar)$, we define the {\bf single-task SureMap estimator} as
\begin{align}\label{eq:st}
&
	\@{\hat\mu}^\textrm{SM}(\*y)
	=\@{\hat\mu}_{\*0_d,\@\Lambda(\hvtaupar)}^\textrm{MAP}(\*y)
	=(\*I_d-\*A(\hvtaupar))\*y
\\
\label{eq:st:argmin}
&\text{for\ \ }\hvtaupar
	=\argmin_{\smash[b]{\vtaupar\in\R_{\ge0}^{\smash{2^k}}}}\|\*A(\vtaupar)\*y\|_{\@\Sigma^{-1}}^2-2\Tr(\*A(\vtaupar)).
\end{align}
The optimization problem in the second line comes from substituting $\smash{\@{\hat\mu}_{\*0_d,\@\Lambda(\hvtaupar)}^\textrm{MAP}}$ into SURE (Eq.~\ref{eq:sure}).
It is nonconvex, but we find that it can be quickly solved to sufficient accuracy with L-BFGS-B~\citep{byrd1995l-bfgs-b}, a standard method for bound-constrained optimization of differentiable functions.

\secsqueezeBefore
\subsubsection{Multi-task SureMap}
\label{sec:mt-suremap}
\secsqueezeAfter

To generalize SureMap to the multi-task setting we propose to specify $\smash{\@{\hat\theta}}$ and $\hvtaupar$ by minimizing SURE aggregated across tasks, i.e., $\sum_{t=1}^T\hat R_{\@\mu_t}^{\*n_t}(\*y_t)$.
While setting both parameters via direct optimization of this objective is the most straightforward approach, we find that it performs worse than single-task SureMap when there are only a few tasks ($T\le 5$) and rarely improves significantly above the multi-task global and offset estimators.
This can be explained by observing the few-task limit---i.e. $T=1$---in which case optimizing the aggregated SURE objective results in setting $\smash{\@{\hat\theta}=\*y_1}$ and thus makes the multi-task estimator equivalent to the naive estimator.

We find that a better approach is to treat the choice of $\smash{\@{\hat\theta}}$ as its own simultaneous mean estimation problem and apply the SureMap approach to it.
In particular, our model $\*y_t\sim\N(\@\mu_t,\@\Sigma_t)$ and our prior $\@\mu_t\sim\N(\@\theta,\@\Lambda)$ imply that the samples $\*y_t\sim\N(\@\theta,\@\Lambda+\@\Sigma_t)$ have mean $\@\theta$ and known covariances (apart from tuning parameters).
Therefore, the MAP estimator of $\@\theta$ itself given a hyperprior $\@\theta\sim\N(\*0_d,\@\Gamma)$ with covariance $\@\Gamma\succ\*0$ will have the form $\@{\hat\theta}
=\bigParens{\@\Gamma^{-1}+\sum_{t=1}^T(\@\Lambda+\@\Sigma_t)^{-1}}^{-1}\sum_{t=1}^T(\@\Lambda+\@\Sigma_t)^{-1}\*y_t$.

To reduce the number of tuning parameters, we use a prior of the same form as before by specifying $\@\Gamma=\@\Lambda(\vupspar)$ for $\vupspar\in\smash{\R_{\ge0}^{2^k}}$, i.e., the same structured covariance as described in \textsection\ref{sec:prior} but with separately tuned parameters (see \textsection\ref{sec:prior} and \textsection\ref{app:formal}).
Substituting the meta-level MAP estimator of~$\@\theta$ into the MAP estimator of $\@\mu_t$ and tuning the parameters $\vtaupar$ and $\vupspar$ by optimizing the sum of SUREs~(Eq.~\ref{eq:sure}) across tasks yields our {\bf multi-task SureMap estimator} (see \textsection\ref{app:mt} for details):
\begin{align}\label{eq:mt}
		&\@{\hat\mu}_t^\textrm{SM}(\{\*y_t\}_{t=1}^T)
		=\@{\hat\mu}_{\@{\hat\theta}(\hvtaupar\bk,\hvupspar),\@\Lambda(\hvtaupar)}^\textrm{MAP}(\*y_t)
		=\*y_t+\*A_t(\hvtaupar)(\@{\hat\theta}(\hvtaupar\bk,\hvupspar)-\*y_t)
\\
\notag
&\text{for\ \ }
  \@{\hat\theta}(\vtaupar\bk,\vupspar)=\sum_{t=1}^T\*M_t(\vtaupar\bk,\vupspar)\*y_t,
  \qquad
  \*A_t(\vtaupar)=\BigParens{\@\Lambda^{-1}(\vtaupar)+\@\Sigma_t^{-1}}^{\bk-1}\bk\@\Lambda^{-1}(\vtaupar),
\\
\notag
&\hphantom{\text{for\ \ }}
  \*M_t(\vtaupar\bk,\vupspar)=\biggParens{\@\Lambda^{-1}(\vupspar)+\smash{\sum_{s=1}^T}(\@\Lambda(\vtaupar)+\@\Sigma_s)^{-1}}^{\smash{\bk\bk-1}}
  \bk\bk\bk
  (\@\Lambda(\vtaupar)+\@\Sigma_t)^{-1},
\\
\label{eq:mt:argmin}
&\hphantom{\text{for\ \ }}
  \hvtaupar\bk,\hvupspar	=\bk\argmin_{\vtaupar\bk,\vupspar\in\R_{\ge0}^{2^k}}\sum_{t=1}^T\BigBracks{
   \bigNorm{\*A_t(\vtaupar)(\@{\hat\theta}(\vtaupar\bk,\vupspar)-\*y_t)}_{\@\Sigma_t^{-1}}^2
   +2\Tr\bigParens{\*A_t(\vtaupar)(\*M_t(\vtaupar\bk,\vupspar)-\*I_d)}}.
\end{align}
The optimization problem on the last line can again be approximately solved using L-BFGS-B.

\secsqueezeBefore
\subsection{Limitations}\label{sec:limitations}
\secsqueezeAfter

Various modeling assumptions impact performance of SureMap. For instance, the Gaussian error assumption is less appropriate when errors are heavy-tailed. In \textsection\ref{app:evaluation}, Figure~\ref{app:diabetes-mse}, we consider MSE as an example of a target metric with heavy-tailed observation errors. We find that SureMap still performs well, but is no longer superior to previous approaches. One avenue for improvement would be to use a variance-stabilizing transformation (e.g., \citet{hawkins1986note}) prior to applying SureMap.\looseness=-1

Note that SureMap achieves its improved accuracy by shrinking naive estimates towards a less granular estimator (e.g., a pooled mean). As a result the estimation is biased towards \emph{less disparity}, which could lead to overly optimistic conclusions about fairness. For this reason it is extremely important to examine not just the point estimates, but also confidence intervals. These can be obtained, for example, by viewing SureMap as a regression approach
(\textsection\ref{app:structreg}) and leveraging inference techniques for regression.\looseness=-1

\secsqueezeBefore
\section{Datasets}\label{sec:datasets}
\secsqueezeAfter

We evaluate our approach in several representative settings for disaggregated evaluation,
including two tabular settings
appearing in previous works~\citep{miller2021model,herlihy2024structured,liu2024confronting}, and three new settings:
a multi-task tabular setting based on state-level U.S.\ census data
and both a single-task and a multi-task ASR evaluation setting.

\secsqueezeBefore
\subsection{Tabular datasets}\label{sec:tab}
\secsqueezeAfter

We consider three tabular datasets, two for the single-task and one for the multi-task setting, covering two important domains where fairness concerns can arise:
healthcare records and demographic data.
While we focus on the classification task and 0-1 error, in \textsection\ref{app:evaluation} we also report results for regression (Figures~\ref{app:diabetes} \&~\ref{app:diabetes-mse}) and
for classification with AUC as the target metric (Figures~\ref{app:diabetes-auc} \&~\ref{app:slacs-auc}).\looseness-1

\textbf{Diabetes.}
This is a tabular dataset of \citet{strack2014impact}, containing around 100K patient records with six race, two sex, and three age categories.
We evaluate a logistic regression classifier trained to predict patient disposition after a hospital stay (discharged or otherwise). The target metric is the 0-1 error.\looseness-1

\textbf{Adult.}
We use the classic \emph{Adult} census dataset~\citep{kohavi1996adult} to evaluate
performance of an in-context LLM learner---specifically \texttt{llama-3-70b}---in predicting
whether a person makes more or less than \$50K after being provided with eight examples via a modification of the prompt template of \citet{liu2024confronting}. The target metric is the 0-1
error, disaggregation is by race, sex, and age.\looseness-1

\textbf{State-Level ACS~(SLACS).}
This is a tabular dataset for multi-task setting derived from the census data for all U.S. states and Puerto Rico assembled by \citet{ding2021retiring}.
Each datapoint corresponds to a person in one of nine race and two sex categories;
we consider three age categories: below 25, 25--64, and over 64.
The underlying task is to classify each person as earning either more or less than \$50K.
We train a regularized logistic model on the data from California, and seek to evaluate
its performance on the other 50 states/territories, which comprise the tasks. The target metric is the 0-1 error.\looseness-1

\secsqueezeBefore
\subsection{ASR datasets}\label{sec:asr}
\secsqueezeAfter

We also introduce both single-task and multi-task {\em speech recognition} datasets,
based on applying the popular \emph{Whisper} ASR model~\citep{radford2023robust}---specifically \texttt{whisper-tiny}---on the English part of the \emph{Common Voice} (CV) dataset~\citep{ardila2020common}, which contains utterances from individuals in one of nine age and three sex categories.

\textbf{Common Voice.}
This is a single-task dataset obtained by combining the validation and test partitions of the CV dataset.
We calculate the word-error rate~(WER) across all the utterances of each individual, which becomes the target metric to be predicted.

\textbf{Common Voice Clusters~(CVC).}
This is a multi-task ASR dataset. To construct it, we first cluster the utterances in the train partition of the CV dataset
into 20 clusters by applying $k$-means to the sums of GloVe word embeddings~\citep{pennington2014glove} of their corresponding text strings.
To model task relatedness, we then randomly reassign each utterance to a random cluster with probability $\alpha\in[0,1]$. The resulting clusters are the tasks. The target metric is the word-error rate (WER) across all the utterances of each individual in a given cluster.
In most experiments we use $\alpha=\frac12$, but we also investigate what happens when $\alpha$ varies between zero, i.e., the original clusters, and one, corresponding to identically distributed tasks.\looseness-1

\secsqueezeBefore
\section{Evaluation}\label{sec:evaluation}
\secsqueezeAfter

\begin{figure}[!t]
	\centering
	\includegraphics[width=.325\linewidth]{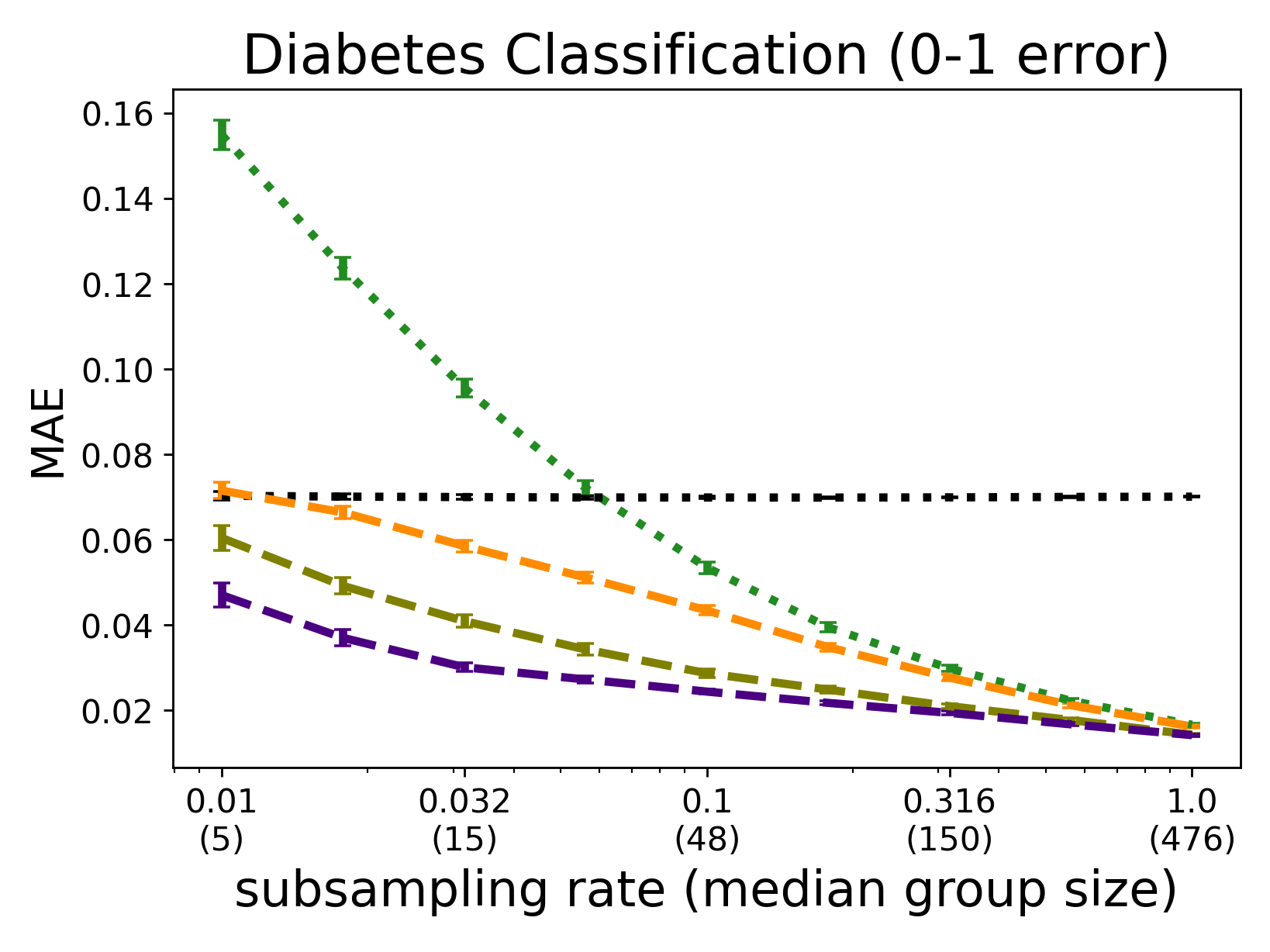}
	\includegraphics[width=.325\linewidth]{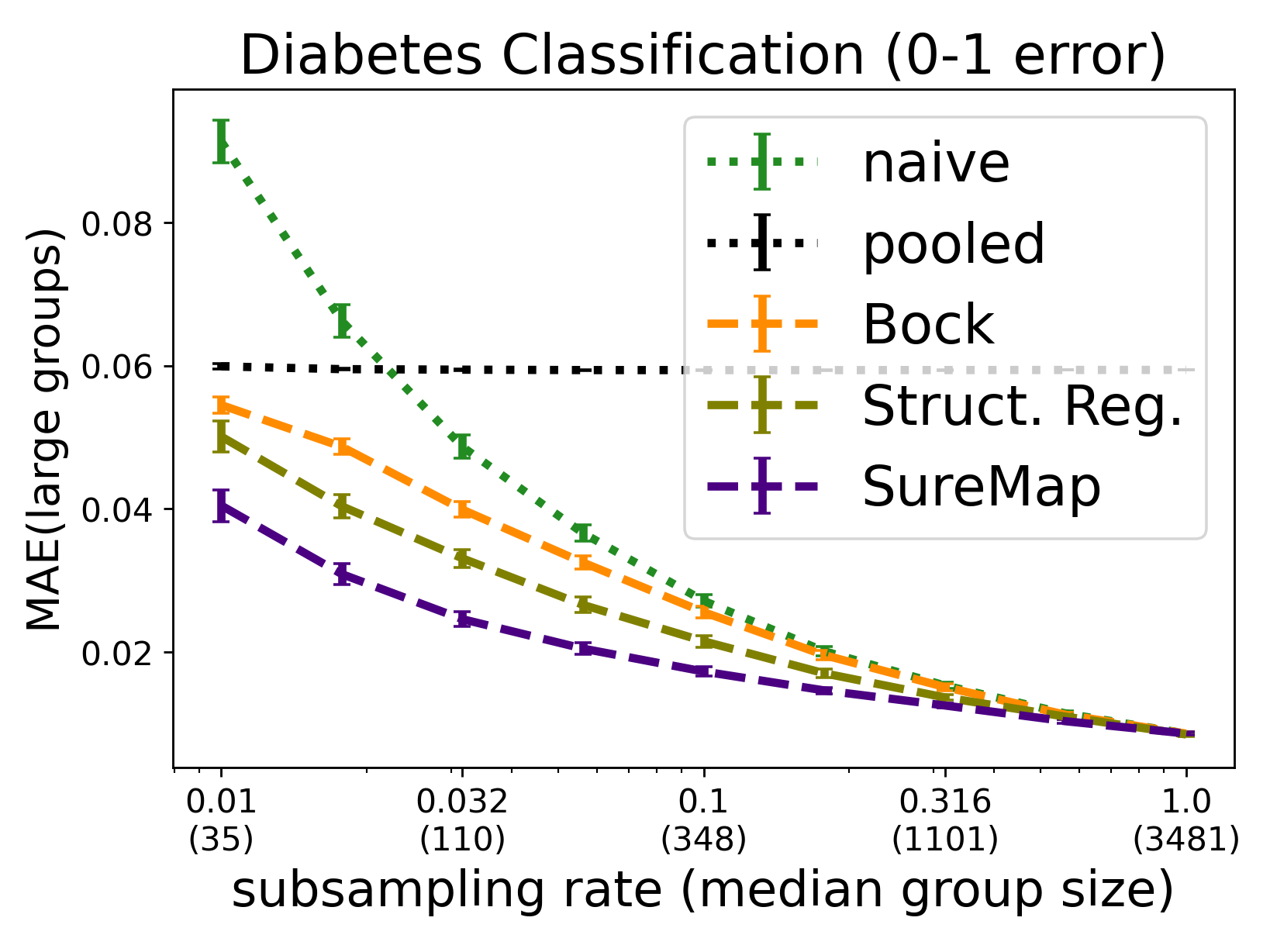}
	\includegraphics[width=.325\linewidth]{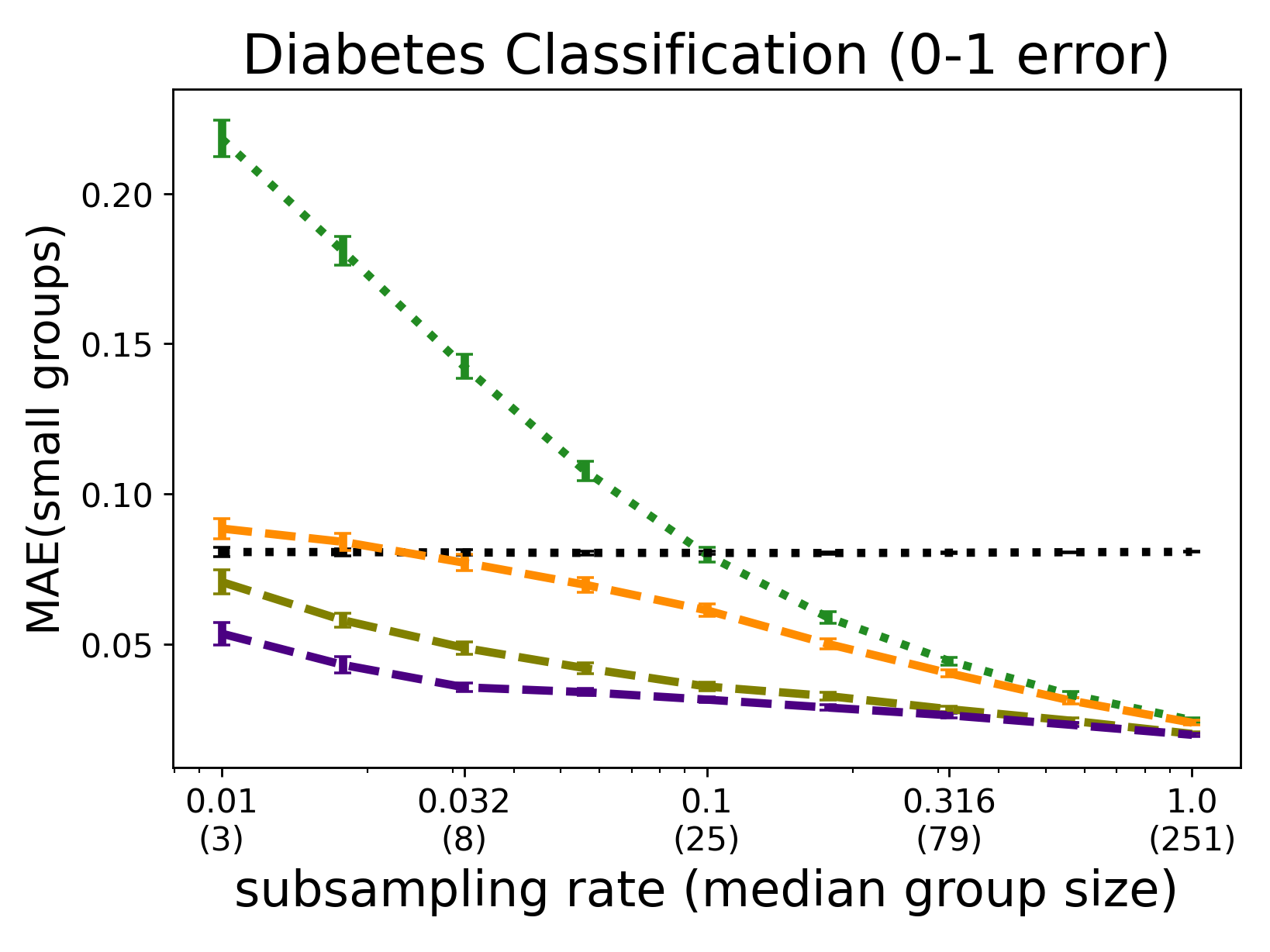}\vspace{-3.5mm}
	\includegraphics[width=.325\linewidth]{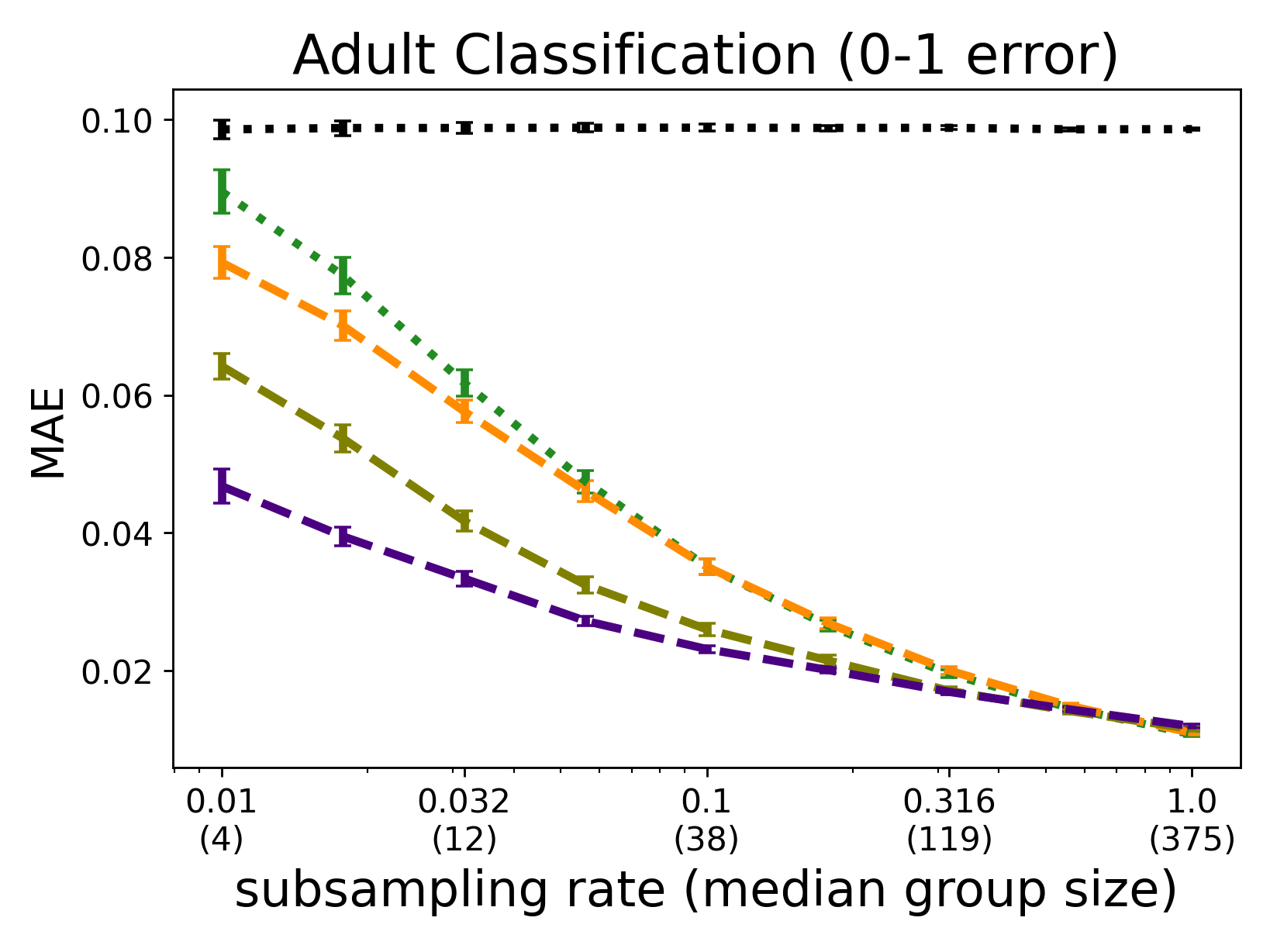}
	\includegraphics[width=.325\linewidth]{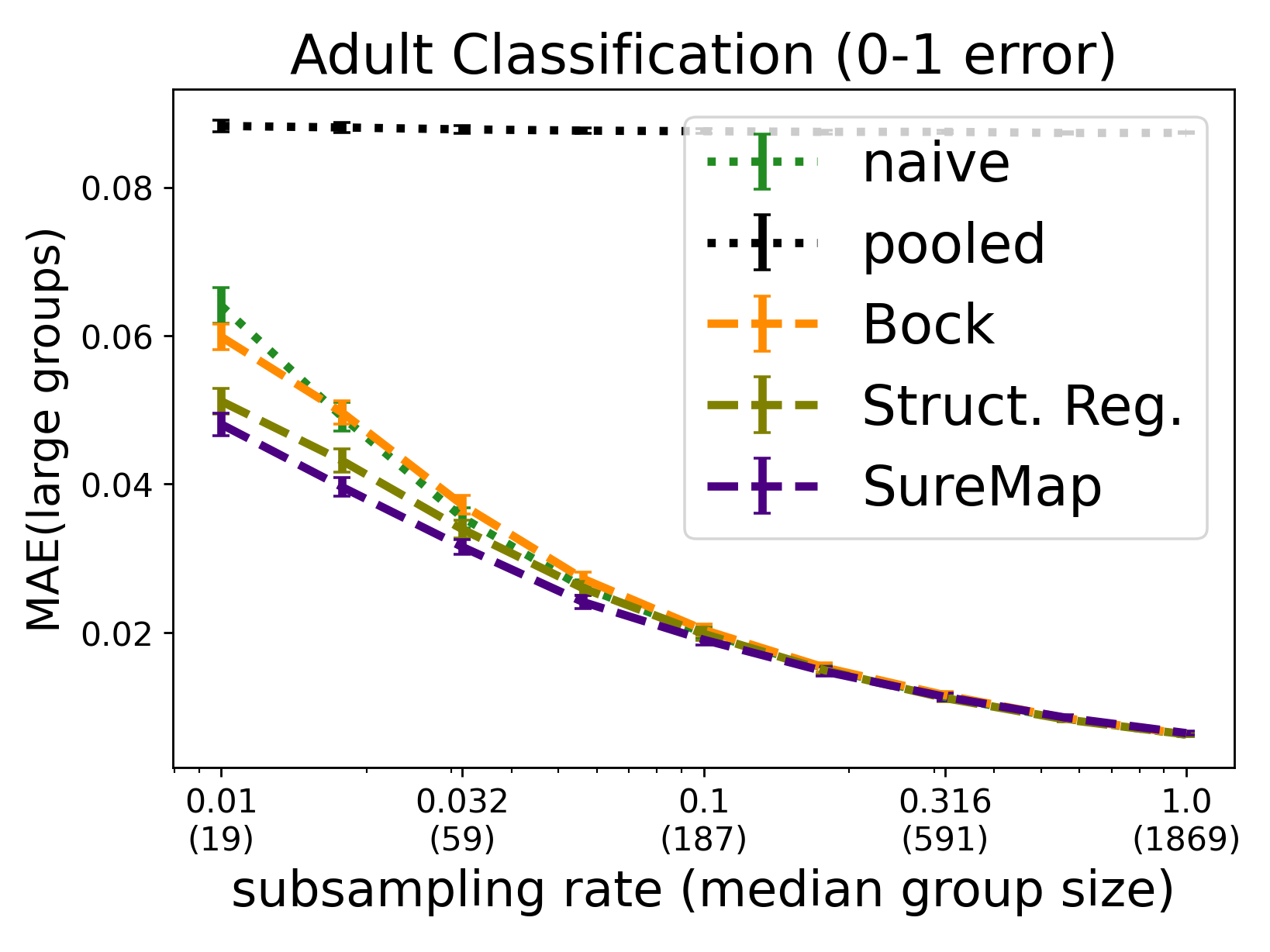}
	\includegraphics[width=.325\linewidth]{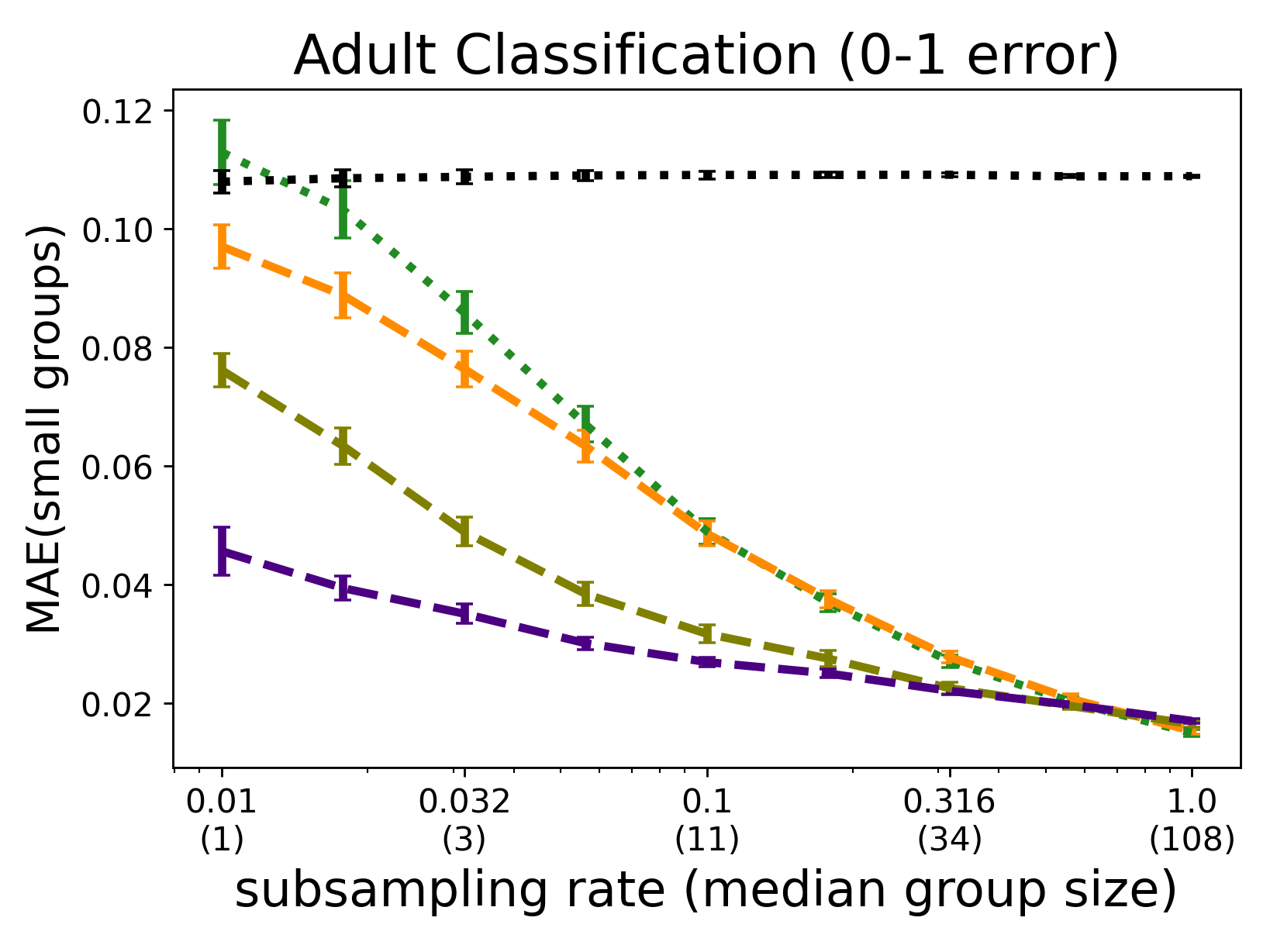}\vspace{-3.5mm}
	\includegraphics[width=.325\linewidth]{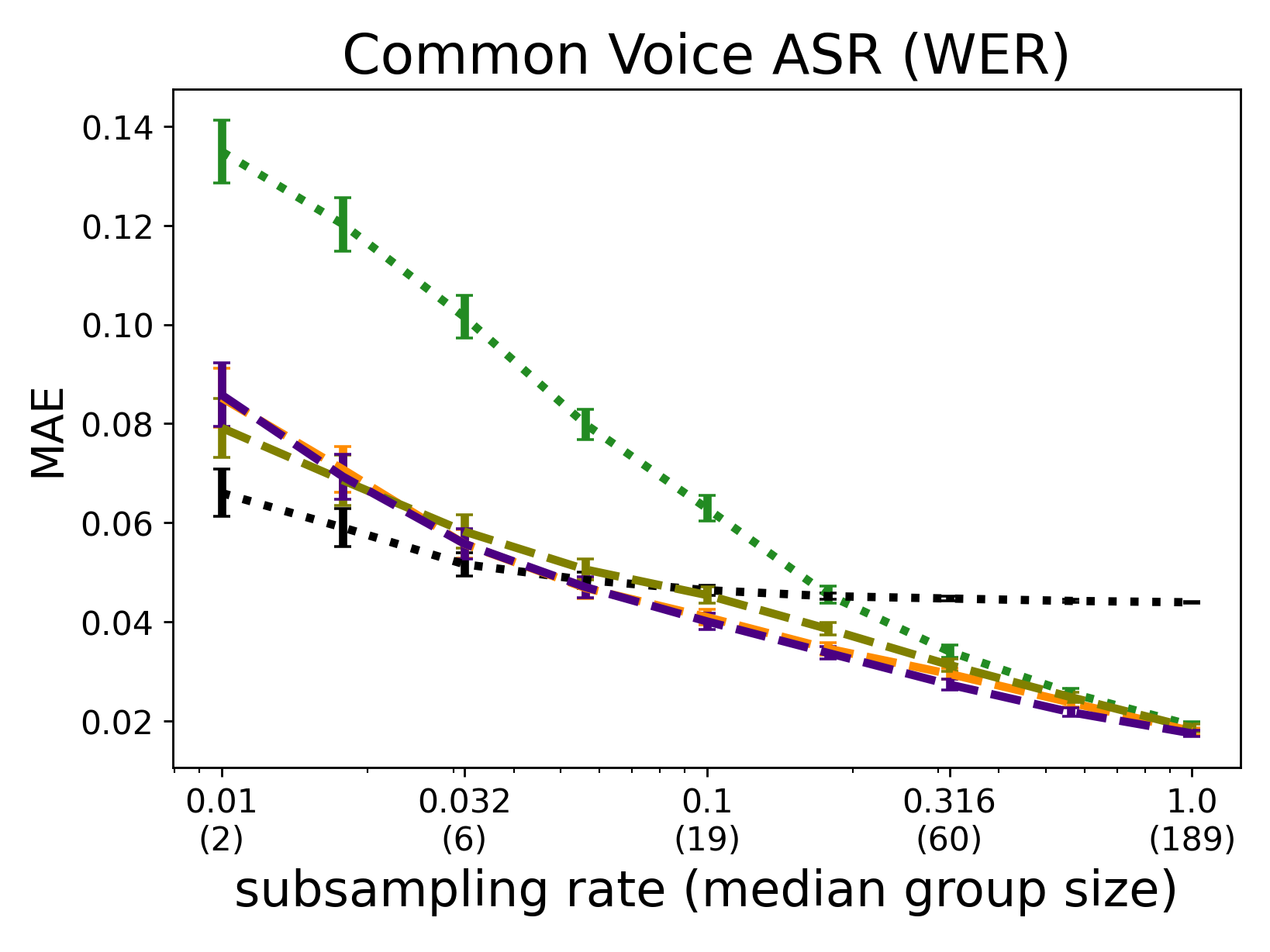}
	\includegraphics[width=.325\linewidth]{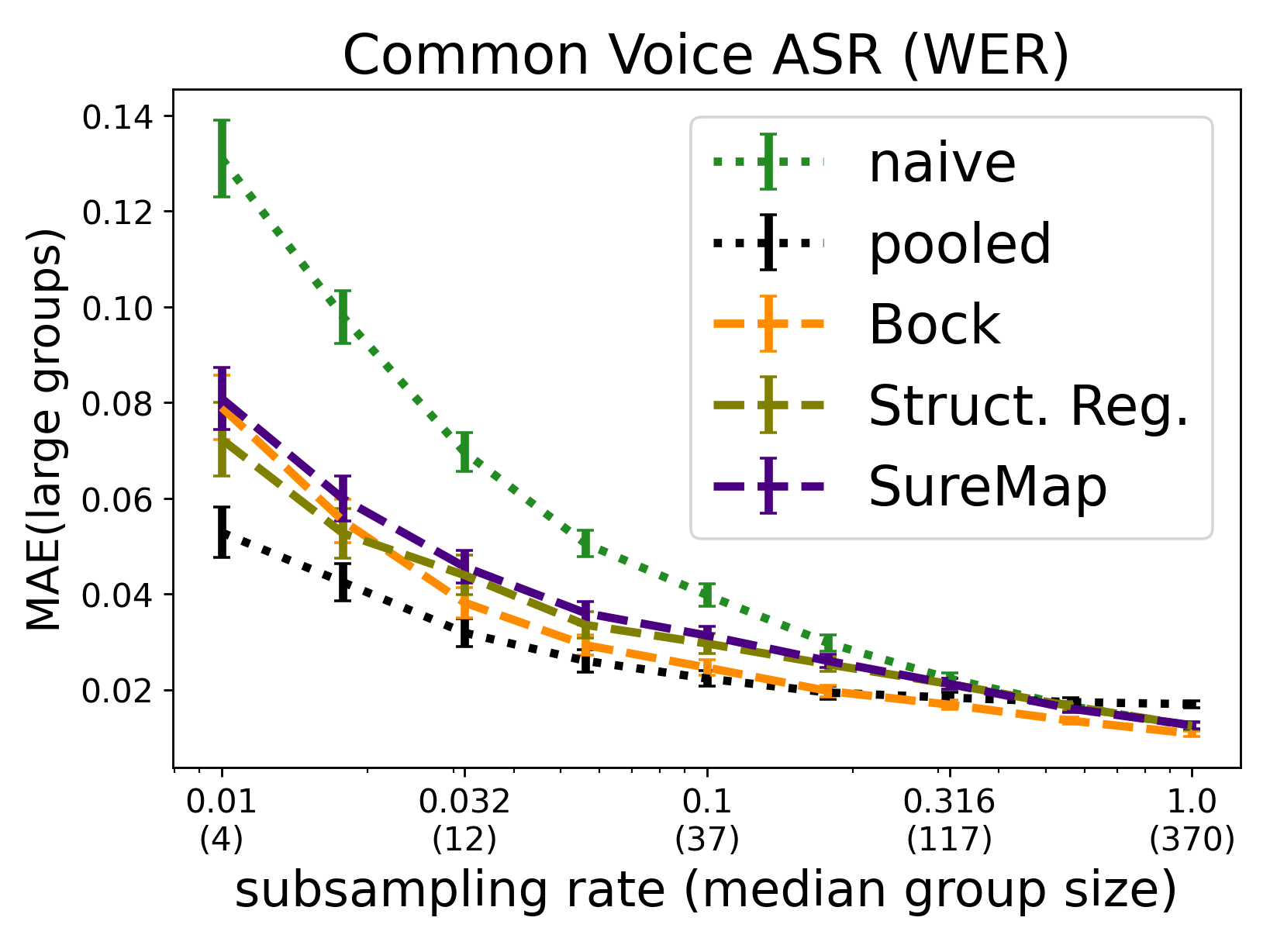}
	\includegraphics[width=.325\linewidth]{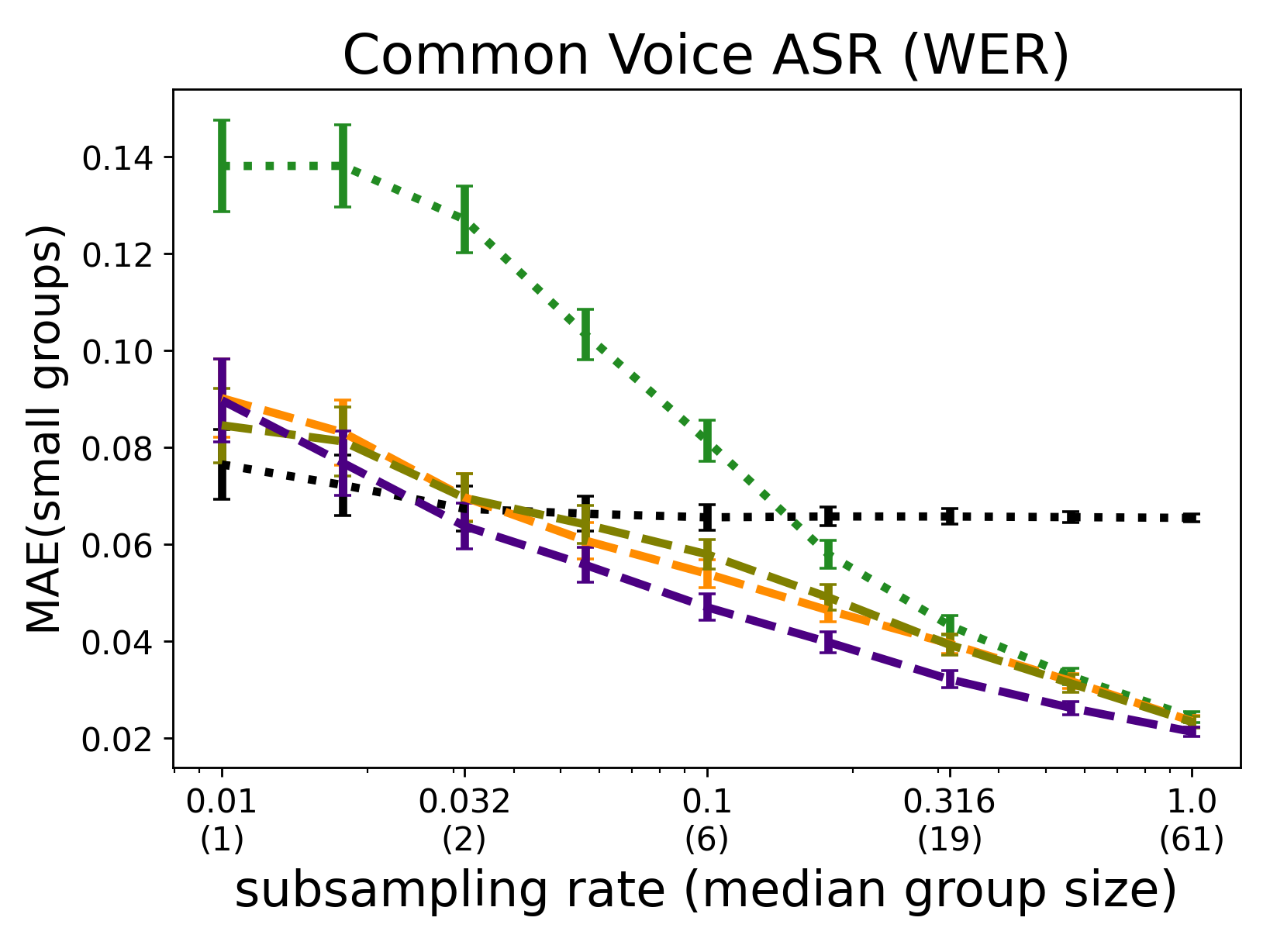}\vspace{-3mm}
	\caption{\label{fig:single}
		Single-task evaluations on Diabetes~(top, disaggregating by race, sex, and age), Adult~(middle, disaggregating by race, sex, and age), and Common Voice~(bottom, disaggregating by sex and age).
		The MAE is averaged across all groups (left), large groups (center), or small groups (right). Large and small groups are defined as above and below median group size.
	}
\figsqueezeAfter
\end{figure}

\begin{figure}[!t]
	\centering
	\includegraphics[width=.450\linewidth]{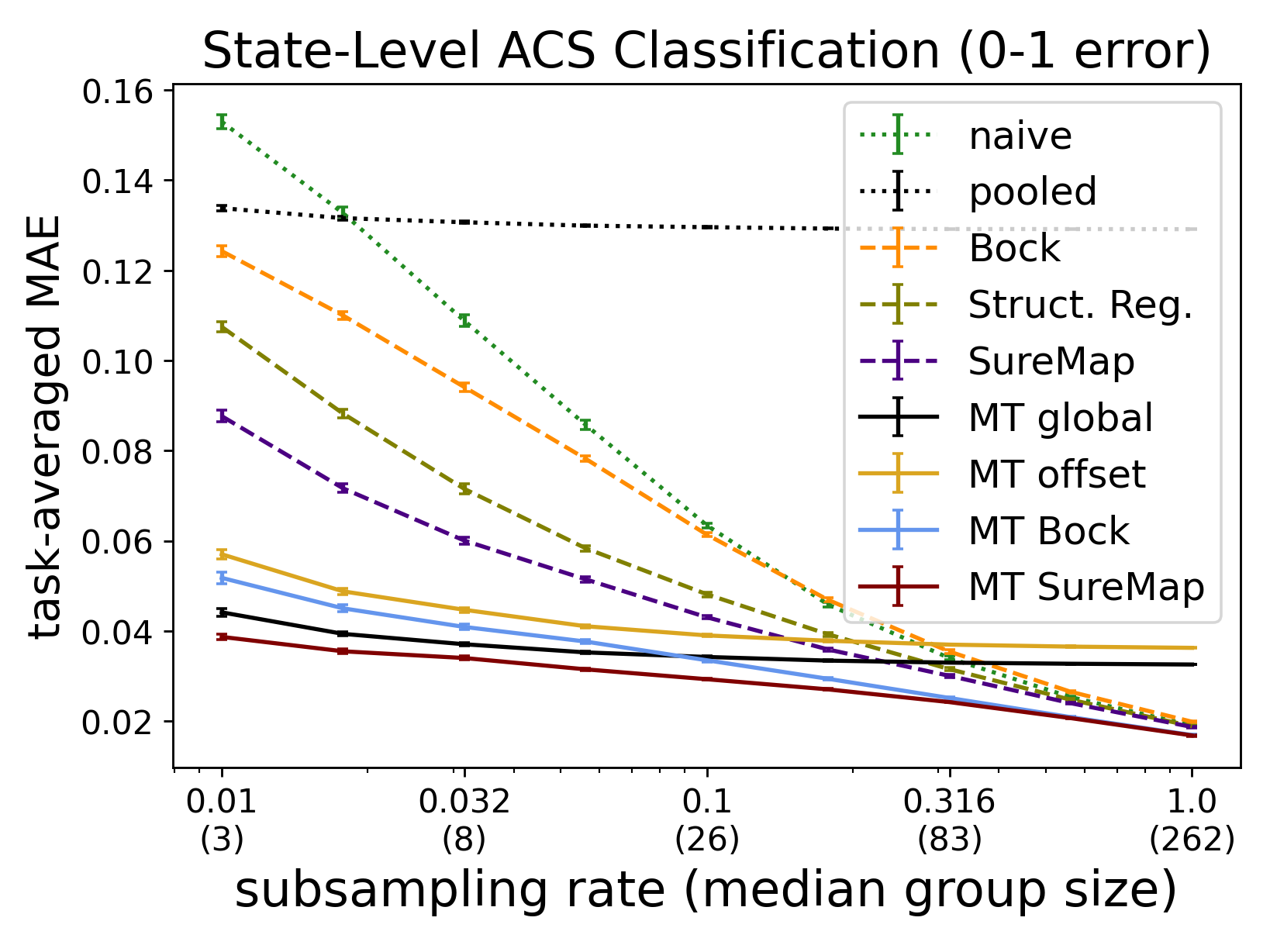}
	\hfill
	\includegraphics[width=.450\linewidth]{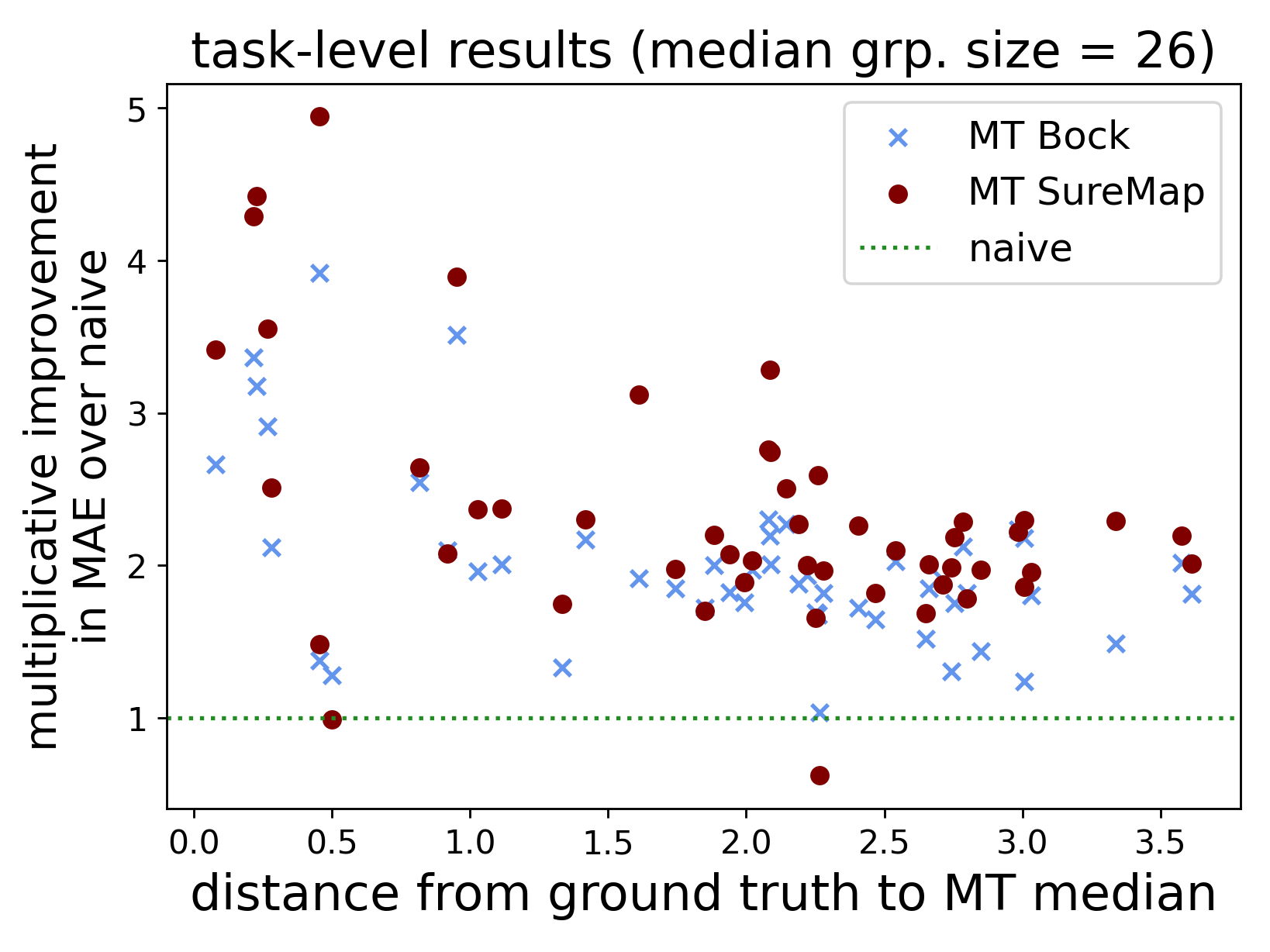}\vspace{-5mm}
    \includegraphics[width=.450\linewidth]{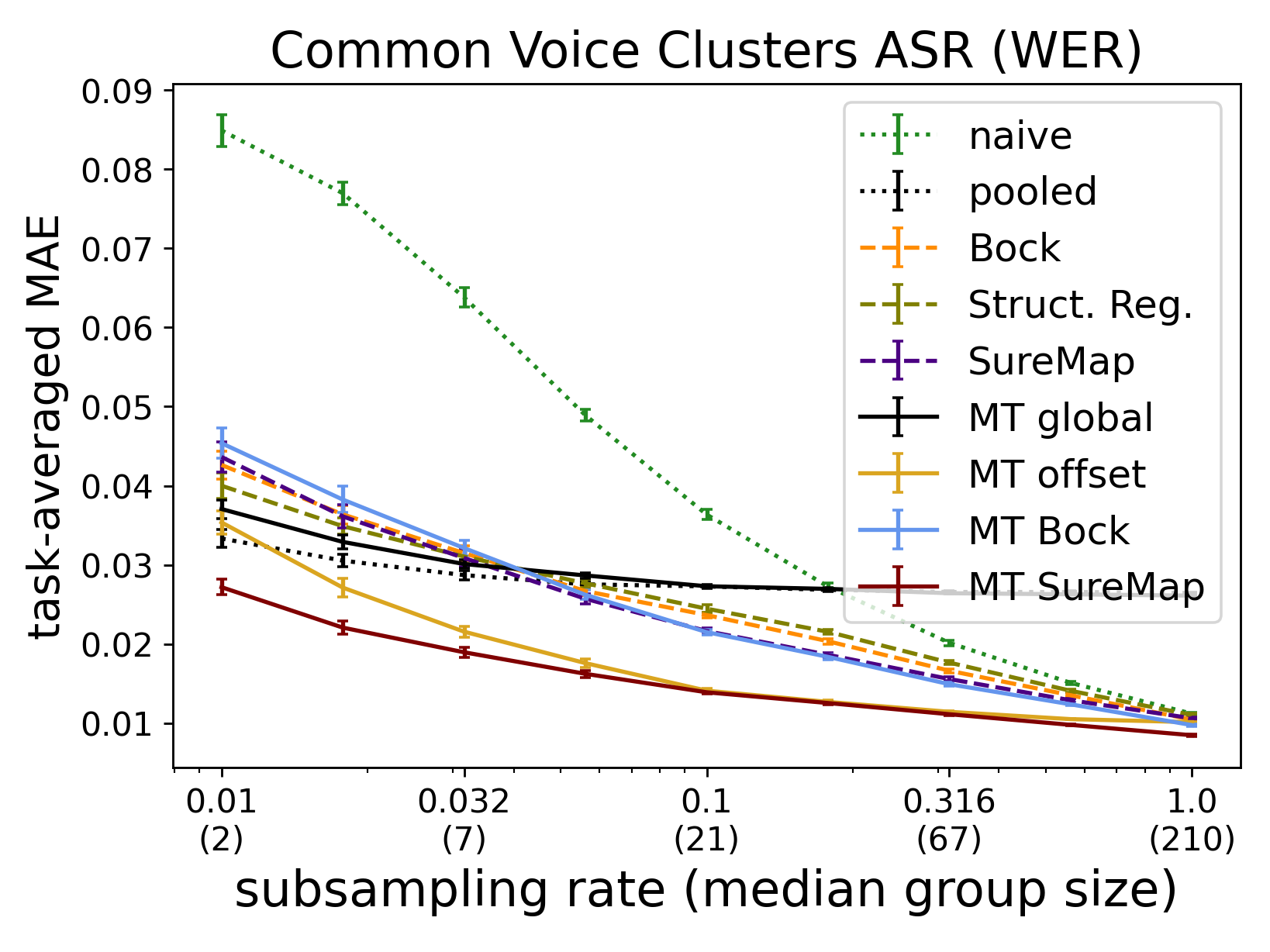}
	\hfill
	\includegraphics[width=.450\linewidth]{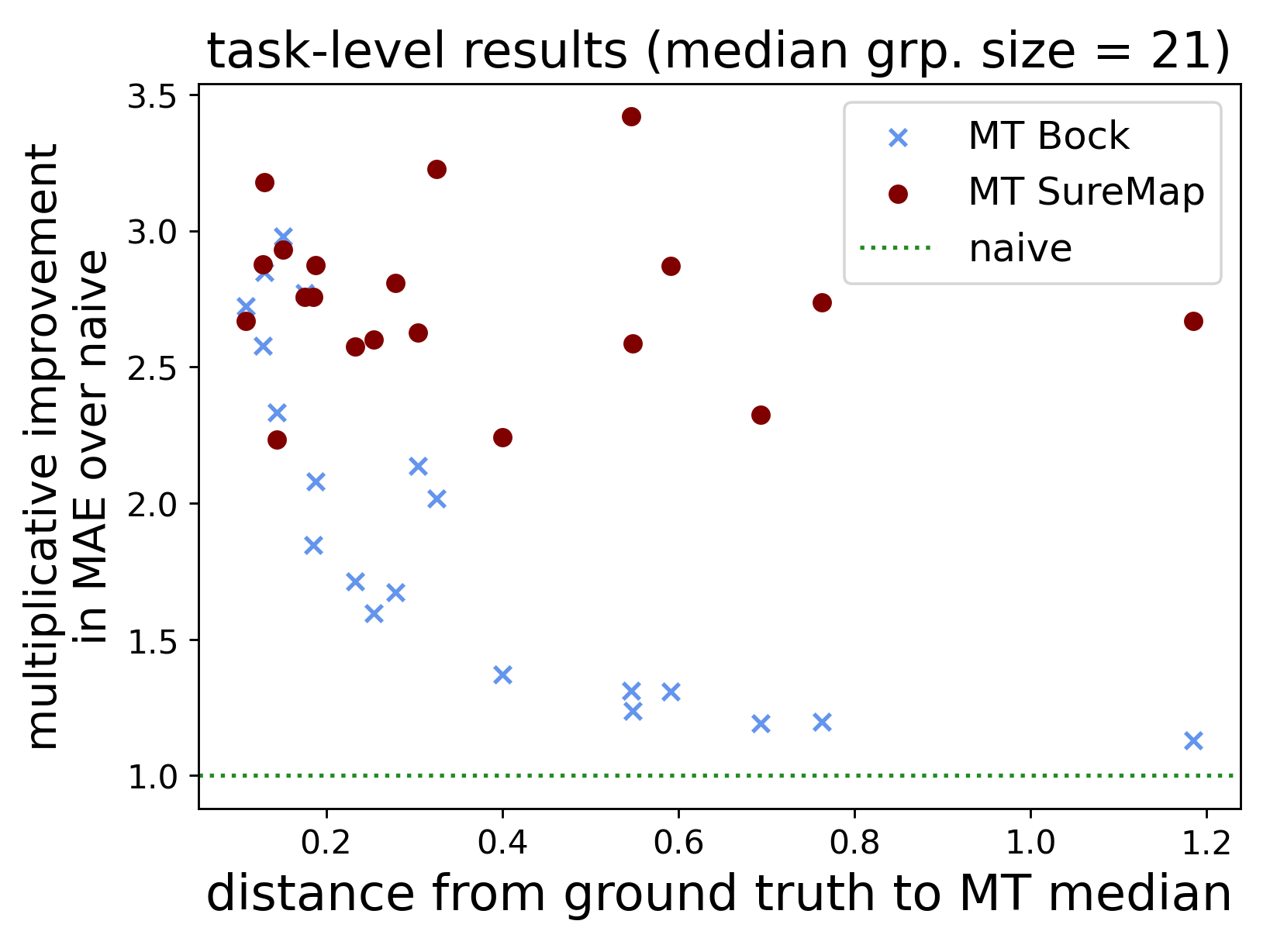}\vspace{-3mm}
	\caption{\label{fig:multi}
		Multi-task evaluations on SLACS (top, disaggregating by race, sex, and age) and CVC (bottom, disaggregating by sex and age).
        \emph{Left:} Performance across different subsampling rates. \emph{Right:} Multiplicative improvement in MAE over naive estimator on individual tasks; subsampling rate=0.1.\looseness-1
	}
\figsqueezeAfter
\end{figure}

Our main metric is MAE relative to a ground truth vector, which we take to be the mean of all available data for each subpopulation $g\in\!d$, except those with fewer than 40 samples.
In our main results we subsample with replacement from the entire dataset at different rates and track performance as a function of the sizes of the resulting datasets.
To obtain 95\% confidence intervals we conduct 200 and 40 random trials at each subsampling rate in the single-task and multi-task settings, respectively.\looseness-1

\secsqueezeBefore
\subsection{Single-task}
\secsqueezeAfter

We compare SureMap to the naive~(Eq.~\ref{eq:naive}) and pooled~(Eq.~\ref{eq:pooled}) baselines, as well as to the Bock estimator with shrinkage towards the pooled estimator~(Eq.~\ref{eq:single-bock}) and the structured regression estimator of~\citet{herlihy2024structured}.
On both Diabetes and Adult, SureMap significantly outperforms all competitors (Figure~\ref{fig:single}, top \& middle), the greatest improvement is on subpopulations with limited data.
In \textsection\ref{app:evaluation}, we consider a regression variant of Diabetes and observe similar results (Figure~\ref{app:diabetes}).
On the Common Voice task, SureMap performs roughly similarly to Bock, while outperforming structured regression at some subsampling rates (Figure~\ref{fig:single}, bottom);
here again the gains are driven by better performance on small groups.
Pooling performs best when data is extremely limited, but it is not competitive with even modestly more data, and also underperforms on small groups.\looseness-1

\secsqueezeBefore
\subsection{Multi-task}
\secsqueezeAfter

In the multi-task setting, we use all the single-task methods as baselines while adding multi-task~(MT) ones, including MT global~(Eq.~\ref{eq:mt-pooling}), MT offset~(Eq.~\ref{eq:mt-offset}), and an MT extension of Bock~(Eq.~\ref{eq:bock}) in which $\theta_g$ is set using the average across the group $g$ data on all {\em other} tasks.
We first consider the SLACS task, for which Figure~\ref{fig:multi}~(top left) shows that MT SureMap significantly outperforms other methods in the low-data regime while matching the best one~(MT Bock) in the high-data regime.
At subsampling rate 0.1, Figure~\ref{fig:multi}~(top right) also shows that using multi-task data leads to improvement on all but two of the fifty tasks, that the reduction in MAE over the naive estimator on a typical task is 2x, and that this improvement only loosely correlates with the task's ground truth distance from the multi-task median.
On the other hand, it also shows that while MT Bock's improvements are typically smaller, on SLACS it improves performance for {\em every} state (including the two where MT SureMap is worse).

On the CVC task, the MT offset baseline is the most competitive, except at the lowest subsampling rate where pooling is better and at higher subsampling rates where it stops improving with additional data (Figure~\ref{fig:multi}, bottom left).
SureMap outperforms it and all other methods across all subsampling rates and its advantage is greatest in low-data regimes, where it even outperforms pooling.
In the task-level evaluation in Figure~\ref{fig:multi}~(bottom right) we see that on every task, MT SureMap attains an improvement of 2--3.5x over the naive baseline {\em and} almost always outperforms MT Bock.
Furthermore, the latter performs substantially worse on tasks whose ground truth vectors are far away from the multi-task center while MT SureMap is not affected.

\secsqueezeBefore
\subsection{Ablations}\label{sec:ablations}
\secsqueezeAfter

\begin{figure}[!t]
	\centering
	\includegraphics[width=.450\linewidth]{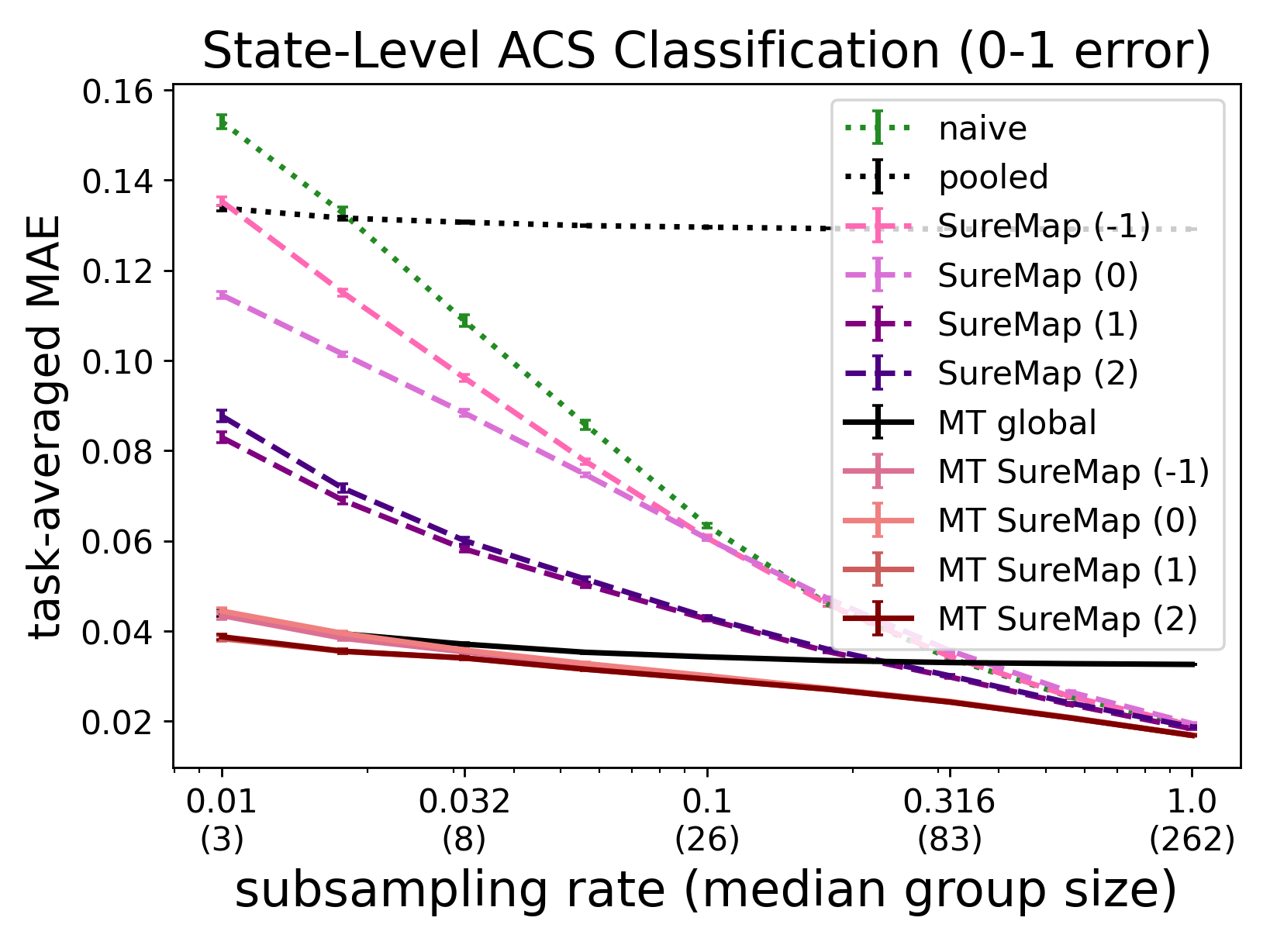}
	\hfill
	\includegraphics[width=.450\linewidth]{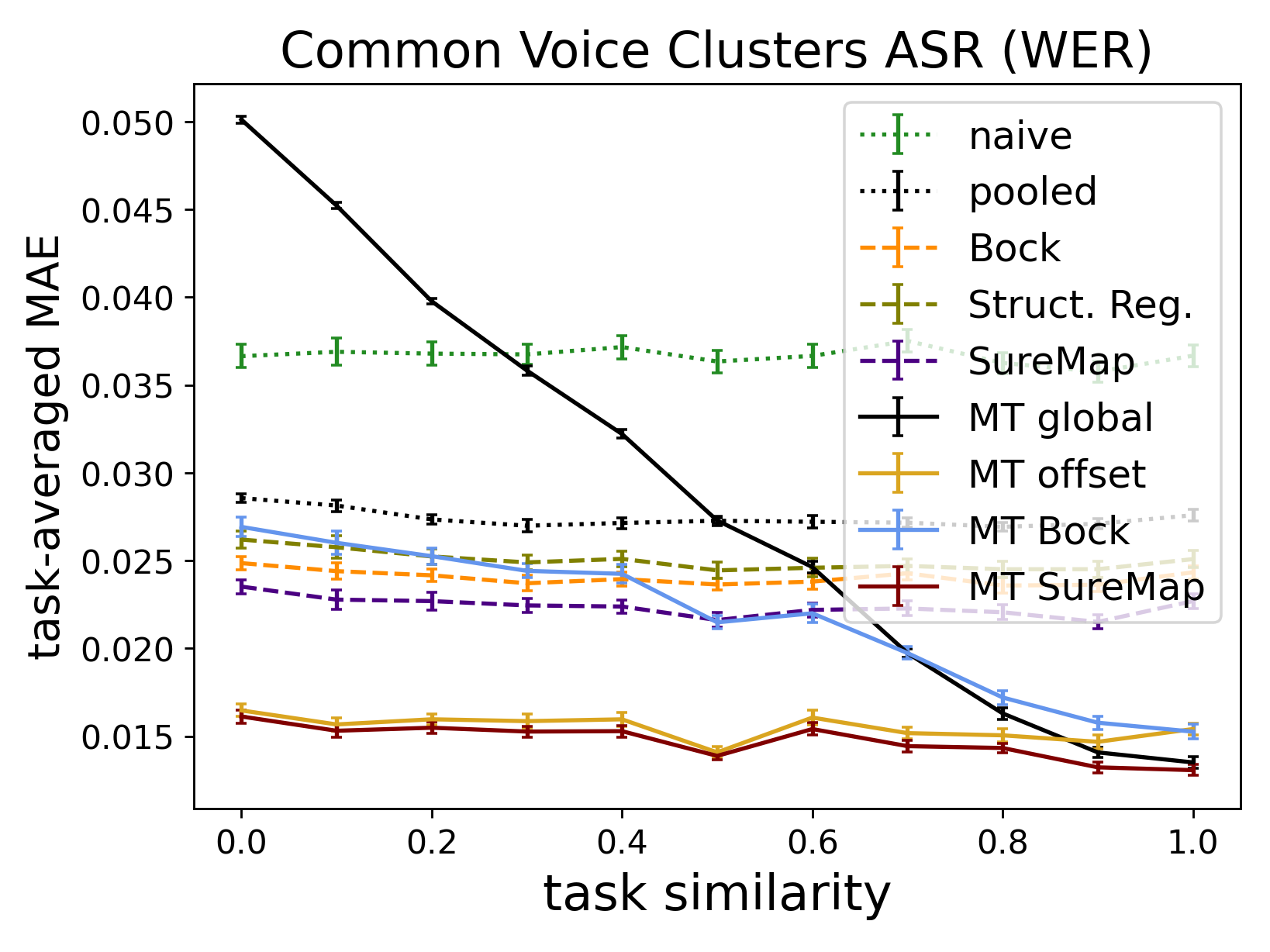}\vspace{-3mm}
	\caption{\label{fig:ablation-similarity}
		\emph{Left:} Comparison of SureMap variants on SLACS. The SureMap ($\ell$) variant sets to zero the entries of~$\vtaupar$ corresponding to interactions of size $>\ell$ (except for the highest-order interactions).
		\emph{Right:} Evaluation of different methods as the interpolation coefficient that defines the CVC tasks is varied.\looseness=-1
	}\vspace{-3mm}
\end{figure}

\begin{figure}[!t]
	\centering
	\includegraphics[width=.450\linewidth]{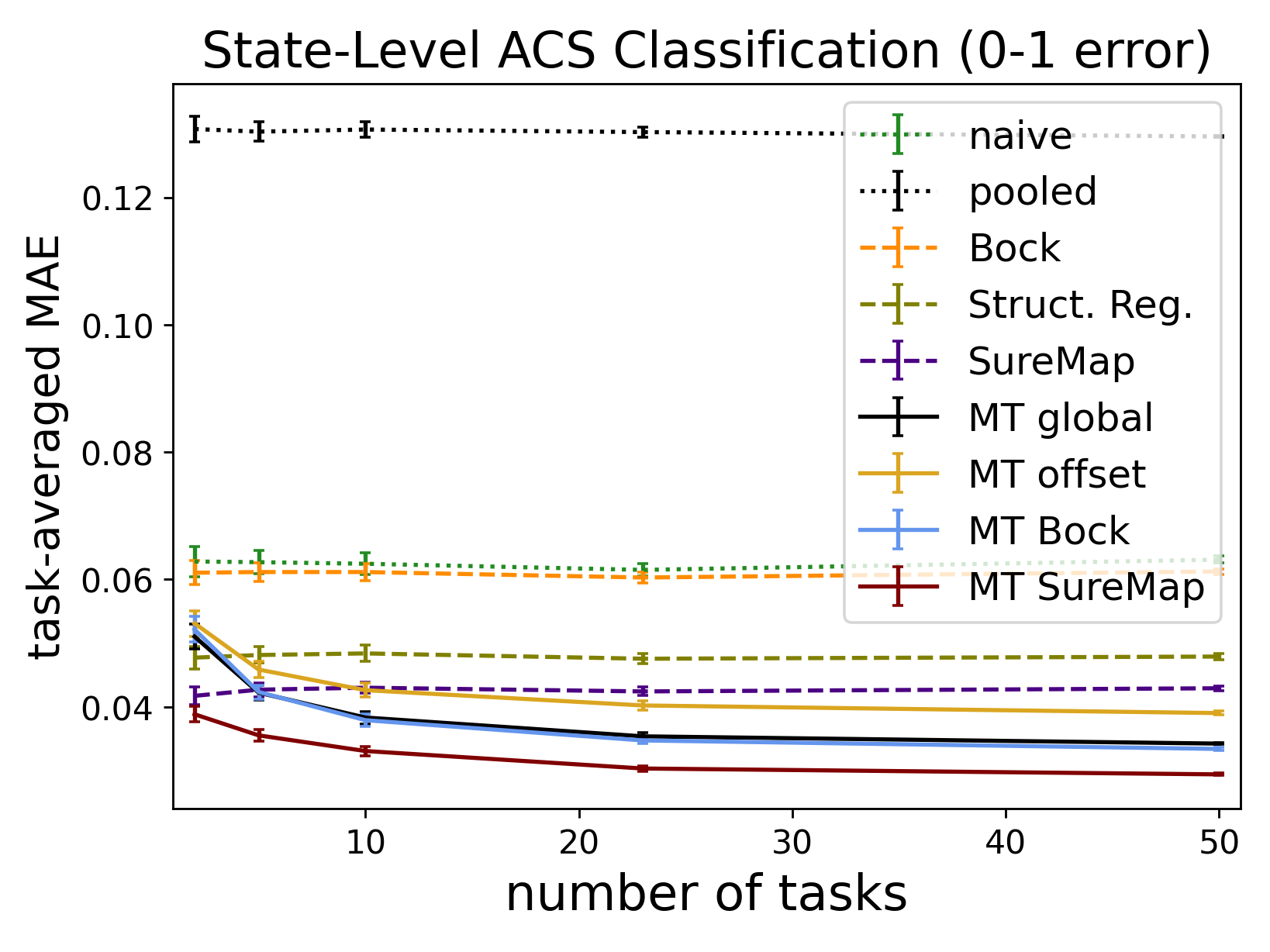}
	\hfill
	\includegraphics[width=.450\linewidth]{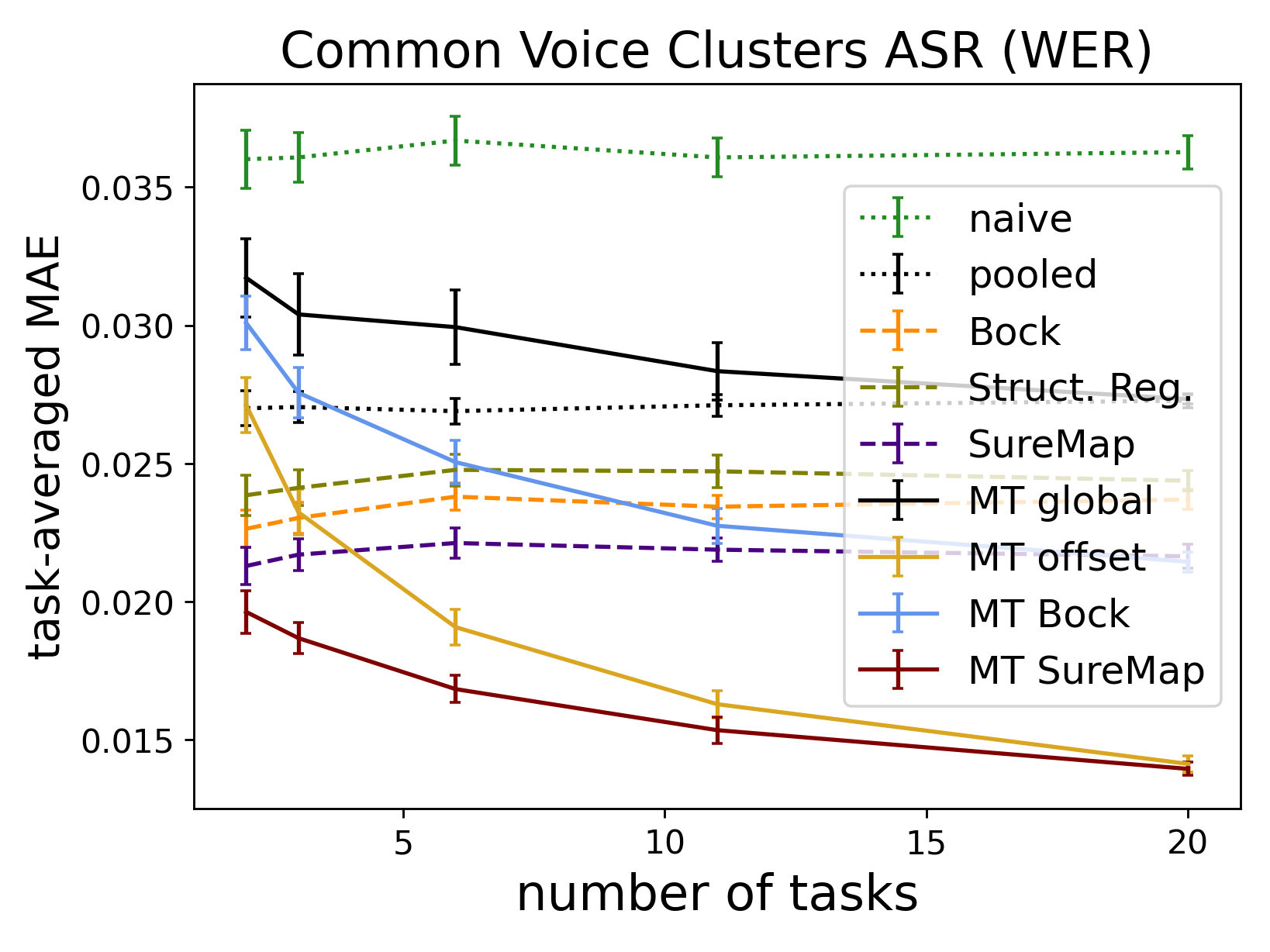}\vspace{-3mm}
	\caption{\label{fig:tasks}
		Performance as the number of tasks  varies, evaluated on SLACS~(left) and CVC~(right).\looseness-1
	}
\figsqueezeAfter
\end{figure}

We next look at how the degree of included intersectional effects and multi-task structure affect performance.
In Figure~\ref{fig:ablation-similarity}~(left), we evaluate utility of including higher-order interactions in the structured prior. We implement SureMap variants with up to $\ell$th-order interactions by setting $\taupar_A$ for $\card{A}\in\set{\ell+1,\dotsc,k-1}$ to zero (but not for $\card{A}=k$).
We observe that including zeroth-order effects ($\ell=0$) in single-task SureMap (i.e., shrinkage to pooling) improves upon the usual single-parameter Gaussian prior ($\ell=-1$) in low-data regimes.
Adding first-order effects ($\ell=1$) leads to substantial further improvement, but second-order effects~($\ell=2$) make performance slightly worse.
A similar effect can be observed among the multi-task variants, except there is no loss (but also no gain) to using the highest-order variant~($\ell=2$).
Overall, this study suggests that using the full-order SureMap is a reasonable default but that most of the method's effectiveness comes from zeroth- and first-order effects.\looseness-1

Figure~\ref{fig:ablation-similarity} (right) tracks  performance as the task-similarity parameter defining the CVC task is varied.
MT SureMap outperforms all methods at all settings and is also not as strongly affected by the task similarity, at least as it is defined for the CVC data.
This suggests that the structured prior we use may be useful even if the dataset means are quite different but the underlying evaluation problem (in this case estimating WER of ASR models) is the same.

Lastly, in Figure~\ref{fig:tasks} we study how the number of tasks affects multi-task performance.
On both SLACS and CVC, MT SureMap outperforms all single-task baselines (the best one being the single-task SureMap) at $T=2$ tasks, i.e., it can take advantage of even very little external information.
In contrast, on CVC, the competitor multi-task methods (e.g., MT offset and MT Bock) do not even outperform single-task methods until $T\ge5$ tasks.
These results demonstrate that, unlike these comparators, multi-task SureMap can be confidently used even when only one additional client's worth of data is available.\looseness-1

\secsqueezeBefore
\section{Conclusion}\label{sec:conclusion}
\secsqueezeAfter

We have introduced SureMap, a disaggregated evaluation approach, which combines MAP estimation under a structured Gaussian prior with hyperparameter tuning via SURE.
SureMap achieves substantial empirical improvements over strong baselines in both single-task
and multi-task settings.
Valuable future directions include improving robustness to heavy-tailed data and
developing multi-task methods that can handle client privacy concerns.
More broadly, we hope our work will have a positive impact by allowing model users to more accurately identify fairness-related harms, the first step towards mitigating them.
However, using SureMap and any other disaggregated evaluation approach must be done with care, so as to not risk overconfidence in a model's fairness.

\section*{Acknowledgments}
We thank Nicolas Le Roux for  discussions and feedback at the beginning of this effort.

\bibliography{refs}
\bibliographystyle{plainnat}

\appendix

\newpage
\section{Notation}\label{app:notation}

\begin{itemize}[leftmargin=*,topsep=-1mm,noitemsep]
	\item For $p\in[1,\infty]$ we use $\lVert\cdot\rVert_p$ to denote the $p$-norm on $\R^d$.
	\item We use $\lVert\cdot\rVert$ to denote the spectral norm on $\R^{m\times n}$.
	\item For any positive semi-definite matrix $\*A\succeq\*0_{d\times d}$ we use the notation $\lVert\cdot\rVert_\*A:\R^d\to\R_{\ge0}$ to denote the vector norm $\lVert\*x\rVert_\*A=\sqrt{\langle\*x,\*A\*x\rangle}=\|\sqrt{\*A}\*x\|_2$ on  $\*x\in\R^d$.
	\item For any positive integer $k$ we use $\!k$ to denote the set $\{1,\dots,k\}$.
	\item For any set $S$ we use $2^S$ to denote its powerset.
	\item For any vector $\*a\in\R^d$ we use $a_i$ to denote its $i$th entry.
	If $\*a$ is only defined as an expression then we will abuse notation and use $(\*a)_i$ to refer to its $i$th entry.
	\item For any vector $\*a\in\R^k$ and any subset $S\subset\!k$ with elements $s_1<\cdots<s_m$ we define $\*a_S=\begin{pmatrix}a_{s_1}&\cdots&a_{s_m}\end{pmatrix}$ to be the vector of the entries of $\*a$ whose indices correspond to the elements of $S$ sorted in ascending order.
	\item For any subset $S\subset\!k$ we assume $\bigtimes_{s\in S}$ iterates over the elements in ascending order.
	\item For any matrix $\*A\in\R^{m\times n}$ we use $A_{i,j}$ to denote its $(i,j)$th entry and $\*A_{i,:}$ its $i$th row.
	If $\*A$ is only defined as an expression then we will abuse notation and use $(\*A)_{i,j}$ to refer to its $(i,j)$th entry.
	\item For any $k$-tensor $\*Z\in\R^{\bigtimes_{a=1}^kd_a}$ with dimensions $d_1,\dots,d_k\in\mathbb Z_{>0}$ and any vector $\*c\in\bigtimes_{a=1}^k\!{d_a}$ of indices we use $Z_\*c$ to refer the the $(c_1,\dots,c_k)$th entry of $\*Z$.
	\item We use $\*0_m,\*1_m\in\R^m$ and $\*0_{m\times n},\*1_{m\times n}\in\R^{m\times n}$ to refer to all-zero and all-one vectors and matrices, and $\*I_n$ to refer the $n\times n$ identity matrix.
\end{itemize}

\section{Heteroskedastic James--Stein-type shrinkage estimation}\label{app:sure}

For reference we state a weighted version of Stein's unbiased risk estimate (SURE)~\citep{stein1981estimation}:

\begin{Lem}\label{lem:sure}
	Suppose $\*y\sim\N(\@\mu,\@\Sigma)$ for mean $\@\mu\in\R^d$ and diagonal p.s.d. covariance $\@\Sigma\in\R^{d\times d}$, and consider a function $\@{\hat\mu}:\R^d\to\R^d$ s.t. for every $i\in\!d$ the function $\hat\mu_i$ is almost differentiable in $y_i$ and we have $\E\|\@\nabla_\*y\cdot\@{\hat\mu}(\*y)\|_1<\infty$.
	Then for any diagonal matrix $\*W\in\R^{d\times d}$ we have $\E\|\@{\hat\mu}(\*y)-\@\mu\|_\*W^2
	=\E\|\@{\hat\mu}(\*y)-\*y\|_\*W^2-\Tr(\*W\@\Sigma)+2\E[\@\nabla_\*y\cdot(\*W\@\Sigma\@{\hat\mu}(\*y))]$.
\end{Lem}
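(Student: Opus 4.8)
\emph{Proof idea.} The plan is to reduce the identity to the one-dimensional Stein lemma applied separately in each coordinate. First I would decompose $\@{\hat\mu}(\*y)-\@\mu=(\@{\hat\mu}(\*y)-\*y)+(\*y-\@\mu)$ and expand the $\*W$-weighted squared norm:
\[
\|\@{\hat\mu}(\*y)-\@\mu\|_\*W^2=\|\@{\hat\mu}(\*y)-\*y\|_\*W^2+2\langle\@{\hat\mu}(\*y)-\*y,\*W(\*y-\@\mu)\rangle+\|\*y-\@\mu\|_\*W^2.
\]
Taking expectations, $\E\|\*y-\@\mu\|_\*W^2=\sum_iW_{ii}\E[(y_i-\mu_i)^2]=\Tr(\*W\@\Sigma)$ since $\*W$ is diagonal. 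For the cross term, writing $\@{\hat\mu}(\*y)-\*y=\@{\hat\mu}(\*y)-(\*y-\@\mu)-\@\mu$ and using $\E[\*y-\@\mu]=\*0_d$ turns $\E\langle\@{\hat\mu}(\*y)-\*y,\*W(\*y-\@\mu)\rangle$ into $\E\langle\@{\hat\mu}(\*y),\*W(\*y-\@\mu)\rangle-\Tr(\*W\@\Sigma)$. Substituting back, the lemma becomes equivalent to the identity $\E\langle\@{\hat\mu}(\*y),\*W(\*y-\@\mu)\rangle=\E[\@\nabla_\*y\cdot(\*W\@\Sigma\@{\hat\mu}(\*y))]$, i.e.\ (since $\*W$ and $\@\Sigma$ are diagonal) $\sum_iW_{ii}\E[(y_i-\mu_i)\hat\mu_i(\*y)]=\sum_iW_{ii}\Sigma_{ii}\E[\partial_{y_i}\hat\mu_i(\*y)]$.

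It thus suffices to prove, for each fixed $i$, the coordinatewise Stein identity $\E[(y_i-\mu_i)\hat\mu_i(\*y)]=\Sigma_{ii}\,\E[\partial_{y_i}\hat\mu_i(\*y)]$. Because $\@\Sigma$ is diagonal, the coordinates of $\*y\sim\N(\@\mu,\@\Sigma)$ are independent with $y_i\sim\N(\mu_i,\Sigma_{ii})$, so I would condition on $\*y_{-i}=(y_j)_{j\ne i}$ and work with the univariate Gaussian density $\phi_{\mu_i,\Sigma_{ii}}$. Using the elementary relation $(y_i-\mu_i)\phi_{\mu_i,\Sigma_{ii}}(y_i)=-\Sigma_{ii}\,\phi_{\mu_i,\Sigma_{ii}}'(y_i)$ and integrating by parts in $y_i$ gives $\E[(y_i-\mu_i)\hat\mu_i(\*y)\mid\*y_{-i}]=\Sigma_{ii}\,\E[\partial_{y_i}\hat\mu_i(\*y)\mid\*y_{-i}]$; taking the expectation over $\*y_{-i}$ and recombining over $i$ with weights $W_{ii}$ yields the displayed identity. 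This is exactly the weighted, diagonal-covariance restatement of the classical SURE computation of \citet{stein1981estimation}.

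The main obstacle is justifying the coordinatewise integration by parts under the stated regularity, where $\hat\mu_i$ is only \emph{almost differentiable} in $y_i$ (absolutely continuous on almost every line parallel to the $i$th axis) rather than $C^1$. I would follow Stein's original argument: almost differentiability lets us write $\hat\mu_i$ along each such line as the indefinite integral of $\partial_{y_i}\hat\mu_i$, so the fundamental theorem of calculus applies; the boundary contributions at $\pm\infty$ vanish because the Gaussian weight $(y_i-\mu_i)\phi_{\mu_i,\Sigma_{ii}}(y_i)$ decays faster than any growth of $\hat\mu_i$ compatible with the expectations being finite; and the integrability hypothesis on the coordinate derivatives, $\E\|\@\nabla_\*y\cdot\@{\hat\mu}(\*y)\|_1<\infty$, licenses Fubini's theorem to swap the $y_i$-integral with the $\*y_{-i}$-expectation and ensures every expectation in the chain is well defined. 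Once these measure-theoretic points are discharged, the reduction above closes the proof.
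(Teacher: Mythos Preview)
Your proposal is correct and takes essentially the same approach as the paper: expand the squared norm via $(\@{\hat\mu}(\*y)-\*y)+(\*y-\@\mu)$, identify $\E\|\*y-\@\mu\|_\*W^2=\Tr(\*W\@\Sigma)$, and reduce the cross term to Stein's identity. The only difference is that the paper dispatches the cross term by directly citing Lemma~2 of \citet{stein1981estimation}, whereas you spell out the coordinatewise integration-by-parts argument and its regularity justification under almost differentiability.
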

\begin{proof}
	Expanding the squared norm and applying Stein's Lemma~\citep[Lemma~2]{stein1981estimation} to the last term yields
	\begin{align}
		\begin{split}
			\E\|\@{\hat\mu}(\*y)-\@\mu\|_\*W^2
			&=\E\|\@{\hat\mu}(\*y)-\*y\|_\*W^2+\E\|\*y-\@\mu\|_\*W^2+2\E\langle\@{\hat\mu}(\*y)-\*y,\*W(\*y-\@\mu)\rangle\\
			&=\E\|\@{\hat\mu}(\*y)-\*y\|_\*W^2+\Tr(\*W\@\Sigma)+2\E[\@\nabla_\*y\cdot(\*W\@\Sigma\@{\hat\mu}(\*y))-2\Tr(\*W\@\Sigma)]
		\end{split}
	\end{align}
\end{proof}

We now turn to James--Stein-type estimators of the form $\@{\hat\mu}(\*y)=\*y-\frac c{\|\*P(\*y-\*b)\|_{\@\Sigma^{-1}}^2}\*P(\*y-\*b)$ for $c\in\R$, $\@\theta\in\R^d$, and $\*P\in\R^{d\times d}$, roughly extending the one considered by \citet{bock1975minimax} (in a more general form) to $\*P\ne\*I_d$ and $\*b\ne\*0_d$.
Setting  $\*x=\*P(\*y-\@\theta)$, we apply Lemma~\ref{lem:sure} to show that the expected $\@\Sigma^{-1}$-weighted MSE is
\begin{align}
	\begin{split}
		\E\|\@{\hat\mu}(\*y)-\@\mu\|_{\@\Sigma^{-1}}^2
		&=d+\E\left[\frac{c^2-2c\Tr(\*P)}{\|\*x\|_{\@\Sigma^{-1}}^2}+\frac{4c\Tr(\*x\*x^\top\@\Sigma^{-1}\*P)}{\|\*x\|_{\@\Sigma^{-1}}^4}\right]\\
		&= d+\E\left[\frac{c^2-2c\Tr(\*P)}{\|\*x\|_{\@\Sigma^{-1}}^2}+\frac{4c\Tr(\@\Sigma^{-\frac12}\*x\*x^\top\@\Sigma^{-\frac12}\@\Sigma^{-\frac12}\*P\@\Sigma^\frac12)}{\|\*x\|_{\@\Sigma^{-1}}^4}\right]\\
		&\le d+\E\left[\frac{c^2\|\@\Sigma\|-2c\Tr(\@\Sigma\*P)+4c\|\@\Sigma^{-\frac12}\*P\@\Sigma^\frac12\|}{\|\*x\|_{\@\Sigma^{-1}}^2}\right]
\end{split}
\end{align}
where to compute the derivative in SURE we made use of the matrix calculus tool of~\citet{laue2018computing} and the last line follows by von Neumann's trace inequality.
This upper bound is minimized at $c=\Tr(\*P)-2\|\@\Sigma^{-\frac12}\*P\@\Sigma^\frac12\|$.
Note that for $\*P=\*I_d$ this setting of $c$ exactly recovers the Bock estimator $\@{\hat\mu}_\theta^\textrm{Bock}$ in Equation~\ref{eq:bock}, apart from the positive-part correction.

On the other hand, for shrinking to the pooled mean $\@{\hat\mu}^\textrm{pooled}(\*y)=\frac{\*1_{d\times d}\@\Sigma^{-1}\*y}{\Tr(\@\Sigma^{-1})}$ we can apply a well-known fact about the eigenvalues of symmetric rank-one updates (e.g.~\citet[Theorem~1]{bunch1978rank-one}) to diagonal matrices to obtain $\|\@\Sigma^{-\frac12}\*P\@\Sigma^\frac12\|=\left\|\*I_d-\frac{\@\Sigma^{-\frac12}\*1_{d\times d}\@\Sigma^{-\frac12}}{\Tr(\@\Sigma^{-1})}\right\|=1$, which yields the setting in \citet{feldman2014revisiting} of $c=\Tr(\*P)-2\|\@\Sigma^{-\frac12}\*P\@\Sigma^\frac12\|=d-3$:
\begin{equation}\label{eq:single-bock}
	\@{\hat\mu}^\textrm{Bock}(\*y)
	=\@{\hat\mu}^\textrm{pooled}(\*y)+\left(1-\frac{d-3}{\|\*y-\@{\hat\mu}^\textrm{pooled}(\*y)\|_{\@\Sigma^{-1}}^2}\right)_+(\*y-\@{\hat\mu}^\textrm{pooled}(\*y))
\end{equation}

Lastly, we note that it is straightforward to extend SureMap to the case of non-diagonal covariance matrices $\@\Sigma$, i.e., when we want to simultaneously release performance estimates for groups with different numbers of intersections (e.g., just race, just age, and both race and age).
In this case there will be nonzero covariance between elements of the vector $\*y$, e.g., those corresponding to a specific age bracket and an intersection of that bracket with a specific race.
To handle this, one only needs to derive the SURE objective estimating the (non-diagonal) $\@\Sigma^{-1}$-weighted risk of the MAP estimator with (the same, non-diagonal) covariance $\@\Sigma$;
this can be done with the following generalization of Lemma~\ref{lem:sure}:

\begin{Lem}\label{lem:generalized-sure}
	Suppose $\*y\sim\N(\@\mu,\@\Sigma)$ for mean $\@\mu\in\R^d$ and p.s.d. covariance $\@\Sigma\in\R^{d\times d}$, and consider a function $\@{\hat\mu}:\R^d\to\R^d$ s.t. for every $i\in\!{d}$ the function $\hat\mu_i$ is a.e. differentiable in $y_i$ and $\E\|\@\nabla_\*y\cdot\@{\hat\mu}(\*y)\|_1<\infty$.
	Then for any p.s.d. $\*W\in\R^{d\times d}$ we have
	\begin{equation}
		\E\|\@{\hat\mu}(\*y)-\@\mu\|_\*W^2
		=\E\|\@{\hat\mu}(\*y)-\*y\|_\*W^2+\Tr(\*W\@\Sigma)+2\E\sum_{i=1}^d\sum_{j=1}^d(\@\Sigma\odot(\*W\@\nabla_\*y\@{\hat\mu}(\*y)-\*W))_{i,j}
	\end{equation}
	where $\@\nabla_\*y\@{\hat\mu}(\*y)$ is the Jacobian of $\@{\hat\mu}$ with entries $\nabla_{\*y;i,j}\@{\hat\mu}(\*y)=\partial_{y_j}\hat\mu_i(\*y)$.
\end{Lem}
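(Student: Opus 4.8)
The plan is to repeat the proof of Lemma~\ref{lem:sure} almost verbatim, replacing its coordinatewise use of Stein's lemma by the full multivariate Stein identity. First I would expand the $\*W$-weighted squared error around $\*y$:
\[
\|\@{\hat\mu}(\*y)-\@\mu\|_\*W^2 = \|\@{\hat\mu}(\*y)-\*y\|_\*W^2 + \|\*y-\@\mu\|_\*W^2 + 2\langle\@{\hat\mu}(\*y)-\*y,\,\*W(\*y-\@\mu)\rangle,
\]
using symmetry of $\*W$ to merge the two cross terms. Taking expectations, the first term is left untouched and the second contributes $\E\|\*y-\@\mu\|_\*W^2=\E\Tr\bigl(\*W(\*y-\@\mu)(\*y-\@\mu)^\top\bigr)=\Tr(\*W\@\Sigma)$, so the whole argument reduces to evaluating $2\,\E\bigl[(\*y-\@\mu)^\top\*W(\@{\hat\mu}(\*y)-\*y)\bigr]$.

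For this cross term I would apply the multivariate Stein identity: for $\*y\sim\N(\@\mu,\@\Sigma)$ and sufficiently regular $\*h:\R^d\to\R^d$,
\[
\E\bigl[(\*y-\@\mu)^\top\*h(\*y)\bigr] = \sum_{i=1}^d\sum_{k=1}^d\Sigma_{ik}\,\E\bigl[\partial_{y_k}h_i(\*y)\bigr].
\]
Taking $\*h(\*y)=\*W(\@{\hat\mu}(\*y)-\*y)$, whose partials are $\partial_{y_k}h_i(\*y)=(\*W\@\nabla_\*y\@{\hat\mu}(\*y)-\*W)_{i,k}$, turns the right-hand side into $\sum_{i,k}\Sigma_{ik}(\*W\@\nabla_\*y\@{\hat\mu}(\*y)-\*W)_{i,k}$, which by symmetry of $\@\Sigma$ is exactly $\sum_{i,j}\bigl(\@\Sigma\odot(\*W\@\nabla_\*y\@{\hat\mu}(\*y)-\*W)\bigr)_{i,j}$. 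Substituting back into the expansion yields the claimed identity; as a sanity check, specializing to diagonal $\@\Sigma$ and $\*W$ recovers Lemma~\ref{lem:sure}.

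It remains to prove the multivariate Stein identity from the coordinatewise version \citep[Lemma~2]{stein1981estimation} already used above, and this is where the work lies. I would whiten the Gaussian: writing $\@\Sigma=\*B\*B^\top$ with $\*B\in\R^{d\times r}$, $r=\operatorname{rank}(\@\Sigma)$, and $\*z\sim\N(\*0_r,\*I_r)$, we have $\*y\stackrel{d}{=}\@\mu+\*B\*z$, so that $(\*y-\@\mu)^\top\*h(\*y)=(\*B\*z)^\top\*g(\*z)$ for $\*g(\*z):=\*h(\@\mu+\*B\*z)$. Applying the standard-normal Stein lemma to each $g_\ell$ gives $\E[z_m g_\ell(\*z)]=\E[\partial_{z_m}g_\ell(\*z)]$, and the chain rule $\partial_{z_m}g_\ell=\sum_k B_{km}\,\partial_{y_k}h_\ell$ then collapses $\sum_m B_{\ell m}B_{km}$ into $(\*B\*B^\top)_{\ell k}=\Sigma_{\ell k}$, giving the displayed identity; using the thin factorization keeps $\*z$ nondegenerate, so the same computation covers singular $\@\Sigma$. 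The main obstacle is the regularity bookkeeping: under the stated hypotheses on $\@{\hat\mu}$---which in the non-diagonal setting one should read as joint almost-differentiability together with integrability of the full Jacobian $\@\nabla_\*y\@{\hat\mu}(\*y)$---one must verify that the composed map $\*g$ is almost differentiable coordinatewise and inherits the analogous integrability, both of which follow from linearity of the change of variables, so that the coordinatewise Stein lemma genuinely applies after whitening and every expectation above is finite.
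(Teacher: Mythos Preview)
Your proposal is correct and follows essentially the same approach as the paper: expand the $\*W$-weighted squared error around $\*y$, identify the $\Tr(\*W\@\Sigma)$ term, and evaluate the cross term by applying a multivariate Stein identity to $f(\*y)=(\*W(\@{\hat\mu}(\*y)-\*y))_i$. The only difference is that the paper simply invokes the multivariate Stein identity by citing \citet[Lemma~1]{liu1994siegels}, whereas you rederive it from the coordinatewise version via a whitening argument; your route is more self-contained (and handles singular $\@\Sigma$ explicitly), but the two arguments are otherwise equivalent.
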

\begin{proof}
	Note that since $\E\*y=\@\mu$ we have by a multivariate version of Stein's lemma~(e.g., \citet[Lemma~1]{liu1994siegels}) that the following identity holds for any a.e. continuous $f:\R^d\to\R$ s.t. $\E|\partial_{y_i}f(\*y)|<\infty~\forall~i\in\!{d}$:
	\begin{equation}
		\E[(\*y-\@\mu)_if(\*y)]
		=\Cov((\*y-\@\mu)_i,f(\*y))
		=\langle\@\Sigma_{i,:},\@\nabla_\*yf(\*y)\rangle
	\end{equation}
	Using this equality with $f(\*y)=(\*W(\@{\hat\mu}-\*y))_i$ yields
	\begin{align}
		\begin{split}
			\E&\|\@{\hat\mu}(\*y)-\@\mu\|_\*W^2\\
			&=\E\|\@{\hat\mu}(\*y)-\*y\|_\*W^2+\E\|\*y-\@\mu\|_\*W^2+2\E\langle\@{\hat\mu}(\*y)-\*y,\*W(\*y-\@\mu)\rangle\\
			&=\E\|\@{\hat\mu}(\*y)-\*y\|_\*W^2+\Tr(\*W\@\Sigma)+2\E\sum_{i=1}^d\langle\@\Sigma_{i,:},\@\nabla_\*y((\*W\@{\hat\mu}(\*y))_i-(\*W\*y)_i)\rangle\\
			&=\E\|\@{\hat\mu}(\*y)-\*y\|_\*W^2+\Tr(\*W\@\Sigma)+2\E\sum_{i=1}^d\sum_{j=1}^d\Sigma_{i,j}\partial_{y_j}(\langle\*W_{i,:},\@{\hat\mu}(\*y)\rangle-\*W_{i,:}\*y)\\
			&=\E\|\@{\hat\mu}(\*y)-\*y\|_\*W^2+\Tr(\*W\@\Sigma)+2\E\sum_{i=1}^d\sum_{j=1}^d\Sigma_{i,j}(\langle\*W_{i,:},\@\nabla_{y_j}\@{\hat\mu}(\*y)\rangle-W_{i,j})\\
			&=\E\|\@{\hat\mu}(\*y)-\*y\|_\*W^2+\Tr(\*W\@\Sigma)+2\E\sum_{i=1}^d\sum_{j=1}^d(\@\Sigma\odot(\*W\@\nabla_\*y\@{\hat\mu}(\*y)-\*W))_{i,j}
		\end{split}
	\end{align}
\end{proof}

\newpage
\section{SureMap estimator}\label{app:map}

In this section we describe the prior in full for any number of attributes, prove expressivity results reference in \textsection\ref{sec:expressivity}, and describe how to compute the estimator using coordinate descent.

\subsection{A linear prior}\label{app:formal}

Suppose each group $g\in\!d$ corresponds to an intersection $\*g\in\bigtimes_{a=1}^k\!{d_a}$ of $k$ attributes, with attribute $a$ having $d_a$ possible classes.\footnote{Note that this does {\em not} reduce the generality of our basic setup, which can be recovered with $k=1$ and $d_1=d$.}
For example, the first attribute could be one of $d_1$ different age brackets and the second could be one of $d_2$ different racial categories.
For each subset $A\subset\!k$ of attributes define a random tensor $\*Z_A\in\R^{\times_{a\in A}d_a}$ with i.i.d.\ entries $Z_{A;\*c}\sim\N(0,1)$, where $\*c\in\bigtimes_{a\in A}\!{d_a}$ is a specific class combination of the attribute $A$.
Then the {\bf additive intersectional effects prior} on the mean $\mu_g$ of group $g$ is the weighted sum
\begin{equation}
	\mu_g
	=\theta_g+\sum_{A\in2^{\!k}}\tau_AZ_{A;\*g_A}
\end{equation}
with coefficients $\tau_A\in\R$ corresponding to each $A\in2^{\!k}$.
These hyperparameters thus determine the strength of the effect of attribute intersection $A$ on the group means, with $\tau_AZ_{A;\*c}$ being added to the means of all groups $g$ s.t.\ $\*g_{A}=\*c$, i.e. whose attribute intersection results in the specific class combination $\*c$.

Letting $\vtaupar=(\taupar_\emptyset,\dotsc,\taupar_{[k]})\in\R_{\ge0}^{2^k}$ be the vector of hyperparameters $\taupar_A$ and assuming $\taupar_{\!k}>0$, this prior is equivalent to a Gaussian prior $\@\mu\sim\N(\@\theta,\@\Lambda(\vtaupar))$ with covariance
\begin{equation}\label{eq:prior}
\@\Lambda(\vtaupar)
=\sum_{A\in2^{\!k}}\tau_A^2\*U_A\*U_A^\top
=\sum_{A\in2^{\!k}}\tau_A^2\*C_A
\end{equation}
for matrices $\*U_A\in\{0,1\}^{d\times\prod_{a\in A}d_a}$ with orthogonal column vectors, each one corresponding to a different attribute intersection in $\bigtimes_{a\in A}\!{d_a}$ and its $g$th entry indicating whether group $g$ is a subset of that intersection.
Their outer product matrices $\*C_A=\*U_A\*U_A^\top$ have entries $C_{A;g,h}$ that are one if $\*g_A=\*h_A$ and zero otherwise, i.e. they indicate if groups $g$ and $h$ have the same type (e.g. (senior, female)) of attribute intersection $A$ (e.g. (age, sex)).
In the simplest case, if we are disaggregating across just one attribute, i.e. $k=1$ and $d_1=d$, then $\@\Lambda(\vtaupar)=\tau_\emptyset^2\*1_{d\times d}+\tau_{\!k}^2\*I_d$.
Since $k\le\lfloor\log_2 d\rfloor$, using this prior to define the covariance matrix reduces the number of hyperparameters to $d+2^k\le2d$ for $\@{\hat\mu}_{\@\theta,\@\Lambda(\vtaupar)}^\textrm{MAP}$, compared to $\BigO(d^2)$ for a general covariance $\@\Lambda$.

\subsection{Expressivity of the linear prior}\label{app:expressivity}

\begin{Thm}\label{thm:expressivity}
	Consider a disaggregated setting with $k$ attributes with $d_1,\dots,d_k$ possible categories, respectively, and total number of groups $d=\prod_{a=1}^kd_a$.
	If we use a diagonal covariance $\@\Sigma\in\R^{d\times d}$ satisfying $\Sigma_{h,h}=\sigma^2/n_h~\forall~h$ then the following holds $\forall~\*y\in\R^d$:
	\begin{enumerate}
		\item
		 If $\taupar_A=0~\forall~A\ne\!k$ and $\@\theta\in\R^d$ then $\lim\limits_{\taupar_{\!k}\to\infty}\@{\hat\mu}_{\@\theta,\@\Lambda(\vtaupar)}^\mathrm{MAP}(\*y)
		=\@{\hat\mu}^\mathrm{naive}(\*y)$
		\item
		If $\taupar_A=0~\forall~A\not\in\{\emptyset,\!k\}$ then $\lim\limits_{\begin{smallmatrix}\taupar_\emptyset\to\infty\\\taupar_{\!k}\to0\end{smallmatrix}}\@{\hat\mu}_{\*0_d,\@\Lambda(\vtaupar)}^\mathrm{MAP}(\*y)
		=\@{\hat\mu}^\mathrm{pooled}(\*y)$
		\item  If $\taupar_A=0~\forall~A\ne\!k$ and $\@\theta\in\R^d$ then $\lim\limits_{\taupar_{\!k}\to0}\@{\hat\mu}_{\@\theta,\@\Lambda(\vtaupar)}^\mathrm{MAP}(\*y)
		=\@\theta$
		\item If $\taupar_A=0~\forall~A\not\in\{\emptyset,\!k\}$ and $\@\theta\in\R^d$ then $\lim\limits_{\begin{smallmatrix}\taupar_\emptyset\to\infty\\\taupar_{\!k}\to0\end{smallmatrix}}\@{\hat\mu}_{\@\theta,\@\Lambda(\vtaupar)}^\mathrm{MAP}(\*y)
		=\@{\hat\mu}^\mathrm{pooled}(\*y-\@\theta)+\@\theta$
	\end{enumerate}
	as desired.
\end{Thm}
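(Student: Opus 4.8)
\textit{Proof plan.} The plan is to isolate two elementary facts about the MAP estimator~(\ref{eq:MAP}) and then handle the four claims by cases, reusing part~2 to obtain part~4. \emph{Fact~A (equivariance in the prior mean):} for every $\@\theta\in\R^d$ one has $\@{\hat\mu}_{\@\theta,\@\Lambda}^\textrm{MAP}(\*y)=\@\theta+\@{\hat\mu}_{\*0_d,\@\Lambda}^\textrm{MAP}(\*y-\@\theta)$, which follows from~(\ref{eq:MAP}) by writing $\@\Lambda^{-1}\@\theta=(\@\Lambda^{-1}+\@\Sigma^{-1})\@\theta-\@\Sigma^{-1}\@\theta$ inside the parentheses. \emph{Fact~B (variational form):} since the MAP estimate is the posterior mode, $\@{\hat\mu}_{\*0_d,\@\Lambda}^\textrm{MAP}(\*y)=\argmin_{\@\mu\in\R^d}\norm{\@\mu-\*y}_{\@\Sigma^{-1}}^2+\norm{\@\mu}_{\@\Lambda^{-1}}^2$, equivalently the closed form $\@\Lambda(\@\Lambda+\@\Sigma)^{-1}\*y$ (useful for the alternative route below). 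Throughout I use that $\@\Sigma^{-1}=\diag(\*n)/\sigma^2$ is well-defined and invertible (here $n_h>0$ for all $h$) and that $\*C_\emptyset=\*1_{d\times d}=\*1_d\*1_d^\top$ and $\*C_{\!k}=\*I_d$.

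\textbf{Parts~1 and~3.} When $\taupar_A=0$ for all $A\ne\!k$ we get $\@\Lambda(\vtaupar)=\taupar_{\!k}^2\*C_{\!k}=\taupar_{\!k}^2\*I_d$, so scaling~(\ref{eq:MAP}) by $\taupar_{\!k}^2$ gives $\@{\hat\mu}_{\@\theta,\@\Lambda(\vtaupar)}^\textrm{MAP}(\*y)=(\*I_d+\taupar_{\!k}^2\@\Sigma^{-1})^{-1}(\@\theta+\taupar_{\!k}^2\@\Sigma^{-1}\*y)$. Letting $\taupar_{\!k}\to0$ and using continuity of matrix inversion near $\*I_d$ yields the limit $\@\theta$ (part~3). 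For part~1 I keep the unscaled form $(\taupar_{\!k}^{-2}\*I_d+\@\Sigma^{-1})^{-1}(\taupar_{\!k}^{-2}\@\theta+\@\Sigma^{-1}\*y)$ and let $\taupar_{\!k}\to\infty$: the matrix converges to $\@\Sigma$ and the vector to $\@\Sigma^{-1}\*y$, so the product converges to $\*y=\@{\hat\mu}^\textrm{naive}(\*y)$.

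\textbf{Part~2.} Here $\@\theta=\*0_d$ and $\@\Lambda(\vtaupar)=\taupar_\emptyset^2\*1_d\*1_d^\top+\taupar_{\!k}^2\*I_d$, which acts as the scalar $\taupar_\emptyset^2 d+\taupar_{\!k}^2$ on $\*1_d$ and as $\taupar_{\!k}^2$ on its orthogonal complement; hence, with $\*P=\tfrac1d\*1_d\*1_d^\top$ and $\*Q=\*I_d-\*P$, $\norm{\@\mu}_{\@\Lambda^{-1}(\vtaupar)}^2=\norm{\*P\@\mu}_2^2/(\taupar_\emptyset^2 d+\taupar_{\!k}^2)+\norm{\*Q\@\mu}_2^2/\taupar_{\!k}^2$. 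Put $c^\star=\argmin_{c\in\R}\norm{c\*1_d-\*y}_{\@\Sigma^{-1}}^2=\*1_d^\top\@\Sigma^{-1}\*y/(\*1_d^\top\@\Sigma^{-1}\*1_d)$, which by $\@\Sigma^{-1}=\diag(\*n)/\sigma^2$ equals $\tfrac1n\sum_{g=1}^d n_g y_g$, so $c^\star\*1_d=\@{\hat\mu}^\textrm{pooled}(\*y)$. As $\taupar_\emptyset\to\infty$ the $\*P$-penalty vanishes and as $\taupar_{\!k}\to0$ the $\*Q$-penalty diverges whenever $\*Q\@\mu\ne\*0_d$, so a routine minimizer-convergence argument finishes: the variational objective at the minimizer $\@{\hat\mu}_{\*0_d,\@\Lambda(\vtaupar)}^\textrm{MAP}(\*y)$ is at most its value at $c^\star\*1_d$, which stays bounded; hence by nonnegativity of the two penalties the $\*Q$-component of the minimizer tends to $\*0_d$ and the minimizer stays bounded, so every subsequential limit lies in $\operatorname{span}(\*1_d)$ and minimizes $\norm{c\*1_d-\*y}_{\@\Sigma^{-1}}^2$ over $c$, giving $\@{\hat\mu}_{\*0_d,\@\Lambda(\vtaupar)}^\textrm{MAP}(\*y)\to c^\star\*1_d=\@{\hat\mu}^\textrm{pooled}(\*y)$. (Alternatively, substitute $\@\Lambda(\vtaupar)$ into $\@\Lambda(\@\Lambda+\@\Sigma)^{-1}\*y$, apply Sherman--Morrison to $(\taupar_{\!k}^2\*I_d+\@\Sigma+\taupar_\emptyset^2\*1_d\*1_d^\top)^{-1}$, and take the joint limit of the resulting rational function of $\taupar_\emptyset^2$ and $\taupar_{\!k}^2$, whose Sherman--Morrison denominator $1+\taupar_\emptyset^2\*1_d^\top(\taupar_{\!k}^2\*I_d+\@\Sigma)^{-1}\*1_d\ge1$ never degenerates.)

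\textbf{Part~4.} Combine Fact~A with part~2 applied to $\*y-\@\theta$: $\@{\hat\mu}_{\@\theta,\@\Lambda(\vtaupar)}^\textrm{MAP}(\*y)=\@\theta+\@{\hat\mu}_{\*0_d,\@\Lambda(\vtaupar)}^\textrm{MAP}(\*y-\@\theta)\to\@\theta+\@{\hat\mu}^\textrm{pooled}(\*y-\@\theta)$. The one genuinely delicate point is the \emph{joint} limit $(\taupar_\emptyset,\taupar_{\!k})\to(\infty,0)$ in part~2: one has to check that the two limits do not interfere---the mean-direction eigenvalue $\taupar_\emptyset^2 d+\taupar_{\!k}^2$ must still diverge and the cross contributions must stay controlled---and the $\*P/\*Q$ diagonalization of $\@\Lambda(\vtaupar)$ (or, on the alternative route, non-degeneracy of the Sherman--Morrison denominator) is exactly what reduces this to elementary scalar limits; every other step is continuity of matrix inversion.
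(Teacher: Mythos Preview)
Your proof is correct. Parts~1 and~3 match the paper's argument essentially verbatim (direct limits of the diagonal form). For parts~2 and~4 you take a genuinely different route. The paper proves part~4 first and deduces part~2 by setting $\@\theta=\*0_d$; it proceeds purely computationally, applying Sherman--Morrison once to obtain $\@\Lambda^{-1}(\vtaupar)$, a second time to obtain $(\@\Lambda^{-1}(\vtaupar)+\@\Sigma^{-1})^{-1}$, and then evaluating the resulting rational expression entrywise via L'H\^opital. You instead prove part~2 first via a variational/compactness argument---diagonalizing $\@\Lambda(\vtaupar)$ in the $\*P/\*Q$ eigenbasis, bounding the minimizer's objective by its value at $c^\star\*1_d$, and extracting subsequential limits---and then deduce part~4 from your Fact~A (which is exactly the paper's rewriting $\@{\hat\mu}_{\@\theta,\@\Lambda}^\textrm{MAP}(\*y)=\@\theta+(\@\Lambda^{-1}+\@\Sigma^{-1})^{-1}\@\Sigma^{-1}(\*y-\@\theta)$). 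Your approach is more conceptual and, as you note, makes the joint limit $(\taupar_\emptyset,\taupar_{\!k})\to(\infty,0)$ transparently well-defined, whereas the paper's calculation effectively evaluates an iterated limit. The paper's purely algebraic route has the advantage of producing an explicit closed form at every stage, which could be reused elsewhere; your argument is shorter and avoids L'H\^opital entirely. Your parenthetical Sherman--Morrison alternative (applied to $\@\Lambda(\@\Lambda+\@\Sigma)^{-1}\*y$ rather than to $\@\Lambda^{-1}$) is also valid and slightly cleaner than the paper's two-step inversion.
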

\begin{proof}
	The first and third results can be easily shown using the fact that $\@\Sigma$ is diagonal:
	\begin{equation}
		\lim\limits_{\taupar_{\!k}\to\infty}\@{\hat\mu}_{\*0_d,\@\Lambda(\vtaupar)}^\mathrm{MAP}(\*y)
		=\lim\limits_{\taupar_{\!k}\to\infty}\left(\frac{\*I_d}{\tau_{\!k}^2}+\@\Sigma^{-1}\right)^{-1}\left(\frac{\@\theta}{\tau_{\!k}^2}+\@\Sigma^{-1}\*y\right)
		=\*y
		=\@{\hat\mu}^\textrm{naive}(\*y)
	\end{equation}
	\begin{equation}
	\lim\limits_{\taupar_{\!k}\to0}\@{\hat\mu}_{\@\theta,\@\Lambda(\vtaupar)}^\mathrm{MAP}(\*y)
	=\lim\limits_{\taupar_{\!k}\to0}\left(\frac{\*I_d}{\tau_{\!k}^2}+\@\Sigma^{-1}\right)^{-1}\left(\frac{\@\theta}{\tau_{\!k}^2}+\@\Sigma^{-1}\*y\right)
	=\@\theta
	\end{equation}
	The second result follows from the last by substituting $\@\theta=\*0_d$, so we just need to prove the latter one.
	First note that
	\begin{align}
	\@{\hat\mu}_{\@\theta,\@\Lambda(\vtaupar)}^\textrm{MAP}(\*y)
	&=\left(\@\Lambda^{-1}(\vtaupar)+\@\Sigma^{-1}\right)^{-1}\left(\@\Lambda^{-1}(\vtaupar)\@\theta+\@\Sigma^{-1}\*y\right)\\
	&=\@\theta+\left(\@\Lambda^{-1}(\vtaupar)+\@\Sigma^{-1}\right)^{-1}\@\Sigma^{-1}(\*y-\@\theta)
	\end{align}
	so we just need to show that the second term approaches $\@{\hat\mu}^\textrm{pooled}(\*y-\theta)$.
	We use the Sherman-Morrison formula to compute
	\begin{equation}
		\@\Lambda^{-1}(\vtaupar)
		=(\tau_{\!k}^2\*I_d+\tau_\emptyset^2\*1_d\*1_d^\top)^{-1}
		=\frac{\*I_d}{\tau_{\!k}^2}-\frac{\tau_\emptyset^2\*1_d\*1_d^\top}{\tau_{\!k}^4+\tau_{\!k}^2\tau_\emptyset^2d}
	\end{equation}
	and apply it again to compute
	\begin{align}
		\begin{split}
		(\@\Lambda^{-1}(\vtaupar)+\@\Sigma^{-1})^{-1}
		&=\left(\frac{\*I_d}{\tau_{\!k}^2}-\frac{\tau_\emptyset^2\*1_d\*1_d^\top}{\tau_{\!k}^2+\tau_\emptyset^2d}+\@\Sigma^{-1}\right)^{-1}\\
		&=\left(\frac{\*I_d}{\tau_{\!k}^2}+\@\Sigma^{-1}\right)^{-1}+\frac{\tau_\emptyset^2\left(\frac{\*I_d}{\tau_{\!k}^2}+\@\Sigma^{-1}\right)^{-1}\*1_d\*1_d^\top\left(\frac{\*I_d}{\tau_{\!k}^2}+\@\Sigma^{-1}\right)^{-1}}{\tau_{\!k}^4+\tau_{\!k}^2\tau_\emptyset^2d-\tau_\emptyset^2\Tr\left(\left(\frac{\*I_d}{\tau_{\!k}^2}+\@\Sigma^{-1}\right)^{-1}\right)}
		\end{split}
	\end{align}
	Defining $\@\delta=\*y-\@\theta$, we have by L'H\^{o}pital's rule that $\forall~g\in\!d$
	\begin{align}
		\begin{split}
			&\lim\limits_{\begin{smallmatrix}\taupar_\emptyset\to\infty\\\taupar_{\!k}\to0\end{smallmatrix}}\left((\@\Lambda^{-1}(\vtaupar)+\@\Sigma^{-1})^{-1}\@\Sigma^{-1}\@\delta\right)_g\\
			&=\lim\limits_{\begin{smallmatrix}\taupar_\emptyset\to\infty\\\taupar_{\!k}\to0\end{smallmatrix}}\frac{\delta_g/\Sigma_{g,g}}{\frac1{\tau_{\!k}^2}+\frac1{\Sigma_{g,g}}}
			+\frac{
				\frac{\tau_{\!k}^2\tau_\emptyset^2}{1+\frac{\tau_{\!k}^2}{\Sigma_{g,g}}}\sum\limits_{h=1}^d\frac{\delta_h/\Sigma_{h,h}}{1+\frac{\tau_{\!k}^2}{\Sigma_{h,h}}}
			}{\tau_{\!k}^2+\tau_\emptyset^2d-\sum\limits_{h=1}^d\frac{\tau_\emptyset^2}{1+\frac{\tau_{\!k}^2}{\Sigma_{h,h}}}
	}\\
		&=\lim\limits_{\taupar_\emptyset\to\infty}\frac{\lim\limits_{\taupar_{\!k}\to\infty}
			\frac{\tau_\emptyset^2}{1+\frac{\tau_{\!k}^2}{\Sigma_{g,g}}}\left(\sum\limits_{h=1}^d\frac{\delta_h/\Sigma_{h,h}}{1+\frac{\tau_{\!k}^2}{\Sigma_{h,h}}}-\frac{\tau_{\!k}^2\delta_h/\Sigma_{h,h}^2}{\left(1+\frac{\tau_{\!k}^2}{\Sigma_{h,h}}\right)^2}-\frac{\tau_{\!k}^2/\Sigma_{g,g}}{\left(1+\frac{\tau_{\!k}^2}{\Sigma_{g,g}}\right)^2}\sum\limits_{h=1}^d\frac{\delta_h/\Sigma_{h,h}}{1+\frac{\tau_{\!k}^2}{\Sigma_{h,h}}}\right)
	}{\lim\limits_{\taupar_{\!k}\to\infty}
			1+\tau_\emptyset^2\sum\limits_{h=1}^d\frac{\Sigma_{h,h}}{\left(\tau_{\!k}^2+\Sigma_{h,h}\right)^2}
}\\
	&=\lim\limits_{\taupar_\emptyset\to\infty}\frac{
		\tau_\emptyset^2\sum\limits_{h=1}^d\frac{\delta_h}{\Sigma_{h,h}}}{1+\sum\limits_{h=1}^d\frac{\tau_\emptyset^2}{\Sigma_{h,h}}}
	=\frac1n\sum_{h=1}^dn_h(y_h-\theta_h)
	=\hat\mu_g^\textrm{pooled}(\*y-\@\theta)
		\end{split}
	\end{align}
as desired.
\end{proof}

\newpage
\subsection{How to set the multi-task center}\label{app:mt}

\begin{figure}[!t]
	\centering
	\includegraphics[width=.450\linewidth]{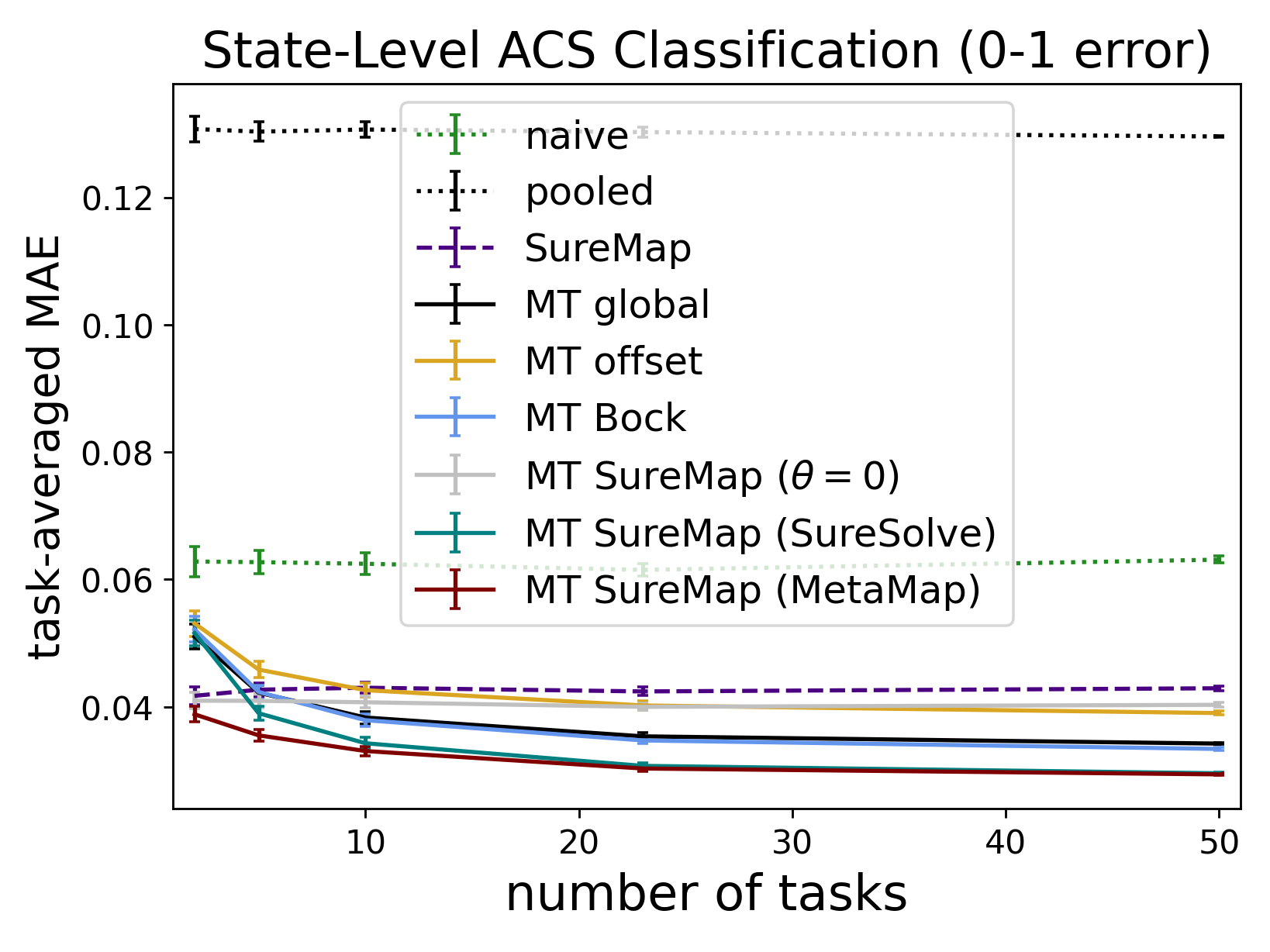}
	\hfill
	\includegraphics[width=.450\linewidth]{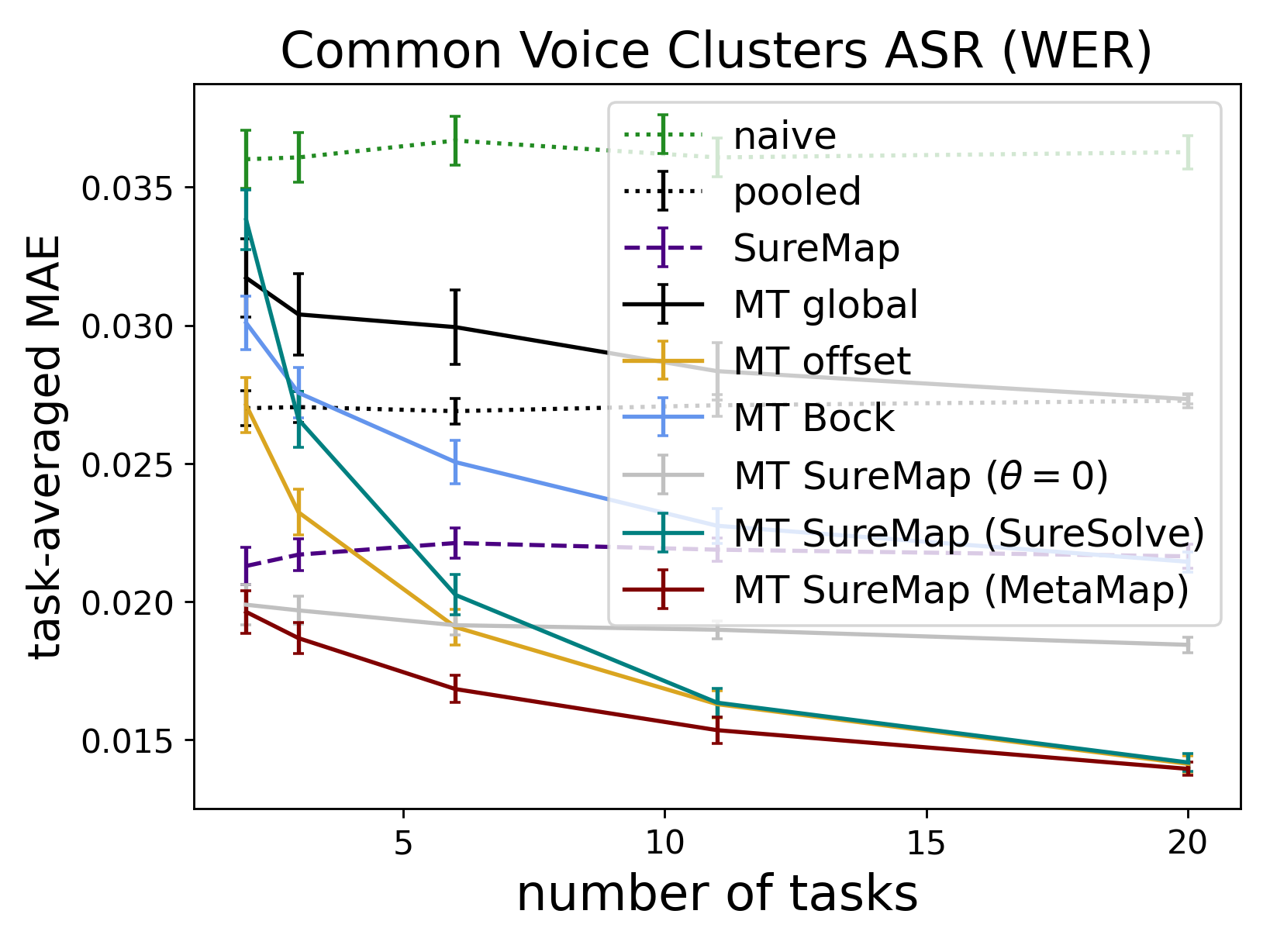}\vspace{-3mm}
	\caption{\label{fig:theta}
		Performance as the number of tasks  varies, evaluated on SLACS~(left) and CVC~(right).
        This plot is similar to Figure~\ref{fig:tasks} but demonstrates the superiority of the multi-task version of SureMap that we choose~(MetaMap) relative to alternatives.\looseness-1
	}\vspace{-2mm}
\end{figure}

In the single-task case we set the prior mean $\@\theta=\*0_d$;
this can also be done in the multi-task setting.
Alternatively, we can use multi-task data to construct estimator $\@{\hat\theta}$ of some true underlying multi-task mean $\@\theta$ and substitute the former for $\@\theta$;
for simplicity we will restrict to the ourselves to {\em linear} estimators $\@{\hat\theta}=\sum_{t=1}^T\*M_t\*y_t$, where the matrices $\*M_1,\dots,\*M_T$ are independent of $\*y_1,\dots,\*y_T$.
To determine these matrices, we will assign them some parametric form and set those parameters by minimizing the sum of the $\@\Sigma_t^{-1}$-weighted risks across tasks, as estimated by SURE:
\begin{align}
	\begin{split}
		\sum_{t=1}^T\E_t\|\@{\hat\mu}_{\@{\hat\theta},\@\Lambda}^\textrm{MAP}(\*y_t)-\@\mu_t\|_{\@\Sigma_t^{-1}}^2
		&=\sum_{t=1}^T\E_t\left(\|\@{\hat\mu}_{\@{\hat\theta},\@\Lambda}^\textrm{MAP}(\*y_t)-\*y_t\|_{\@\Sigma_t^{-1}}^2-d+2\@\nabla_{\*y_t}\cdot\@{\hat\mu}_{\@{\hat\theta},\@\Lambda}^\textrm{MAP}(\*y_t)\right)\\
		&=dT+\sum_{t=1}^T\E_t\|\*A_t(\@{\hat\theta}-\*y_t)\|_{\@\Sigma_t^{-1}}^2+2\Tr(\*A_t\*M_t-\*A_t)
	\end{split}
\end{align}
for $\*A_t=(\@\Lambda^{-1}+\@\Sigma_t^{-1})^{-1}\@\Lambda^{-1}$, i.e. $\@{\hat\mu}_{\@{\hat\theta},\@\Lambda}^\textrm{MAP}(\*y_t)=\*y_t+\*A_t(\@{\hat\theta}-\*y_t)=\*y_t-\*A_t(\*y_t-\sum_{s=1}^T\*M_s\*y_s)$.

\subsubsection{SureSolve}

This approach sets the prior mean by finding the $\@\theta$ that minimizes the above SURE objective, which is equivalent to solving an overconstrained linear system:
\begin{equation}
	\@{\hat\theta}
	=dT+\argmin_{\@\theta\in\R^d}\sum_{t=1}^T\|\*A_t(\@\theta-\*y_t)\|_{\@\Sigma_t^{-1}}^2-2\Tr(\*A_t)
	=\left(\sum_{t=1}^T\*A_t^\top\@\Sigma_t^{-1}\*A_t\right)^{-1}\sum_{t=1}^T\*A_t^\top\@\Sigma_t^{-1}\*A_t\*y_t
\end{equation}
Note that this estimator can be expressed in the desired linear form $\@{\hat\theta}=\sum_{t=1}^T\*M_t\*y_t$, with the matrices $\*M_t=\left(\sum_{s=1}^T\*A_s^\top\@\Sigma_s^{-1}\*A_s\right)^{-1}\*A_t^\top\@\Sigma_t^{-1}\*A_t$ depending on the parameters determining $\@\Lambda$.

\subsubsection{MetaMap}

Alternatively, if we assume the prior is correct for some $\@\theta$ and $\@\Lambda$, i.e. $\*y_t\sim\N(\@\theta,\@\Lambda+\@\Sigma_t)$, then a natural estimator to use for $\@\theta$ given a fixed $\@\Lambda$ is a MAP estimator of $\@{\hat\theta}_\@\Gamma$ with prior $\N(\*0_d,\@\Gamma)$ for some covariance $\@\Gamma$.
Define $\@\Sigma_\@\Lambda=(\sum_{t=1}^T(\@\Lambda+\@\Sigma_t)^{-1})^{-1}$ and note that
$\*y_\@\Lambda=\@\Sigma_\@\Lambda\sum_{t=1}^T(\@\Lambda+\@\Sigma_t)^{-1}\*y_t$ is distributed as $\N(\@\theta,\@\Sigma_\@\Lambda)$.
Then the MAP estimator of $\@\theta$
\begin{equation}
	\@{\hat\theta}_\@\Gamma
	=\left(\@\Gamma^{-1}+\@\Sigma_\@\Lambda^{-1}\right)^{-1}\@\Sigma_\@\Lambda^{-1}\*y_\@\Lambda
	=\sum_{t=1}^T\left(\@\Gamma^{-1}+\@\Sigma_\@\Lambda^{-1}\right)^{-1}(\@\Lambda+\@\Sigma_t)^{-1}\*y_t
\end{equation}
has the necessary form $\@{\hat\theta}=\sum_{t=1}^T\*M_t\*y_t$ for $\*M_t=(\@\Gamma^{-1}+\@\Sigma_\@\Lambda^{-1})^{-1}(\@\Lambda+\@\Sigma_t)^{-1}$.
In this case the matrices depend on both $\@\Lambda$ and $\@\Gamma$, i.e. the covariance matrices of the prior {\em and} the meta-prior.

\section{Computation}\label{app:computation}

\begin{algorithm}[!t]
	\DontPrintSemicolon
	\KwIn{target $f:\Z\to\R$, samples $S_t\subset\Z$ for each task $t=1,\dots,T$, partition $\{\Z_g\}_{g=1}^d$ of $\Z$, \mbox{each group $g$} an intersection of $k\in\mathbb Z_{>0}$ attributes}
	\smallskip
    \tcp{compute naive group means}
	\For{task $t\in\!T$}{
		\For{group $g\in\!d$}{
			$n_{t;g}\gets|S_t\cap\Z_g|$\\
			$y_{t;g}\gets\frac1{n_{t;g}}\sum_{z\in S_t\cap\Z_g}f(z)$
		}
	}
	\smallskip
	\tcp{estimate group variances}
	$n=\sum_{t=1}^T|S_t|$\\
	$\sigma^2\gets\frac1{n-dT}\sum_{t=1}^T\sum_{g=1}^d\sum_{z\in S_t\cap Z_g}(f(z)-y_{t;g})^2$\\
	\For{task $t\in\!T$}{
		$\@\Sigma_t^{-1}\gets\diag(\*n_t)/\sigma^2$
	}
	\smallskip
	\tcp{compute auxiliary matrices (avoids inverting $\@\Lambda$)}
	\SetKwProg{Method}{Method}{:}{}
	\SetKwFunction{Amatrix}{$\*A_t$}
	\Method{\Amatrix{$\vtaupar$}}{
		\tcp{compute prior covariance (matrices $\*C_A$ are defined in Appendix~\ref{app:formal})}
		$\@\Lambda\gets\sum_{A\in2^{[k]}}\tau_A^2\*C_A$\\
		\KwOut{$(\*I_d+\@\Lambda\@\Sigma_t^{-1})^{-1}$}
	}
	\smallskip
	\tcp{compute auxiliary matrices (avoids inverting $\@\Sigma_t^{-1}$ and $\@\Gamma$)}
	\SetKwFunction{Mmatrix}{$\*M_t$}
	\Method{\Mmatrix{$\vtaupar,\vupspar$}}{
		\tcp{compute prior and meta-prior covariances}
		$\@\Lambda\gets\sum_{A\in2^{[k]}}\tau_A^2\*C_A$\\
		$\@\Gamma\gets\sum_{A\in2^{[k]}}\upsilon_A^2\*C_A$\\
		\KwOut{$\left(\*I_d+\@\Gamma\sum_{s=1}^T\@\Sigma_s^{-1}(\@\Lambda\@\Sigma_s^{-1}+\*I_d)^{-1}\right)^{-1}\@\Gamma\@\Sigma_t^{-1}(\@\Lambda\@\Sigma_t^{-1}+\*I_d)^{-1}$}
	}
	\smallskip
	\tcp{estimates prior mean using MAP}
	\SetKwFunction{thetahat}{$\@{\hat\theta}$}
	\Method{\thetahat{$\vtaupar,\vupspar$}}{
		\KwOut{$\sum_{t=1}^T\*M_t(\vtaupar,\vupspar)\*y_t$}
	}
	\smallskip
	\tcp{optimize the sum of SUREs across tasks using L-BFGS-B}
	$\hvtaupar,\hvupspar
	=\argmin\limits_{\vtaupar,\vupspar\in\R_{\ge0}^{2^k}}\sum_{t=1}^T\|\*A_t(\vtaupar)(\@{\hat\theta}(\vtaupar,\vupspar)-\*y_t)\|_{\@\Sigma_t^{-1}}^2+2\Tr\left(\*A_t(\vtaupar)(\*M_t(\vtaupar,\vupspar)-\*I_d)\right)$\\
	\smallskip
	\tcp{return an estimate of the group means of each task $t$ using MAP}
	\For{task $t\in[T]$}{
		\KwOut{$\*y_t+\*A_t(\hvtaupar)(\@{\hat\theta}(\hvtaupar,\hvupspar)-\*y_t)$}
	}
	\caption{\label{alg:mt-suremap}
		Multi-task SureMap.
	}
\end{algorithm}

Here we note additional computational details.

\subsection{Nonnegativity of SureMap}
In both the single and multi-task cases there is no guarantee that $\@\theta$ is in the convex hull of the values in $\*y$ or $\{\*y_t\}_{t=1}^T$, and in fact it can be negative.
Since the quantities we are estimating are usually nonnegative, in practice we do a post-hoc correction forcing $\@\theta$ to be nonnegative.

\subsection{Optimization of SureMap}\label{app:cd}

Both the single-task and multi-task variants of SureMap require solving optimization problems, the former over
$\@\tau^2=(\tau^2_A)_{A\subseteq[k]}$~(Eq.~\ref{eq:st:argmin}) and the latter also over
$\@\upsilon^2=(\upsilon^2_A)_{A\subseteq[k]}$~(Eq.~\ref{eq:mt:argmin}), both of which are vectors in $\R_{\ge0}^{2^k}$.
We find that both problems can be quickly and efficiently optimized using L-BFGS-B over the entire domain $\R_{\ge0}^{2^k}$ and $\R_{\ge0}^{2\cdot2^k}$, respectively,
using the default settings provided in its \texttt{SciPy} implementation.\footnote{\url{https://docs.scipy.org/doc/scipy/reference/optimize.minimize-lbfgsb.html}}
To initialize the algorithm we set all entries of $\@\tau^2$ and $\@\upsilon^2$ to zero except those corresponding to the entire set $\!k$, which we set to one.\footnote{This initialization corresponds to setting the prior covariance to be the $d\times d$ identity.}
In the remainder of this section, we describe how to compute gradients (passed to L-BFGS-B) of the multi-task SureMap objective
\begin{equation}
	dT+\sum_{t=1}^T\E_t\left\|\*A_t\left(\*y_t-\sum_{s=1}^T\*M_s\*y_s\right)\right\|_{\@\Sigma_t^{-1}}^2+2\Tr(\*A_t\*M_t-\*A_t)
\end{equation}
where the matrices $\*M_t$ differ based on whether we are using the SureSolve or MetaMap variant.
The gradients are taken w.r.t. the tuning parameters, which are the coefficients $\tau_1^2,\dots,\tau_{2^k}^2$ used to define the prior covariance $\@\Lambda=\sum_{i=1}^{2^k}\tau_i^2\*U_i\*U_i^\top$ and, in the second case, the coefficients $\upsilon_1^2,\dots,\upsilon_{2^k}^2$ used to define the meta-prior covariance $\@\Gamma=\sum_{i=1}^{2^k}\upsilon_i^2\*U_i\*U_i^\top$.\footnote{We use indices $i$ instead of subsets $A$ here for simplicity.}
Noting that $\*A_t
=(\@\Lambda+\@\Sigma_t^{-1})^{-1}\@\Lambda^{-1}
=(\*I_d+\@\Lambda\@\Sigma_t^{-1})^{-1}$ and $(\@\Lambda+\@\Sigma_t)^{-1}
=\@\Lambda^{-1}(\*I_d+\@\Sigma_t\@\Lambda^{-1})^{-1}
=\@\Sigma_t^{-1}(\*I_d+\@\Lambda\@\Sigma_t^{-1})^{-1}
=\@\Sigma_t^{-1}\*A_t$, we have the derivative $\partial_i\*A_t
=-\*A_t\*U_i\*U_i^\top\@\Sigma_t^{-1}\*A_t$ w.r.t. $\tau_i^2$, which yields $\partial_i\Tr(\*A_t)
=\Tr(-\*A_t\*U_i\*U_i^\top\@\Sigma_t^{-1}\*A_t)$ and
\begin{align}
	\begin{split}
		\partial_i\|\*A_t(\@\theta-\*y_t)\|_{\@\Sigma_t^{-1}}^2
		&=2(\@\theta-\*y_t)^\top\*A_t^\top\@\Sigma_t^{-1}\partial_i\*A_t(\@\theta-\*y_t)\\
		&=-2(\@\theta-\*y_t)^\top\*A_t^\top\@\Sigma_t^{-1}\*A_t\*U_i\*U_i^\top\@\Sigma_t^{-1}\*A_t(\@\theta-\*y_t)
	\end{split}
\end{align}

\subsubsection{SureSolve}
First note that
\begin{align}
	\begin{split}
		\partial_i[\*A_t^\top\@\Sigma_t^{-1}\*A_t]
		=\partial_i[\@\Sigma_t^{-1}\*A_t^2]
		&=\@\Sigma_t^{-1}(\*A_t\partial_i\*A_t+\partial_i\*A_t\*A_t)\\
		&=-\@\Sigma_t^{-1}(\*A_t^2+\*A_t)\*U_i\*U_i^\top\@\Sigma_t^{-1}(\*A_t^2+\*A_t)
	\end{split}
\end{align}
where the first step follows by $\@\Sigma_t^{-1}\*A_t
=\@\Sigma_t^{-1}(\*I_d+\@\Lambda\@\Sigma_t^{-1})^{-1}
=(\*I_d+\@\Sigma_t^{-1}\@\Lambda)^{-1}\@\Sigma_t^{-1}
=\*A_t^\top\@\Sigma_t^{-1}$.
Then for $\*M_t=\left(\sum_{s=1}^T\*A_s^\top\@\Sigma_s^{-1}\*A_s\right)^{-1}\*A_t^\top\@\Sigma_t^{-1}\*A_t$ we have
\begin{align}
	\begin{split}
		\partial_i\*M_t
		&=\left(\sum_{s=1}^T\*A_s^\top\@\Sigma_s^{-1}\*A_s\right)^{-1}\partial_i[\*A_t^\top\@\Sigma_t^{-1}\*A_t]-\left(\sum_{s=1}^T\*A_s^\top\@\Sigma_s^{-1}\*A_s\right)^{-1}\sum_{s=1}^T\partial_i[\*A_s^\top\@\Sigma_s^{-1}\*A_s]\*M_t\\
		&=-2\left(\sum_{s=1}^T\*A_s^\top\@\Sigma_s^{-1}\*A_s\right)^{-1}\@\Sigma_t^{-1}(\*A_t^2+\*A_t)\*U_i\*U_i^\top\@\Sigma_t^{-1}(\*A_t^2+\*A_t)\\
		&\quad+2\left(\sum_{s=1}^T\*A_s^\top\@\Sigma_s^{-1}\*A_s\right)^{-1}\sum_{s=1}^T\@\Sigma_s^{-1}(\*A_s^2+\*A_s)\*U_i\*U_i^\top\@\Sigma_s^{-1}(\*A_s^2+\*A_s)\*M_t\\
		&=-2\*B_{i,t}+2\sum_{s=1}^T\*B_{i,s}\*M_t
	\end{split}
\end{align}
where $\*B_{i,t}=\left(\sum_{s=1}^T\*A_s^\top\@\Sigma_s^{-1}\*A_s\right)^{-1}\@\Sigma_t^{-1}(\*A_t^2+\*A_t)\*U_i\*U_i^\top\@\Sigma_t^{-1}(\*A_t^2+\*A_t)$.
We can then compute
\begin{align}
	\begin{split}
		\partial_i\Tr(\*A_t\*M_t)
		&=-2\Tr\left(\*A_t\left(\*B_{i,t}-\sum_{s=1}^T\*B_{i,s}\*M_t\right)\right)-\Tr(\*A_t\*U_i\*U_i^\top\@\Sigma_t^{-1}\*A_t\*M_t)\\
		&=-2\Tr\left(\*A_t\left(\*B_{i,t}+\left(\*U_i\*U_i^\top\@\Sigma_t^{-1}\*A_t-\sum_{s=1}^T\*B_{i,s}\right)\*M_t\right)\right)
	\end{split}
\end{align}
and
\begin{align}
	\begin{split}
		\partial_i\left\|\*A_t\left(\*y_t-\sum_{s=1}^T\*M_s\*y_s\right)\right\|_{\@\Sigma_t^{-1}}^2
		&=\partial_i\sum_{s=1}^T\sum_{r=1}^T(\*y_t-\*M_s\*y_s)^\top\*A_t^\top\@\Sigma_t^{-1}\*A_t(\*y_t-\*M_r\*y_r)\\
		&=-2\sum_{s=1}^T(\partial_i\*M_s\*y_s)^\top\*A_t^\top\@\Sigma_t^{-1}\*A_t(\*y_t-\@{\hat\theta})\\
		&\quad+2(\*y_t-\@{\hat\theta})^\top\*A_t^\top\@\Sigma_t^{-1}\partial_i\*A_t(\*y_t-\@{\hat\theta})\\
		&=4\sum_{s=1}^T\*y_s^\top\*B_{i,s}^\top\*A_t^\top\@\Sigma_t^{-1}\*A_t(\*y_t-\@{\hat\theta})\\
		&\quad-4\@{\hat\theta}^\top\sum_{s=1}^T\*B_{i,s}^\top\*A_t^\top\@\Sigma_t^{-1}\*A_t(\*y_t-\@{\hat\theta})\\
		&\quad-2(\*y_t-\@{\hat\theta})^\top\*A_t^\top\@\Sigma_t^{-1}\*A_t\*U_i\*U_i^\top\@\Sigma_t^{-1}\*A_t(\*y_t-\@{\hat\theta})\\
		&=4\sum_{s=1}^T(\*y_s-\@{\hat\theta})^\top\*B_{i,s}^\top\*A_t^\top\@\Sigma_t^{-1}\*A_t(\*y_t-\@{\hat\theta})\\
		&\quad-2(\*y_t-\@{\hat\theta})^\top\*A_t^\top\@\Sigma_t^{-1}\*A_t\*U_i\*U_i^\top\@\Sigma_t^{-1}\*A_t(\*y_t-\@{\hat\theta})
	\end{split}
\end{align}

\subsection{MetaMap}
Note that $(\@\Lambda+\@\Sigma_t)^{-1}
=\@\Sigma_t^{-1}(\@\Lambda\@\Sigma_t^{-1}+\*I_d)
=\@\Sigma_t^{-1}\*A_t$ so we can write the matrices representing the estimator as $\*M_t
=(\@\Gamma^{-1}+\@\Sigma_\@\Lambda^{-1})^{-1}(\@\Lambda+\@\Sigma_t)^{-1}
=(\@\Gamma^{-1}+\sum_{s=1}^T\@\Sigma_s^{-1}\*A_s)^{-1}\@\Sigma_t^{-1}\*A_t$.
Thus
\begin{align}
	\begin{split}
		\partial_i\*M_t
		&=(\@\Gamma^{-1}+\@\Sigma_\@\Lambda^{-1})^{-1}\@\Sigma_t^{-1}\partial_i\*A_t-(\@\Gamma^{-1}+\@\Sigma_\@\Lambda^{-1})^{-1}\sum_{s=1}^T\@\Sigma_s^{-1}\partial_i\*A_s\*M_t\\
		&=-\*M_t\*U_i\*U_i^\top\@\Sigma_t^{-1}\*A_t+\sum_{s=1}^T\*M_s\*U_i\*U_i^\top\@\Sigma_s^{-1}\*A_s\*M_t
	\end{split}
\end{align}
We can then compute
\begin{align}
	\begin{split}
		\partial_i\Tr(\*A_t\*M_t)
		&=-\Tr(\*A_t(\*M_t\*A_t+\*A_t\*M_t)\*U_i\*U_i^\top\@\Sigma_t^{-1})+\sum_{s=1}^T\Tr(\*A_t\*M_s\*U_i\*U_i^\top\@\Sigma_s^{-1}\*A_s\*M_t)
	\end{split}
\end{align}
and
\begin{align}
	\begin{split}
		\partial_i\left\|\*A_t\left(\*y_t-\sum_{s=1}^T\*M_s\*y_s\right)\right\|_{\@\Sigma_t^{-1}}^2
		&=\partial_i\sum_{s=1}^T\sum_{r=1}^T(\*y_t-\*M_s\*y_s)^\top\*A_t^\top\@\Sigma_t^{-1}\*A_t(\*y_t-\*M_r\*y_r)\\
		&=-2\sum_{s=1}^T(\partial_i\*M_s\*y_s)^\top\*A_t^\top\@\Sigma_t^{-1}\*A_t(\*y_t-\@{\hat\theta}_\@\Gamma)\\
		&\quad+2(\*y_t-\@{\hat\theta}_\@\Gamma)^\top\*A_t^\top\@\Sigma_t^{-1}\partial_i\*A_t(\*y_t-\@{\hat\theta}_\@\Gamma)\\
		&=2\sum_{s=1}^T\*y_s^\top(\*M_s\*U_i\*U_i^\top\@\Sigma_s^{-1}\*A_s)^\top\*A_t\@\Sigma_t^{-1}\*A_t(\*y_t-\@{\hat\theta}_\@\Gamma)\\
		&\quad-2\sum_{s=1}^T\sum_{r=1}^T\*y_s^\top(\*M_r\*U_i\*U_i^\top\@\Sigma_r^{-1}\*A_r\*M_s)^\top\*A_t\@\Sigma_t^{-1}\*A_t(\*y_t-\@{\hat\theta}_\@\Gamma)\\
		&\quad-2(\*y_t-\@{\hat\theta}_\@\Gamma)^\top\*A_t^\top\@\Sigma_t^{-1}\*A_t\*U_i\*U_i^\top\@\Sigma_t^{-1}\*A_t(\*y_t-\@{\hat\theta}_\@\Gamma)\\
		&=2\sum_{s=1}^T(\*y_s-\@{\hat\theta}_\@\Gamma)^\top\*A_s^\top\@\Sigma_s^{-1}\*U_i\*U_i^\top\*M_s^\top\*A_t^\top\@\Sigma_t^{-1}\*A_t(\*y_t-\@{\hat\theta}_\@\Gamma)\\
		&\quad-2(\*y_t-\@{\hat\theta}_\@\Gamma)^\top\*A_t^\top\@\Sigma_t^{-1}\*A_t\*U_i\*U_i^\top\@\Sigma_t^{-1}\*A_t(\*y_t-\@{\hat\theta}_\@\Gamma)
	\end{split}
\end{align}
Lastly, note that $\*M_t=(\@\Gamma^{-1}+\@\Sigma_\@\Lambda^{-1})^{-1}\@\Sigma_t^{-1}\*A_t=(\*I_d+\@\Gamma\@\Sigma_\@\Lambda^{-1})^{-1}\@\Gamma\@\Sigma_t^{-1}\*A_t$ and redefine the derivative $\partial_i$ to be w.r.t. $\upsilon_i^2$, so that
\begin{equation}
	\partial_i\*M_t
	=-(\*I_d+\@\Gamma\@\Sigma_\@\Lambda^{-1})^{-1}\*U_i\*U_i^\top\@\Sigma_\@\Lambda^{-1}\*M_t+(\*I_d+\@\Gamma\@\Sigma_\@\Lambda^{-1})^{-1}\*U_i\*U_i^\top\@\Sigma_t^{-1}\*A_t
\end{equation}
Then we have $\partial_i\Tr(\*A_t\*M_t)
=\Tr(\*A_t(\*I_d+\@\Gamma\@\Sigma_\@\Lambda^{-1})^{-1}\*U_i\*U_i^\top(\@\Sigma_t^{-1}\*A_t-\@\Sigma_\@\Lambda^{-1}\*M_t))$ and
\begin{align}
	\begin{split}
		\partial_i&\left\|\*A_t\left(\*y_t-\@{\hat\theta}_\@\Gamma\right)\right\|_{\@\Sigma_t^{-1}}^2\\
		&=-2\sum_{s=1}^T(\partial_i\*M_s\*y_s)^\top\*A_t^\top\@\Sigma_t^{-1}\*A_t(\*y_t-\@{\hat\theta}_\@\Gamma)\\
		&=2\left(\@\Sigma_\@\Lambda^{-1}\@{\hat\theta}_\@\Gamma-\sum_{s=1}^T\@\Sigma_s^{-1}\*A_s\*y_s\right)^\top\*U_i\*U_i^\top(\*I_d+\@\Gamma\@\Sigma_\@\Lambda^{-1})^{-\top}\*A_t^\top\@\Sigma_t^{-1}\*A_t(\*y_t-\@{\hat\theta}_\@\Gamma)
	\end{split}
\end{align}

\subsection{Handling groups with no data}

It is frequently the case that a specific group $g$ may not have any examples, i.e., $n_g=0$, and so we cannot define $\Sigma_{g,g}=\sigma^2/n_g$.
At the same time, we may need to handle the dimension associated with this group, either because we still need to report a value for it or because we are in the multi-task setting and other tasks do have examples of that group that may allow us to get an estimate.
To handle this issue, in all computations we only use the precision matrix $\@\Sigma^{-1}$, which can be easily defined in the case of $n_g=0$ using $\Sigma_{g,g}^{-1}=n_g/\sigma^2=0$.
Note in particular that $(\@\Lambda+\@\Sigma)^{-1}=\@\Sigma^{-1}(\@\Lambda\@\Sigma^{-1}+\*I_d)^{-1}$.

\subsection{Handling near-singular covariances}

Since we would like to optimize over the entire domain $\@\tau^2\in\R_{\ge0}^{2^k}$ but $\@\Lambda(\*0_{2^k})=\*0_{d\times d}$ is singular, we avoid inverting prior covariance matrices (i.e. $\@\Lambda$, $\@\Gamma$) in all computations.
Note in particular that $\*A_t
=(\@\Lambda^{-1}+\@\Sigma^{-1})^{-1}\@\Lambda^{-1}
=(\*I_d+\@\Lambda\@\Sigma^{-1})^{-1}$.

\newpage
\subsection{Handling group variances for AUC}\label{app:auc}

While obtaining unbiased variance estimates for averages is straightforward, it is more involved for multi-point statistics such as AUC.
For simplicity we just use $(n_g+1)/(12n_gn_g^{(0)}n_g^{(1)})$, where $n_g^{(i)}$ is the number of members of group $g$ with label $i$;
this estimate is derived from the variance estimate of the related  Mann-Whitney $U$-statistic~\citep{siegel1956nonparametric}.
However, future work may consider improvements based on more complicated approaches~\citep{cortes2003auc,wang2020efficient}, or using bootstrapping techniques as is done by \citet{herlihy2024structured} for structured regression.\looseness-1 

\section{Derivation of SureMap estimators via ridge regression}\label{app:structreg}

\newcommand{\cJ}{\mathcal{J}}
\newcommand{\cG}{\mathcal{G}}
\newcommand{\cN}{\mathcal{N}}
\newcommand{\cT}{\mathcal{T}}
\newcommand{\cS}{\mathcal{S}}

\newcommand{\vg}{\mathbf{g}}
\newcommand{\vy}{\mathbf{y}}
\newcommand{\vz}{\mathbf{z}}

\newcommand{\vmu}{\boldsymbol{\mu}}
\newcommand{\vbeta}{\boldsymbol{\beta}}
\newcommand{\hvbeta}{\boldsymbol{\hat{\beta}}}
\newcommand{\hvtheta}{\boldsymbol{\hat{\theta}}}
\newcommand{\zero}{\boldsymbol{0}}
\newcommand{\hvmu}{\boldsymbol{\hat{\mu}}}
\newcommand{\vtau}{\boldsymbol{\tau}}
\newcommand{\vphi}{\boldsymbol{\phi}}
\newcommand{\vupsilon}{\boldsymbol{\upsilon}}

\newcommand{\map}{\textup{MAP}}
\newcommand{\glob}{\textsf{glob}}

\newcommand{\mPhi}{\mathbf{\Phi}}
\newcommand{\mSigma}{\mathbf{\Sigma}}
\newcommand{\mGamma}{\mathbf{\Gamma}}
\newcommand{\mA}{\mathbf{A}}
\newcommand{\mI}{\mathbf{I}}
\newcommand{\mB}{\mathbf{B}}
\newcommand{\mX}{\mathbf{X}}
\newcommand{\mR}{\mathbf{R}}
\newcommand{\mH}{\mathbf{H}}
\newcommand{\mK}{\mathbf{K}}
\newcommand{\mU}{\mathbf{U}}
\newcommand{\mV}{\mathbf{V}}
\newcommand{\mLambda}{\mathbf{\Lambda}}

\newcommand{\ttt}{\tilde{t}}
\newcommand{\tvg}{\tilde{\vg}}
\newcommand{\vc}{\mathbf{c}}

\newcommand{\trans}{\top}
\newcommand{\vdotsX}{\vphantom{\Big\vert}\smash{\vdots}}
\newcommand{\ddotsX}{\vphantom{\big\vert}\smash{\ddots}}

In this appendix we show that $\@{\hat\mu}^\textrm{SM}$ and $\@{\hat\mu}_t^\textrm{SM}$ can be derived as ridge regression estimators.

Similar to the notation in Appendix~\ref{app:map},
we consider $k$ sensitive attributes (like sex, age, etc.) indexed by $a\in[k]$. The $a$th sensitive attribute is denoted $g_a$ and takes values in $[d_a]$. The joint sensitive attribute $\vg$
takes values in $\cG=[d_1]\times[d_2]\times\dotsb\times[d_k]$ with the cardinality $d=\card{\cG}=d_1d_2\dotsm d_k$. For $A\subseteq[k]$, write $\vg_A$ for the tuple $(g_a)_{a\in A}$.

\subsection{Single-task regression model}

Each joint attribute $\vg\in\cG$ identifies an intersectional group (of order $k$). We seek to jointly fit means of all these groups, represented as a vector $\vmu\in\R^{\card{\cG}}$, based on the vector of empirical means $\vy\in\R^{\card{\cG}}$. We posit a regression model
\[
  \vmu=\mPhi\vbeta
\]
where $\mPhi\in\R^{\card{\cG}\times\card{\cJ}}$ is a feature matrix and $\vbeta\in\R^{\card{\cJ}}$ is a vector of regression coefficients.
The columns of $\mPhi$ are referred to as features
and indexed by $j\in\cJ$, where $\cJ$ is some index set.
We assume a Gaussian prior on the parameter $\vbeta\sim\cN(\zero,\mK)$
and a Gaussian distribution over observation errors, so $\vy\sim\cN(\mPhi\vbeta,\mSigma)$,
where $\mK$ and $\mSigma$ are, respectively, the prior covariance matrix and error covariance matrix.

The error covariance matrix $\mSigma$ is assumed fixed and positive definite, the prior covariance matrix $\mK$ is viewed as a hyperparameter (with a specific structure to reduce its dimension); for simplicity, we assume that $\mK$ is positive definite (but that assumption is not necessary).
Any generic forms of $\mPhi$, $\mK$ and $\mSigma$ can be considered. Here we exhibit a specific form of $\mPhi$ and $\mK$ under which the MAP regression estimator is the same as $\@{\hat\mu}^\textrm{SM}$, when provided with the same error covariance matrix $\mSigma$.

We consider a structured form of matrix $\mPhi$, with features being indicators of tuples of sensitive attribute values. The features are indexed by $j\in\cJ$, where
\[\textstyle
  \cJ=\bigSet{(A,\vc):\:A\subseteq[k],\,\vc\in\prod_{a\in A}[d_a]},
\]
and defined as
\begin{equation}
\label{eq:Phi}
  \varPhi_{\vg,(A,\vc)}=1\set{\vg_A=\vc}.
\end{equation}
We will also consider subsets of features that focus on a specific subset of sensitive attributes $A\subseteq [k]$,
\[\textstyle
  \cJ_A=\bigSet{(A,\vc):\:\vc\in\prod_{a\in A}[d_a]},
\]
and write $\mPhi_{\bk A}\in\R^{\card{\cG}\times\card{\cJ_A}}$ for the submatrix composed of features indexed by $j\in\cJ_A$.

The prior matrix $\mK$ is diagonal, specified using a set of hyperparameters $\vtaupar=(\taupar_A)_{A\subseteq[k]}$, with diagonal entries
\begin{equation}
\label{eq:K}
  K_{(A,\vc),(A,\vc)}=\tau^2_A.
\end{equation}

The MAP solution under the model described above is obtained by solving
\[
  \hvbeta=
  \argmin_{\vbeta}
  \BigBracks{
  (\vy-\mPhi\vbeta)^{\bk\trans}\mSigma^{-1}(\vy-\mPhi\vbeta)
  +\vbeta^\trans\mK^{-1}\vbeta
  },
\]
which is a weighted ridge regression problem.
Setting the gradient to zero, the solution must satisfy
\[
  -2\mPhi^{\bk\trans}\mSigma^{-1}(\vy-\mPhi\vbeta)+2\mK^{-1}\vbeta=\zero.
\]
Hence,
\[
  \hvbeta
  =(\mPhi^{\bk\trans}\mSigma^{-1}\mPhi+\mK^{-1})^{-1}\mPhi^{\bk\trans}\mSigma^{-1}\vy.
\]
This yields a regression-based estimator
\begin{equation}
\label{eq:reg}
  \hvmu^\textrm{reg}
  = \mPhi\hvbeta
  =
  \mPhi(\mPhi^{\bk\trans}\mSigma^{-1}\mPhi + \mK^{-1})^{-1}
    \mPhi^{\bk\trans}\mSigma^{-1}\vy.
\end{equation}

In contrast, by Eqs.~\eqref{eq:st} and~\eqref{eq:MAP}, the $\@{\hat\mu}^\textrm{SM}$ estimator is obtained as
\[
  \@{\hat\mu}^\textrm{SM}=(\@\Lambda^{-1}+\@\Sigma^{-1})^{-1}\@\Sigma^{-1}\*y,
\]
where the matrix $\mLambda$ (following Eq.~\ref{eq:prior} in Appendix~\ref{app:formal}) depends on the hyperparameter vector $\vtaupar$ as
\[
  \mLambda
  =
  \sum_{A\subseteq[k]}\tau^2_A\mU_A\mU_A^\trans,
\]
where $\mU_A$ is precisely the submatrix $\mPhi_{\bk A}$ defined above. Thus, writing $\vphi_{A,\vc}\in\R^{\card{\cG}}$ for the column of $\mPhi$ indexed by $(A,\vc)$, we obtain
\[
  \mLambda
  =
  \sum_{A\subseteq[k]}\tau^2_A\mPhi_{\bk A}\mPhi_{\bk A}^{\bk\trans}
  =
  \sum_{A\subseteq[k]}\;
  \sum_{\vc\in\prod_{a\in A}[d_a]}\tau^2_A\vphi_{A,\vc}\vphi_{A,\vc}^{\trans}
  =\mPhi\mK\mPhi^{\bk\trans}.
\]
Hence, the $\@{\hat\mu}^\textrm{SM}$ estimator can be rewritten as
\begin{equation}
\label{eq:reg:SM}
  \@{\hat\mu}^\textrm{SM}=\bigParens{(\mPhi\mK\mPhi^{\bk\trans})^{-1}+\@\Sigma^{-1}}^{-1}\@\Sigma^{-1}\*y.
\end{equation}

\begin{Thm}
\label{clm:st}
With $\mPhi$ and $\mK$ defined as above (Eqs.~\ref{eq:Phi} and~\ref{eq:K}), we
have $\hvmu^\textup{reg}=\@{\hat\mu}^\textup{SM}$.
\end{Thm}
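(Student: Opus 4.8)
The plan is to reduce the claim to a single matrix identity and then recognize it as the textbook ``weight-space versus function-space'' duality for ridge regression. By Eq.~\eqref{eq:reg} the regression estimator is $\hvmu^\textup{reg}=\mPhi(\mPhi^{\bk\trans}\mSigma^{-1}\mPhi+\mK^{-1})^{-1}\mPhi^{\bk\trans}\mSigma^{-1}\vy$, while by Eqs.~\eqref{eq:st} and~\eqref{eq:MAP}, together with the identity $\mLambda=\mPhi\mK\mPhi^{\bk\trans}$ established just above the theorem, $\@{\hat\mu}^\textup{SM}=(\mLambda^{-1}+\mSigma^{-1})^{-1}\mSigma^{-1}\vy$ (Eq.~\eqref{eq:reg:SM}). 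Since $\vy$ is arbitrary and $\mSigma\succ\*0$, it suffices to show that the two matrices acting on $\vy$ are equal, or equivalently, right-multiplying by $\mSigma$, that
\[
  \mPhi(\mPhi^{\bk\trans}\mSigma^{-1}\mPhi+\mK^{-1})^{-1}\mPhi^{\bk\trans}
  =(\mLambda^{-1}+\mSigma^{-1})^{-1},
  \qquad\text{where }\mLambda=\mPhi\mK\mPhi^{\bk\trans}.
\]

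First I would rewrite the left-hand side in ``function space.'' Factoring $\mK^{-1}$ out of the inner inverse gives $(\mPhi^{\bk\trans}\mSigma^{-1}\mPhi+\mK^{-1})^{-1}=(\mI+\mK\mPhi^{\bk\trans}\mSigma^{-1}\mPhi)^{-1}\mK$, so the left-hand side is $\mPhi(\mI+\mK\mPhi^{\bk\trans}\mSigma^{-1}\mPhi)^{-1}\mK\mPhi^{\bk\trans}$. Applying the push-through identity $(\mI+\mU\mV)^{-1}\mU=\mU(\mI+\mV\mU)^{-1}$ with $\mU=\mPhi$ and $\mV=\mK\mPhi^{\bk\trans}\mSigma^{-1}$ moves $\mPhi$ across the inverse and turns this into $(\mI+\mPhi\mK\mPhi^{\bk\trans}\mSigma^{-1})^{-1}\mPhi\mK\mPhi^{\bk\trans}=(\mI+\mLambda\mSigma^{-1})^{-1}\mLambda=\mSigma(\mSigma+\mLambda)^{-1}\mLambda$, using $\mI+\mLambda\mSigma^{-1}=(\mSigma+\mLambda)\mSigma^{-1}$. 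For the right-hand side I would apply the inversion identity $(\mA^{-1}+\mB^{-1})^{-1}=\mA(\mA+\mB)^{-1}\mB$ with $\mA=\mSigma$ and $\mB=\mLambda$, which yields exactly $\mSigma(\mSigma+\mLambda)^{-1}\mLambda$. The two sides coincide, proving the theorem.

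There is no deep obstacle here; the one place to be careful is the invertibility bookkeeping needed to license the factorizations and the push-through step. The matrix $\mLambda=\mPhi\mK\mPhi^{\bk\trans}$ is invertible under the stated hypotheses because $\mK\succ\*0$ and $\mPhi$ has full row rank~$d$: by Eq.~\eqref{eq:Phi}, the feature block indexed by $A=[k]$ is, up to reordering of $\cG$, the $d\times d$ identity, so the columns of $\mPhi$ already span $\R^d$. This makes Eq.~\eqref{eq:reg:SM} well-defined and justifies the factorizations $(\mPhi^{\bk\trans}\mSigma^{-1}\mPhi+\mK^{-1})^{-1}=(\mI+\mK\mPhi^{\bk\trans}\mSigma^{-1}\mPhi)^{-1}\mK$ and $\mI+\mLambda\mSigma^{-1}=(\mSigma+\mLambda)\mSigma^{-1}$, after which the push-through identity applies verbatim since $\mI+\mK\mPhi^{\bk\trans}\mSigma^{-1}\mPhi$ and $\mI+\mLambda\mSigma^{-1}$ are both invertible. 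As an independent sanity check on the statement, one can also argue probabilistically: marginalizing $\vbeta$ out of the regression model leaves $\vmu=\mPhi\vbeta\sim\cN(\zero,\mLambda)$ and $\vy=\vmu+\boldsymbol{\varepsilon}$ with $\boldsymbol{\varepsilon}\sim\cN(\zero,\mSigma)$ independent of $\vmu$, so the posterior mean of $\vmu$ is exactly $\@{\hat\mu}^\textup{SM}$, while $\mPhi\hvbeta=\mPhi\,\E[\vbeta\mid\vy]=\E[\vmu\mid\vy]$ equals the same quantity by linearity of conditional expectation.
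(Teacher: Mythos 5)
Your proposal is correct. The reduction is the same as the paper's --- both arguments boil down to verifying the single matrix identity $\mPhi(\mPhi^{\bk\trans}\mSigma^{-1}\mPhi+\mK^{-1})^{-1}\mPhi^{\bk\trans}=\bigParens{(\mPhi\mK\mPhi^{\bk\trans})^{-1}+\mSigma^{-1}}^{-1}$ --- but you verify it differently: the paper applies the symmetric Sherman--Morrison--Woodbury formula twice (once to expand the inner inverse, once to recollect the result), whereas you factor out $\mK$, apply the push-through identity $(\mI+\mU\mV)^{-1}\mU=\mU(\mI+\mV\mU)^{-1}$, and show both sides equal the common form $\mSigma(\mSigma+\mLambda)^{-1}\mLambda$. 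The two verifications are interchangeable in difficulty, though yours has the minor advantage of never writing $\mLambda^{-1}$ until the final comparison. You also supply two things the paper omits: an explicit argument that $\mLambda=\mPhi\mK\mPhi^{\bk\trans}$ is invertible (via the observation that the $A=[k]$ feature block makes $\mPhi$ full row rank, so $\mK\succ\*0$ suffices), and a probabilistic derivation --- marginalizing $\vbeta$ to get $\vmu\sim\cN(\zero,\mLambda)$ and identifying both estimators with $\E[\vmu\mid\vy]$ --- which is a genuinely different and arguably more illuminating route, and is essentially the conceptual reason the theorem is true. Both additions are sound.
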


In the proof we use a variant of Sherman--Morrison--Woodbury formula \citep[Eq.~0.7.4.1]{horn2013matrix}, specialized to symmetric positive definite matrices:

\begin{Prp}[Symmetric SMW Formula]
\label{prp:smw}
Let $\mA\in\R^{n\times n}$ and $\mR\in\R^{m\times m}$ be symmetric positive definite matrices and let $\mX\in\R^{n\times m}$. Then
\[
 (\mA+\mX\mR\mX^\trans)^{-1}
 =
 \mA^{-1} - \mA^{-1}\mX(\mR^{-1}+\mX^{\trans}\mA^{-1}\mX)^{-1}\mX^{\trans}\mA^{-1}.
\]
\end{Prp}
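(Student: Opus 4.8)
The plan is to prove the identity by \emph{direct verification}: I will exhibit the claimed right-hand side as a genuine inverse of $\mA+\mX\mR\mX^{\trans}$ rather than invoking the general Sherman--Morrison--Woodbury formula (of which this is the special case $U=\mX$, $C=\mR$, $V=\mX^{\trans}$). Verification is especially clean here because symmetric positive definiteness of $\mA$ and $\mR$ guarantees that every inverse appearing in the statement is well-defined, so there are no invertibility side conditions to track.

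First I would settle well-definedness. Since $\mA\succ\*0$ and $\mR\succ\*0$, the inverses $\mA^{-1}$ and $\mR^{-1}$ exist and are themselves symmetric positive definite. The inner matrix $\mathbf{S}:=\mR^{-1}+\mX^{\trans}\mA^{-1}\mX$ is the sum of a symmetric positive definite matrix and the symmetric positive semidefinite matrix $\mX^{\trans}\mA^{-1}\mX$, hence $\mathbf{S}\succ\*0$ and $\mathbf{S}^{-1}$ exists. Finally $\mA+\mX\mR\mX^{\trans}\succ\*0$ as a sum of a positive definite and a positive semidefinite matrix, so it is invertible with a unique inverse; consequently it suffices to exhibit a single one-sided inverse.

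The core computation left-multiplies the claimed right-hand side by $\mA+\mX\mR\mX^{\trans}$. Expanding the four resulting products and cancelling $\mA\mA^{-1}=\mI_n$, every surviving term carries a factor $\mX$ on the left and $\mX^{\trans}\mA^{-1}$ on the right, so the product takes the form $\mI_n+\mX\mathbf{B}\mX^{\trans}\mA^{-1}$ with
\[
  \mathbf{B}=-\mathbf{S}^{-1}+\mR-\mR\,(\mX^{\trans}\mA^{-1}\mX)\,\mathbf{S}^{-1}.
\]
The crux is the algebraic identity $\mX^{\trans}\mA^{-1}\mX=\mathbf{S}-\mR^{-1}$, immediate from the definition of $\mathbf{S}$. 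Substituting it yields $\mR\,(\mX^{\trans}\mA^{-1}\mX)\,\mathbf{S}^{-1}=(\mR\mathbf{S}-\mI_m)\mathbf{S}^{-1}=\mR-\mathbf{S}^{-1}$, whence $\mathbf{B}=-\mathbf{S}^{-1}+\mR-(\mR-\mathbf{S}^{-1})=\*0$. Thus the product equals $\mI_n$, and since $\mA+\mX\mR\mX^{\trans}$ is invertible, a right inverse is \emph{the} inverse, which is exactly the asserted formula.

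I expect no genuine obstacle: the argument is a short but careful algebraic manipulation. The one step demanding attention is the cancellation in $\mathbf{B}$, which hinges on rewriting $\mX^{\trans}\mA^{-1}\mX$ through $\mathbf{S}$; naming $\mathbf{S}$ explicitly is what keeps this transparent and also makes the role of positive definiteness (guaranteeing $\mathbf{S}^{-1}$) visible. An equally short alternative would verify the product in the reverse order, or simply specialize Horn--Johnson directly, but the self-contained verification above is the cleanest and exposes precisely where the hypotheses are used.
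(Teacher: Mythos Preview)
Your proof is correct. The paper does not actually supply a proof of this proposition: it states the identity as a specialization of the Sherman--Morrison--Woodbury formula, cites \citet[Eq.~0.7.4.1]{horn2013matrix}, and moves on. Your direct verification---checking well-definedness via positive definiteness and then confirming $(\mA+\mX\mR\mX^{\trans})$ times the claimed expression equals $\mI_n$---is a clean, self-contained argument that fills in what the paper leaves to the reference. The key cancellation via $\mX^{\trans}\mA^{-1}\mX=\mathbf{S}-\mR^{-1}$ is handled correctly.
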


\begin{proof}[Proof of Theorem~\ref{clm:st}]
It suffices to show that
\[
  \mPhi(
    \mPhi^{\bk\trans}\mSigma^{-1}\mPhi + \mK^{-1}
  )^{-1}
  \mPhi^{\bk\trans}
  =
  \bigParens{(\mPhi\mK\mPhi^{\bk\trans})^{-1}+\@\Sigma^{-1}}^{-1}.
\]
We do this by direct calculation, using the symmetric SMW formula:
\begin{align*}
  \mPhi(\mPhi^{\bk\trans}\mSigma^{-1}\mPhi + \mK^{-1})^{-1}
    \mPhi^{\bk\trans}
&=
  \mPhi\BigBracks{
    \mK - \mK\mPhi^{\bk\trans}(\mSigma+\mPhi\mK\mPhi^{\bk\trans})^{-1}\mPhi\mK
  }
  \mPhi^{\bk\trans}
\\
&=
  (\mPhi\mK\mPhi^{\bk\trans})
  -
  (\mPhi\mK\mPhi^{\bk\trans})
  \BigParens{
    (\mSigma+(\mPhi\mK\mPhi^{\bk\trans})
  }^{-1}
  (\mPhi\mK\mPhi^{\bk\trans})
\\
&=
  \bigParens{(\mPhi\mK\mPhi^{\bk\trans})^{-1}+\@\Sigma^{-1}}^{-1}.
\end{align*}
The first equality follows by the SMW formula with $\mA=\mK^{-1}$, $\mR=\mSigma^{-1}$ and $\mX=\mPhi^{\bk\trans}$, the second by multiplying out the terms, and the third by the SMW formula with
$\mA^{-1}=\mPhi\mK\mPhi^{\bk\trans}$, $\mR^{-1}=\mSigma$ and $\mX$ equal to the $d\times d$ identity matrix.
\end{proof}

\subsection{Multi-task regression model}

Consider multi-task setting with tasks indexed by $t=1,\dotsc,T$. We write $\cT=[T]$ for the set of tasks. Multi-task setting can be reduced to single-task setting by viewing the task id as an additional sensitive attribute, and performing the same analysis as for the single-task setting, but with $\cG'=\cT\times\cG$, with the dimension $d'=\card{\cG'}=Td$.

Formally, we seek to fit $\vmu'\in\R^{\card{\cG'}}$ based on the vector of empirical means $\vy'\in\R^{\card{\cG'}}$. The entries of $\vmu'$ and $\vy'$ are denoted as $\mu'_{t,\vg}$ and $y'_{t,\vg}$ for the task $t$ and the intersectional group $\vg$. We denote task-specific slices of these vectors as $\vmu'_t=\vmu'_{\set{t}\times\cG}$ and $\vy'_t=\vy'_{\set{t}\times\cG}$.

Features are indexed by elements of $\cJ'=\cS\times\cJ$, where $\cS=\cT\cup\set{\glob}$,
so there are task-specific and global features. The feature matrix $\mPhi'\in\R^{\card{\cG'}\times\card{\cJ'}}$ uses the single-task feature matrix $\mPhi\in\R^{\card{\cG}\times\card{\cJ}}$ (defined in Eq.~\ref{eq:Phi}) as a building block. The matrix $\mPhi'$ has a block structure, with $\card{\cT}\times\card{\cS}$ blocks of size $\card{\cG}\times\card{\cJ}$ defined as
\[
  \mPhi'_{t,s}=\begin{cases}
    \mPhi
    &\text{if $s=t$ or $s=\glob$,}
  \\
    \zero_{\card{\cG}\times\card{\cJ}}
    &\text{otherwise.}
  \end{cases}
\]

We posit the regression model
\[
  \vmu'=\mPhi'\vbeta',
\]
where $\vbeta'\in\R^{\cJ'}$. With the definition of $\mPhi'$ as above, this boils down to
\[
  \vmu'_t=\mPhi(\vbeta'_t+\vbeta'_\glob)
  \qquad
  \text{for all $t\in\cT$.}
\]
As before, we assume a Gaussian prior and Gaussian errors,
$\vbeta'\sim\cN(\zero,\mK')$, $\vy'\sim\cN(\mPhi'\vbeta',\mSigma')$.

The error covariance $\mSigma'\in\R^{\card{\cG'}\times\card{\cG'}}$ has a block-diagonal form
with single-task error covariance matrices $\mSigma_t\in\R^{\card{\cG}\times\card{\cG}}$, for $t\in[T]$, along the diagonal.

We consider a structured form of the prior covariance matrix $\mK'$ specified by two vectors of hyperparameters: $\vtaupar=(\tau^2_A)_{A\subseteq[k]}$ and $\vupspar=(\upspar_A)_{A\subseteq[k]}$. The matrix $\mK'$ is diagonal and positive
definite, with entries
\[
  K'_{(s,A,\vc),(s,A,\vc)}=
  \begin{cases}
  \tau^2_A
    &\text{if $s\in\cT$,}
\\
  \upspar_A
    &\text{if $s=\glob$.}
  \end{cases}
\]
It can also be viewed as a block-diagonal matrix with $\card{\cS}$ diagonal blocks of size $\card{\cJ}\times\card{\cJ}$ defined as
\[
  \mK'_{s,s}=
  \begin{cases}
  \mK
    &\text{if $s\in\cT$,}
\\
  \mV
    &\text{if $s=\glob$,}
  \end{cases}
\]
where $\mK$ is the single-task prior matrix defined in Eq.~\eqref{eq:K}, and $\mV\in\R^{\card{\cJ}\times\card{\cJ}}$ is an analogous matrix based on the vector of hyperparameters $\vupspar$ rather than $\vtaupar$, with
diagonal entries
\begin{equation}
\label{eq:V}
  V_{(A,\vc),(A,\vc)}=\upspar_A.
\end{equation}

The multi-task regression-based estimator is obtained by solving the resulting MAP regression problem. Similarly to the single-task case, the estimator takes form
\begin{equation}
\label{eq:reg:mt}
  \hvmu'{\vphantom{\big\vert}}^\textup{reg}
=
  \mPhi'\BigParens{{\mPhi'}^\trans{\mSigma'}^{-1}{\mPhi'}
                   +{\mK'}^{-1}}^{-1}
  {\mPhi'}^\trans
  {\mSigma'}^{-1}\vy',
\end{equation}
with the individual task estimates denoted $\hvmu_{\mathrlap{t}}'{\vphantom{\big\vert}}^\textup{reg}$.

The multi-task SureMap estimator, introduced in \textsection\ref{sec:mt-suremap}, takes form
\begin{equation}
\label{eq:reg:SM:mt}
\@{\hat\mu}_t^\textrm{SM}
  =
  \vy'_t+\BigParens{\mLambda^{-1}+\mSigma_t^{-1}}^{-1}\mLambda^{-1}\bigParens{\hvtheta-\vy'_t},
\end{equation}
where
\begin{equation}
\label{eq:theta}
 \hvtheta
 =\biggParens{
  \mGamma^{-1}+\sum_{t=1}^T(\mLambda+\mSigma_t)^{-1}
  }^{-1}
  \sum_{t=1}^T(\mLambda+\mSigma_t)^{-1}\vy'_t,
\end{equation}
and
\[
  \mLambda=\mPhi\mK\mPhi^{\bk\trans}\qquad\text{and}\qquad
  \mGamma=\mPhi\mV\mPhi^{\bk\trans}.
\]

\begin{Thm}
\label{clm:mt}
With $\mPhi'$, $\mK'$ and $\mSigma'$ defined as above, we
have $\hvmu_{\mathrlap{t}}'{\vphantom{\big\vert}}^\textup{reg}=\@{\hat\mu}_t^\textup{SM}$
for all $t\in[T]$.
\end{Thm}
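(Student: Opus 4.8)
The plan is to treat the task index as one extra sensitive attribute, apply the algebraic identity behind Theorem~\ref{clm:st} to the enlarged single-task problem on $\cG'=\cT\times\cG$, and then unfold the block-structured posterior mean that results back into the two-level form of Eqs.~\eqref{eq:theta} and~\eqref{eq:reg:SM:mt} using the conjugate-Gaussian (equivalently, symmetric SMW) identities. The proof of Theorem~\ref{clm:st} uses nothing about the single-task $\mPhi,\mK$ of Eqs.~\eqref{eq:Phi}--\eqref{eq:K} beyond positive-definiteness of $\mK$ and $\mSigma$ and invertibility of $\mPhi\mK\mPhi^\trans$, so the identical SMW argument (Proposition~\ref{prp:smw}) gives, for the enlarged model, $\hvmu'{}^{\textup{reg}}=\bigParens{(\mLambda')^{-1}+(\mSigma')^{-1}}^{-1}(\mSigma')^{-1}\vy'$ with $\mLambda':=\mPhi'\mK'{\mPhi'}^\trans$. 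To license this I would check the hypotheses: $\mK'$ is positive definite by assumption; $\mSigma'$ is positive definite because it is block-diagonal with positive-definite blocks $\mSigma_t$; and $\mPhi'$ has full row rank because for every $(t,\vg)$ the column of $\mPhi'$ indexed by $(t,[k],\vg)$ is the standard basis vector supported on row $(t,\vg)$, so $\mPhi'$ contains a $d'\times d'$ permutation submatrix; hence $\mLambda'\succ\zero$. Observe that $\hvmu'{}^{\textup{reg}}$ is then exactly the Bayes posterior mean of the stacked vector $(\vmu'_t)_{t\in\cT}$ under the prior $(\vmu'_t)_t\sim\cN(\zero,\mLambda')$ and the (task-wise independent) observation law $\vy'_t\mid\vmu'_t\sim\cN(\vmu'_t,\mSigma_t)$.

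Next I would compute $\mLambda'$ in block form. From $\mPhi'_{t,s}=\mPhi\cdot 1\set{s\in\set{t,\glob}}$ and the block-diagonal structure of $\mK'$ (blocks $\mK$ for $s\in\cT$, block $\mV$ at $s=\glob$), the $(t,t')$ block of $\mLambda'$ is $\sum_{s\in\cS}\mPhi'_{t,s}\,\mK'_{s,s}\,(\mPhi'_{t',s})^\trans=\delta_{t,t'}\,\mLambda+\mGamma$, with $\mLambda=\mPhi\mK\mPhi^\trans$ and $\mGamma=\mPhi\mV\mPhi^\trans$ as in the statement. Thus $\mLambda'$ is precisely the joint covariance of $(\vmu'_t)_t$ under the hierarchical model $\vmu'_t=\vz_t+\@\theta$ with $\vz_t\sim\cN(\zero,\mLambda)$ i.i.d.\ across tasks and $\@\theta\sim\cN(\zero,\mGamma)$ independent --- i.e.\ the multi-task SureMap model of \textsection\ref{sec:mt-suremap}. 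Since the posterior mean of a jointly Gaussian vector depends only on its joint law, I may evaluate the posterior mean from the previous paragraph through this hierarchical representation.

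It then remains to peel the hierarchy. By the tower rule, $\E[\vmu'_t\mid\vy']=\E\bigBracks{\E[\vmu'_t\mid\@\theta,\vy']\mid\vy'}$; conditionally on $\@\theta$ the pairs $(\vmu'_s,\vy'_s)$ are independent across $s$, so $\E[\vmu'_t\mid\@\theta,\vy']=\E[\vmu'_t\mid\@\theta,\vy'_t]$, which by the conjugate update (Eq.~\eqref{eq:MAP} with prior mean $\@\theta$, prior covariance $\mLambda$, and observation covariance $\mSigma_t$) equals $(\mLambda^{-1}+\mSigma_t^{-1})^{-1}(\mLambda^{-1}\@\theta+\mSigma_t^{-1}\vy'_t)$. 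This is affine in $\@\theta$, so I may pull $\E[\,\cdot\mid\vy']$ through it and replace $\@\theta$ by $\hvtheta:=\E[\@\theta\mid\vy']$. Integrating $\vmu'$ out gives $\vy'_t\mid\@\theta\sim\cN(\@\theta,\mLambda+\mSigma_t)$ independently across $t$, so the conjugate update for the common mean $\@\theta$ under prior $\cN(\zero,\mGamma)$ yields exactly $\hvtheta=\bigParens{\mGamma^{-1}+\sum_{t=1}^T(\mLambda+\mSigma_t)^{-1}}^{-1}\sum_{t=1}^T(\mLambda+\mSigma_t)^{-1}\vy'_t$, which is Eq.~\eqref{eq:theta} (equivalently, the MetaMap estimator of Appendix~\ref{app:mt}). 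Finally, writing $\mA_t=(\mLambda^{-1}+\mSigma_t^{-1})^{-1}\mLambda^{-1}$ and using $(\mLambda^{-1}+\mSigma_t^{-1})^{-1}\mSigma_t^{-1}=\mI_d-\mA_t$, the task-$t$ slice $\hvmu'_t{}^{\textup{reg}}=(\mLambda^{-1}+\mSigma_t^{-1})^{-1}(\mLambda^{-1}\hvtheta+\mSigma_t^{-1}\vy'_t)$ equals $\vy'_t+\mA_t(\hvtheta-\vy'_t)$, i.e.\ Eq.~\eqref{eq:reg:SM:mt}; this gives $\hvmu'_t{}^{\textup{reg}}=\@{\hat\mu}_t^{\textrm{SM}}$ for every $t\in[T]$.

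The step I expect to be the main obstacle is the hierarchical unfolding in the last paragraph: one must argue carefully that the posterior mean of the large stacked vector genuinely splits this way, i.e.\ that $\E[\vmu'_t\mid\@\theta,\vy']=\E[\vmu'_t\mid\@\theta,\vy'_t]$ and that this inner conditional expectation is affine in $\@\theta$ --- both consequences of the conditional independence of the tasks given $\@\theta$ together with the linear-Gaussian structure, but worth spelling out in full. The remainder --- the block computation of $\mLambda'$ and the rank check --- is routine bookkeeping. A reader wishing to avoid probabilistic language can instead reach the same identity purely algebraically by writing $\mLambda'=(\mI_T\otimes\mLambda)+(\*1_T\otimes\mI_d)\,\mGamma\,(\*1_T\otimes\mI_d)^\trans$ and applying Proposition~\ref{prp:smw} twice to $\bigParens{(\mLambda')^{-1}+(\mSigma')^{-1}}^{-1}$, which I expect to be the most calculation-heavy route.
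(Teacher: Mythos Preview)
Your proposal is correct and complete. It differs from the paper's proof in the second half: after the common opening move---using the Theorem~\ref{clm:st} identity to write $\hvmu'{\vphantom{\big\vert}}^\textup{reg}=\bigParens{(\mLambda')^{-1}+(\mSigma')^{-1}}^{-1}(\mSigma')^{-1}\vy'$ with $\mLambda'=\mPhi'\mK'{\mPhi'}^\trans$, and computing $\mLambda'$ blockwise---the paper proceeds purely algebraically: it rewrites the estimator as $\bigParens{\mI-\mSigma'(\mLambda'+\mX\mGamma\mX^\trans+\mSigma')^{-1}}\vy'$ via Proposition~\ref{prp:2}, applies the SMW formula (Proposition~\ref{prp:smw}) once more with the low-rank correction $\mX\mGamma\mX^\trans$ to expose $\hvtheta$, and finally reads off each task block using Proposition~\ref{prp:2} again. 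This is exactly the ``calculation-heavy route'' you sketch in your last sentence. Your main argument instead recognises $\mLambda'$ as the covariance of the hierarchical prior $\vmu'_t=\@\theta+\vz_t$ and uses the tower rule together with task-wise conditional independence given $\@\theta$ to split the posterior mean into a meta-level update (yielding Eq.~\eqref{eq:theta}) followed by a task-level update (yielding Eq.~\eqref{eq:reg:SM:mt}). Your route is shorter, avoids a second explicit SMW application, and makes transparent why the ridge estimator coincides with the hierarchical MetaMap estimator; the paper's route is more self-contained in that it never leaves matrix algebra and does not appeal to any probabilistic interpretation.
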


In the proof we use the following identities:

\begin{Prp}
\label{prp:2}
Let $\mA,\mB\in\R^{n\times n}$ be symmetric positive definite matrices and let $\mI$ be the $n\times n$ identity matrix. Then
\[
  (\mA^{-1}+\mB^{-1})^{-1}\mA^{-1}
  =\mI-\mA(\mA+\mB)^{-1}
  =\mB(\mA+\mB)^{-1}.
\]
\end{Prp}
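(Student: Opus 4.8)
The plan is to verify the two claimed equalities by direct algebraic manipulation, leveraging the fact that symmetric positive definiteness guarantees every inverse in the statement exists. First I would note that since $\mA$ and $\mB$ are symmetric positive definite, so are $\mA^{-1}$, $\mB^{-1}$, their sum $\mA^{-1}+\mB^{-1}$, and $\mA+\mB$; hence all inverses appearing below are well-defined.

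The key algebraic step is the factorization
\[
  \mA^{-1}+\mB^{-1}=\mA^{-1}(\mA+\mB)\mB^{-1},
\]
which I would confirm by expanding the right-hand side as $\mA^{-1}\mA\mB^{-1}+\mA^{-1}\mB\mB^{-1}=\mB^{-1}+\mA^{-1}$. Taking inverses of both sides and reversing the order of factors gives $(\mA^{-1}+\mB^{-1})^{-1}=\mB(\mA+\mB)^{-1}\mA$. Right-multiplying by $\mA^{-1}$ then yields
\[
  (\mA^{-1}+\mB^{-1})^{-1}\mA^{-1}=\mB(\mA+\mB)^{-1},
\]
which is the third of the three expressions in the statement.

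To connect this to the middle expression, I would write $\mB=(\mA+\mB)-\mA$ and distribute across $(\mA+\mB)^{-1}$:
\[
  \mB(\mA+\mB)^{-1}
  =(\mA+\mB)(\mA+\mB)^{-1}-\mA(\mA+\mB)^{-1}
  =\mI-\mA(\mA+\mB)^{-1}.
\]
Chaining the two derivations establishes that all three expressions coincide.

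There is essentially no substantive obstacle here: the result is a routine identity. The only care required is to keep the factor ordering correct, since $\mA$ and $\mB$ need not commute and the inverse of a product reverses the order, and to confirm at the outset that the relevant inverses exist, which positive definiteness supplies.
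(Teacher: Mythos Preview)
Your argument is correct and, if anything, more elementary than the paper's. The paper establishes the first equality by invoking the Sherman--Morrison--Woodbury formula (its Proposition~\ref{prp:smw}) to write $(\mA^{-1}+\mB^{-1})^{-1}=\mA-\mA(\mA+\mB)^{-1}\mA$, then right-multiplies by $\mA^{-1}$ to obtain $\mI-\mA(\mA+\mB)^{-1}$, and finally reaches $\mB(\mA+\mB)^{-1}$ via the same $\mB=(\mA+\mB)-\mA$ trick you use. You instead bypass SMW entirely with the one-line factorization $\mA^{-1}+\mB^{-1}=\mA^{-1}(\mA+\mB)\mB^{-1}$, which gives the third expression directly and then links back to the middle one. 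Your route is self-contained and avoids appealing to a heavier identity; the paper's route has the minor advantage of reusing a proposition it already stated for other purposes. Either way, the content is the same routine computation.
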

\begin{proof}
We have
\begin{align*}
  (\mA^{-1}+\mB^{-1})^{-1}\mA^{-1}
  &=\bigBracks{\mA-\mA(\mA+\mB)^{-1}\mA}\mA^{-1}
\\
  &=\mI-\mA(\mA+\mB)^{-1}
\\
  &=(\mA+\mB)(\mA+\mB)^{-1}-\mA(\mA+\mB)^{-1}
\\
  &=\mB(\mA+\mB)^{-1},
\end{align*}
where the first equality follows by the SMW formula (Proposition~\ref{prp:smw}),
and the remaining equalities follow by simple algebraic manipulations.
\end{proof}

\begin{proof}[Proof of Theorem~\ref{clm:mt}]
Starting with Eq.~\eqref{eq:reg:mt}, we have
\begin{align}
\notag
  \hvmu'{\vphantom{\big\vert}}^\textup{reg}
&=
  \mPhi'\BigParens{{\mPhi'}^\trans{\mSigma'}^{-1}{\mPhi'}
                   +{\mK'}^{-1}}^{-1}
  {\mPhi'}^\trans
  {\mSigma'}^{-1}\vy'
\\
\notag
&=\BigParens{\bigParens{\mPhi'\mK'{\mPhi'}^\trans}^{-1}+{\mSigma'}^{-1}}^{-1}
  {\mSigma'}^{-1}\vy'
\\
\label{eq:reg:mt:1}
&=\BigParens{\mI-\mSigma'\bigParens{\mPhi'\mK'{\mPhi'}^\trans+\mSigma'}^{-1}}\vy',
\end{align}
where the second equality follows by the same reasoning as in the
proof of Theorem~\ref{clm:st}, and the third equality follows by Proposition~\ref{prp:2}.

We next evaluate $\mPhi'\mK'{\mPhi'}^\trans$.
Note that the matrices $\mPhi'$, $\mK'$ and $\mSigma'$ have the following block structure:
\[
   \mPhi'=\begin{pmatrix}
      \mPhi  & & & \mPhi
   \\ & \ddotsX & & \vdotsX
   \\ & & \mPhi  & \mPhi
   \end{pmatrix},
   \quad
   \mK'=\begin{pmatrix}
      \mK
   \\ & \ddotsX
   \\ & & \mK
   \\ & & & \mV
   \end{pmatrix},
   \quad
   \mSigma'=
   \begin{pmatrix}
      \mSigma_1
   \\ & \ddotsX
   \\ & & \mSigma_T
   \end{pmatrix}.
\]
Thus,
\begin{align*}
\mPhi'\mK'{\mPhi'}^\trans
&=
   \begin{pmatrix}
      \mPhi  & &
   \\ & \ddotsX &
   \\ & & \mPhi
   \end{pmatrix}
   \begin{pmatrix}
      \mK
   \\ & \ddotsX
   \\ & & \mK
   \end{pmatrix}
   \begin{pmatrix}
      \mPhi^{\bk\trans}  & &
   \\ & \ddotsX &
   \\ & & \mPhi^{\bk\trans}
   \end{pmatrix}
   +
   \begin{pmatrix}
      \mPhi \\ \vdotsX \\ \mPhi
   \end{pmatrix}
   \mV
   \begin{pmatrix}
      \mPhi^{\bk\trans} \;\cdots\; \mPhi^{\bk\trans}
   \end{pmatrix}
\\
&=
   \begin{pmatrix}
      \mPhi\mK\mPhi^{\bk\trans}  & &
   \\ & \ddotsX &
   \\ & & \mPhi\mK\mPhi^{\bk\trans}
   \end{pmatrix}
   +
   \begin{pmatrix}
      \mI \\ \vdotsX \\ \mI
   \end{pmatrix}
   \mPhi\mV\mPhi^{\bk\trans}
   \begin{pmatrix}
      \mI \;\cdots\; \mI
   \end{pmatrix}
\\
&=
   \begin{pmatrix}
      \mLambda  & &
   \\ & \ddotsX &
   \\ & & \mLambda
   \end{pmatrix}
   +
   \begin{pmatrix}
      \mI \\ \vdotsX \\ \mI
   \end{pmatrix}
   \mGamma
   \begin{pmatrix}
      \mI \;\cdots\; \mI
   \end{pmatrix}
   =
   \mLambda'
   +
   \mX
   \mGamma
   \mX^{\trans},
\end{align*}
where in the last line we introduced the notation $\mLambda'$ for the block-diagonal matrix with
$T$ copies of matrix $\mLambda$ along diagonal, and notation $\mX$ for the matrix obtained by stacking $T$ copies of the $\card{\cG}\times\card{\cG}$ identity matrix $\mI$ on top of each other. Plugging the last expression back into Eq.~\eqref{eq:reg:mt:1},
we obtain
\begin{align}
\notag
  \hvmu'{\vphantom{\big\vert}}^\textup{reg}
&=\BigBracks{
    \mI
    -\mSigma'\BigParens{
      \mLambda'
      +
      \mX\mGamma\mX^\trans
      +
      \mSigma'
    }^{-1}
  }\vy'
\\
\notag
&=\vy'
  -\mSigma'\Bigl[
      (\mLambda'+\mSigma')^{-1}
\\
\label{eq:reg:mt:2}
&\qquad\qquad\quad{}
      -
      (\mLambda'+\mSigma')^{-1}
      \mX
      \BigParens{
          \mGamma^{-1}
          +
          \mX^{\trans}
          (\mLambda'+\mSigma')^{-1}
          \mX
      }^{-1}
      \mX^{\trans}
      (\mLambda'+\mSigma')^{-1}
  \Bigr]\vy',
\end{align}
where the second equality follows by the SMW formula (Proposition~\ref{prp:smw}) with $\mA=\mLambda'+\mSigma'$, $\mR=\mGamma$, and $\mX=\mX$. We next focus on simplifying
the last term in the bracket in Eq.~\eqref{eq:reg:mt:2}.

Since $\mLambda'$ and $\mSigma'$ are block-diagonal, the matrix $(\mLambda'+\mSigma')^{-1}$ is
also block-diagonal with blocks along the diagonal equal to $(\mLambda+\mSigma_t)^{-1}$ for $t=1,\dots,T$. Thus,
\[
          \mX^{\trans}
          (\mLambda'+\mSigma')^{-1}
          \mX
  =
   \begin{pmatrix}
      \mI \;\cdots\; \mI
   \end{pmatrix}
   \begin{pmatrix}
     (\mLambda+\mSigma_1)^{-1}
   \\ & \ddotsX
   \\ & & (\mLambda+\mSigma_T)^{-1}
   \end{pmatrix}
   \begin{pmatrix}
      \mI \\ \vdotsX \\ \mI
   \end{pmatrix}
  =\sum_{t=1}^T (\mLambda+\mSigma_t)^{-1},
\]
and similarly,
\[
          \mX^{\trans}
          (\mLambda'+\mSigma')^{-1}
          \vy'
  =\sum_{t=1}^T (\mLambda+\mSigma_t)^{-1}\vy'_t.
\]
Therefore,
\begin{align*}
&
      \BigParens{
          \mGamma^{-1}
          +
          \mX^{\trans}
          (\mLambda'+\mSigma')^{-1}
          \mX
      }^{-1}
      \mX^{\trans}
      (\mLambda'+\mSigma')^{-1}
      \vy'
\\
&\qquad{}=
      \biggParens{
          \mGamma^{-1}
          +
         \sum_{t=1}^T (\mLambda+\mSigma_t)^{-1}
      }^{-1}
      \sum_{t=1}^T (\mLambda+\mSigma_t)^{-1}\vy'_t
 =\hvtheta.
\end{align*}
Plugging this back in Eq.~\eqref{eq:reg:mt:2}, we obtain
\begin{align*}
  \hvmu'{\vphantom{\big\vert}}^\textup{reg}
&=\vy'
  -\mSigma'\BigBracks{
      (\mLambda'+\mSigma')^{-1}\vy'
      -
      (\mLambda'+\mSigma')^{-1}
      \mX
      \hvtheta
  }
=\vy'
  -\mSigma'
      (\mLambda'+\mSigma')^{-1}
      \Bracks{\vy'
      -
   \begin{pmatrix}
      \mI \\ \vdotsX \\ \mI
   \end{pmatrix}
      \hvtheta
  }.
\end{align*}
Using, again, the fact that matrices $\mLambda'$ and $\mSigma'$ are block-diagonal, the task-specific blocks of $\hvmu'{\vphantom{\big\vert}}^\textup{reg}$ must be equal to
\begin{align*}
 \hvmu_{\mathrlap{t}}'{\vphantom{\big\vert}}^\textup{reg}
&= \vy'_t
     -\mSigma_t
       (\mLambda+\mSigma_t)^{-1}
       (\vy'_t-\hvtheta)
\\
&= \vy'_t
     -\bigParens{\mLambda^{-1}+\mSigma_t^{-1}}^{-1}\mLambda^{-1}
       (\vy'_t-\hvtheta)
\\
&= \vy'_t
     +\bigParens{\mLambda^{-1}+\mSigma_t^{-1}}^{-1}\mLambda^{-1}
       (\hvtheta-\vy'_t)
= \@{\hat\mu}_t^\textup{SM},
\end{align*}
where the second equality follows by Proposition~\ref{prp:2}.
\end{proof}

\section{Resources}

\subsection{Data and model resources}\label{app:data}

We make use of data / models with the following sources / licenses:
\begin{enumerate}[leftmargin=*,topsep=-1mm,noitemsep]
	\item \cite{strack2014impact}: CC Attribution License.
	\item \cite{weerts2023fairlearn}: MIT License.
	\item \cite{ardila2020common}: CC0 License.
	\item \cite{radford2023robust}: Apache-2.0 License.
    \item \url{https://archive.ics.uci.edu/dataset/2/adult}: CC BY 4.0 License
    \item \url{https://huggingface.co/JaaackXD/Llama-3-70B-GGUF}: Meta Llama 3 License
\end{enumerate}

We use the third and fourth resources to create a dataset of Whisper model evaluations on Common Voice utterances, which result in the {\bf Common Voice} and {\bf CVC} tasks described in \textsection\ref{sec:asr};
we also use the last two resources to create a dataset of in-context evaluations of Llama 3 on the Adult dataset, which results in the {\bf Adult} task described in \textsection\ref{sec:tab}.
Both resources are released under a CC BY 4.0 License and are available at \url{https://github.com/mkhodak/SureMap}.\looseness-1

\subsection{Computational}\label{app:computational}

\begin{figure}[!t]
	\centering
	\includegraphics[width=.495\linewidth]{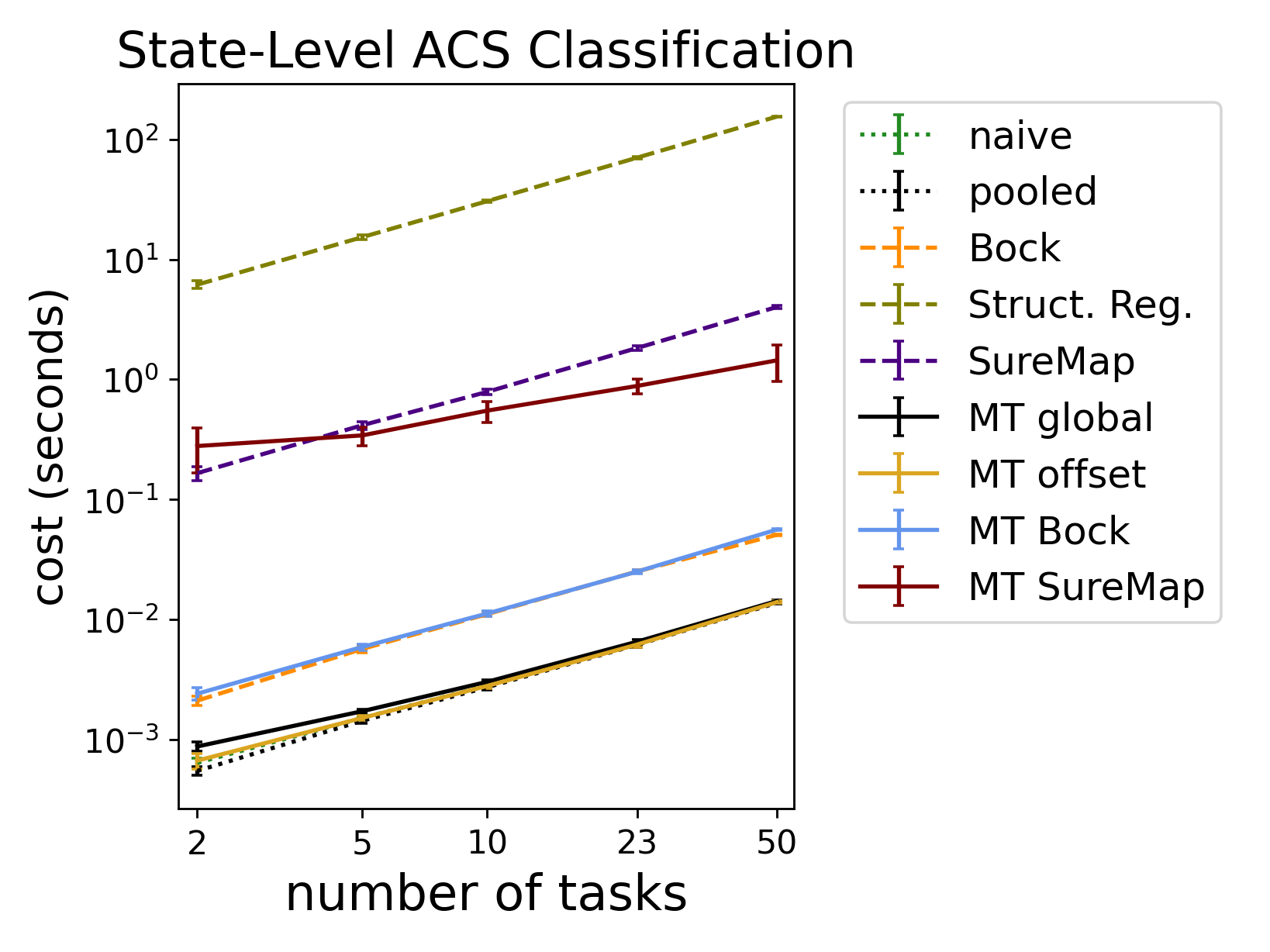}
	\includegraphics[width=.495\linewidth]{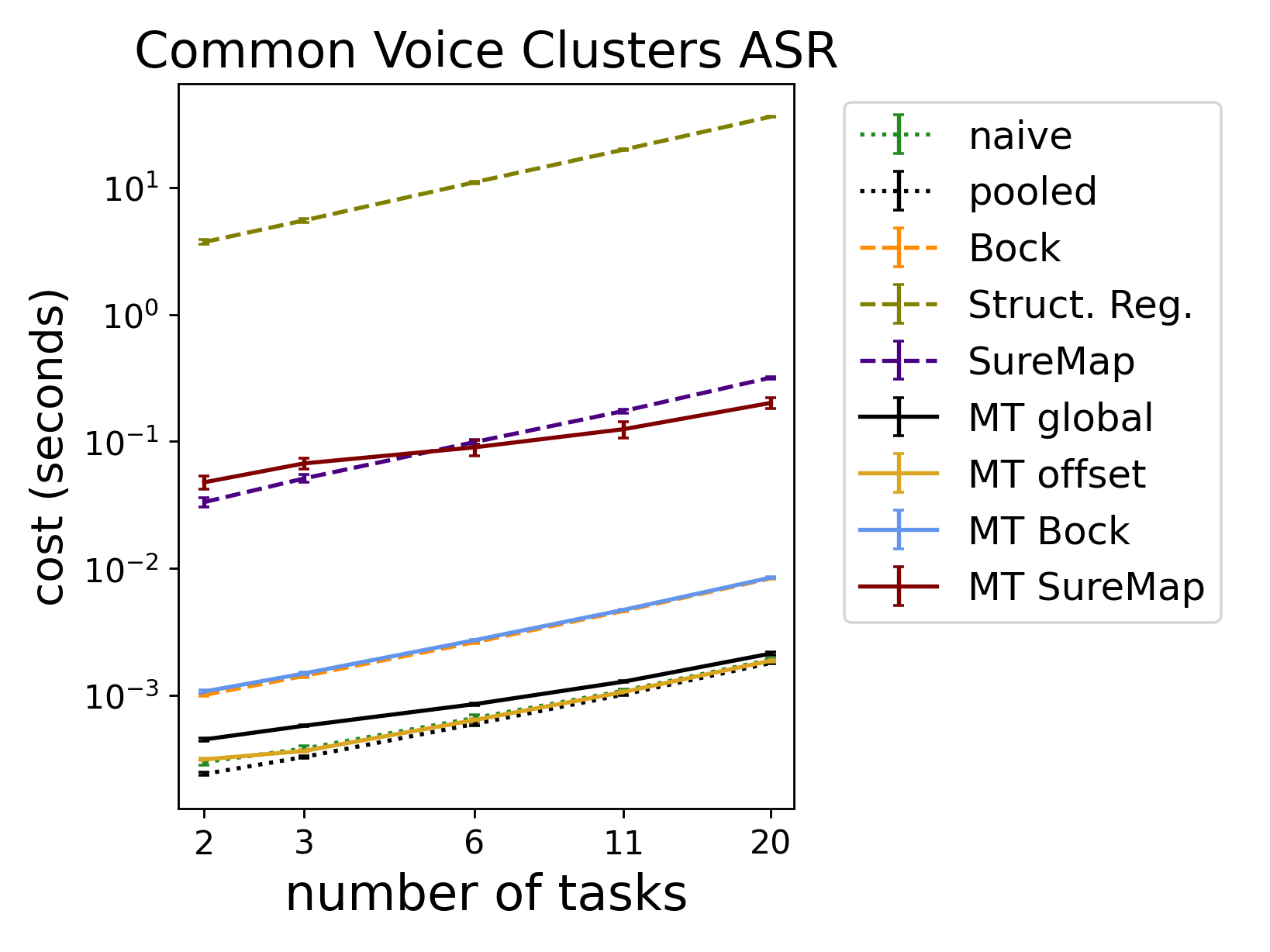}
	\caption{\label{app:cost}
        Cost of running the methods evaluated in this paper as a function of the number of tasks, with both axes scaled logarithmically.
        While they are 1-2 orders of magnitude more expensive than the baselines---which all have closed form expressions---SureMap and multi-task SureMap are also 1-2 orders of magnitude than structured regression~\citep{herlihy2024structured}.
        Note that for the most part the runtime of all methods will usually be dwarfed by the cost of inference.
	}
\end{figure}

By far the most computation was required to generate the Common Voice, CVC, and Adult tasks, which was done on a machine with two RTX-8000 GPUs and took about a week.
As described above, the corresponding datasets are made publicly available and easy to re-use without any GPU access.
Given these dataset, the main experiments were run on a 40-core machine and take a couple hours, with the vast majority of this time spent running the structured regression approach of \citet{herlihy2024structured};
see Figure~\ref{app:cost} for a summary of the costs associated with each method evaluated in this paper.
Code for both generating the task data and reproducing the method evaluations is available at \url{https://github.com/mkhodak/SureMap}.

\begin{figure}[!t]
	\centering
	\includegraphics[width=.325\linewidth]{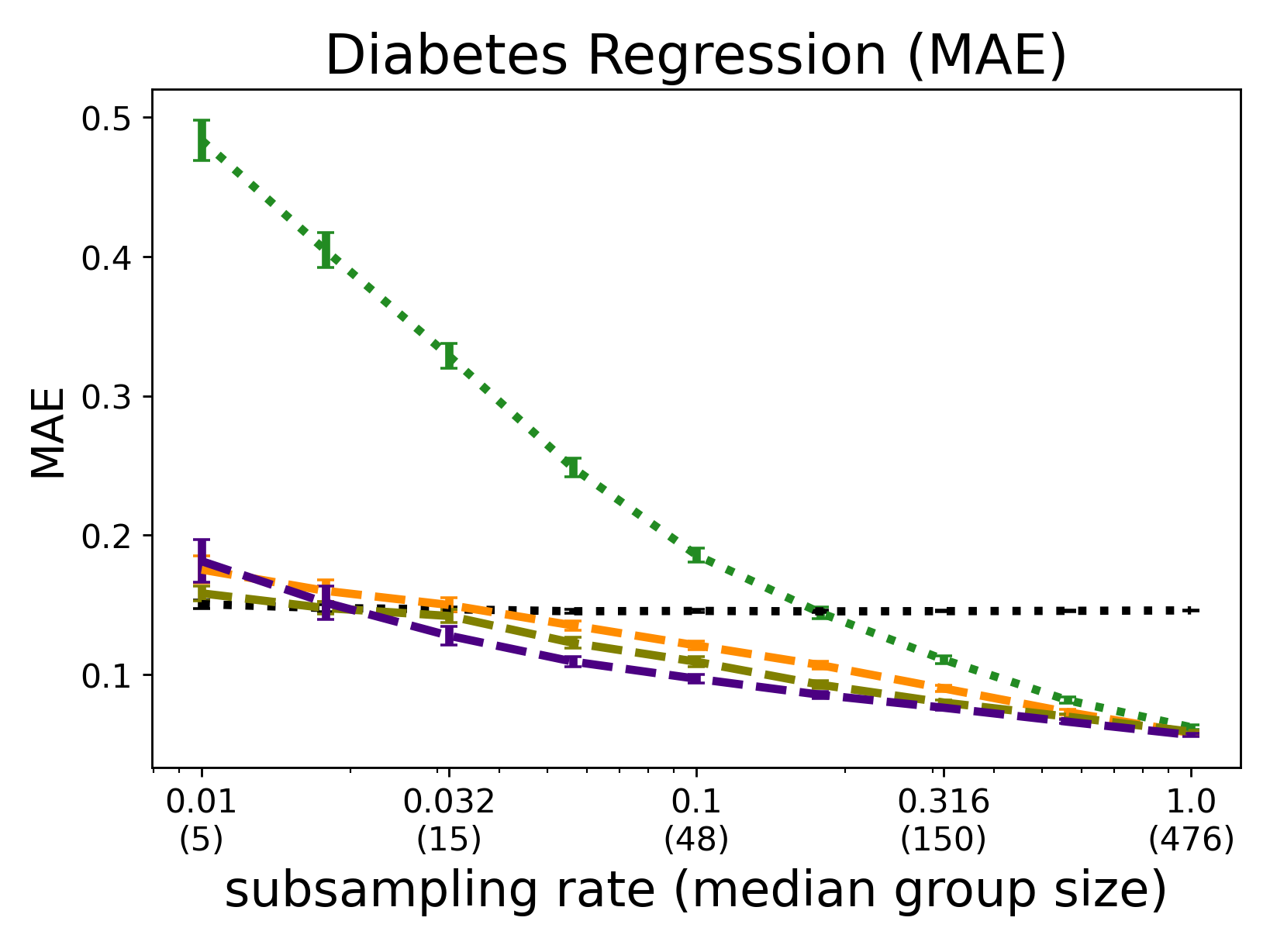}
	\includegraphics[width=.325\linewidth]{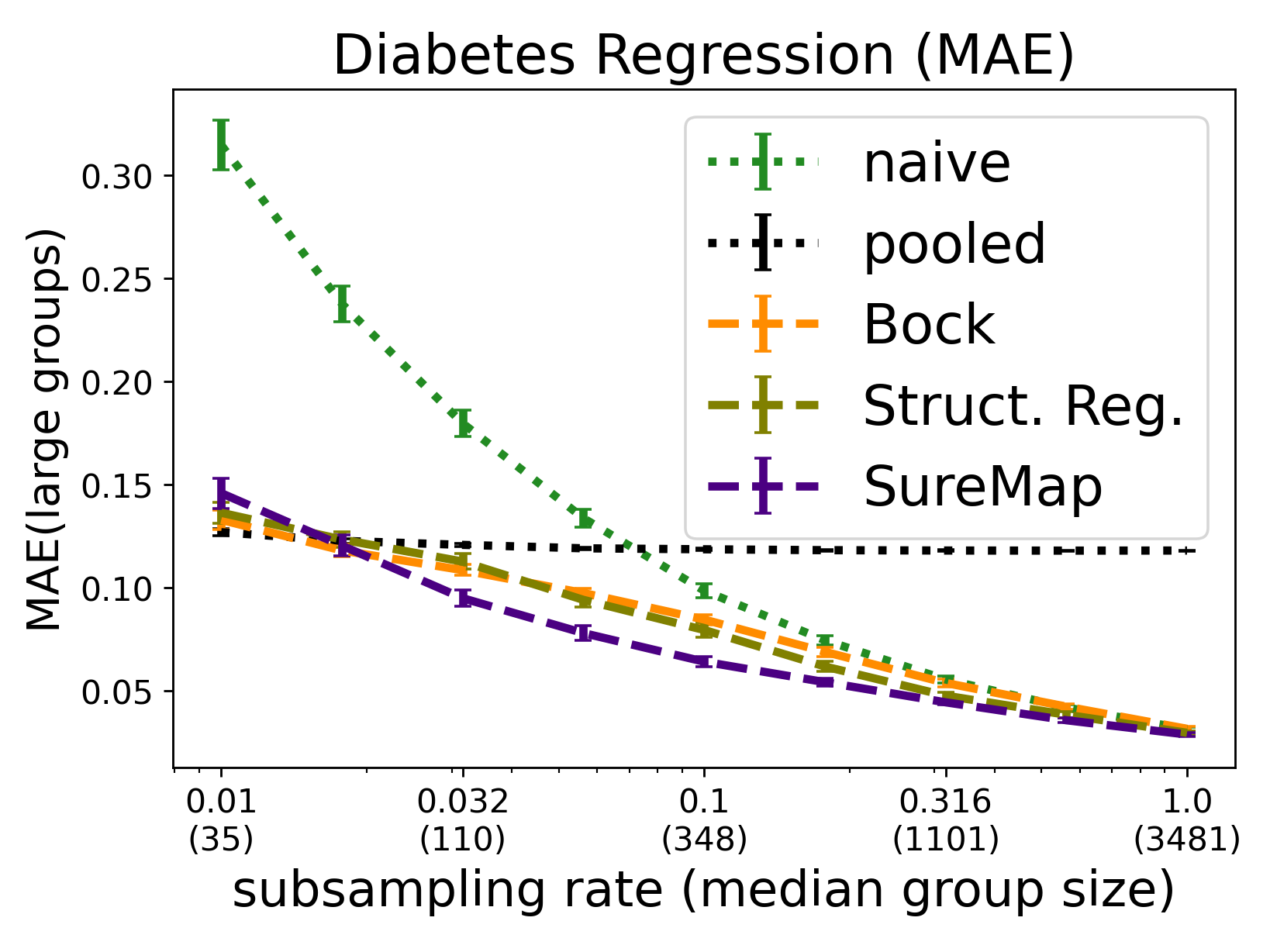}
	\includegraphics[width=.325\linewidth]{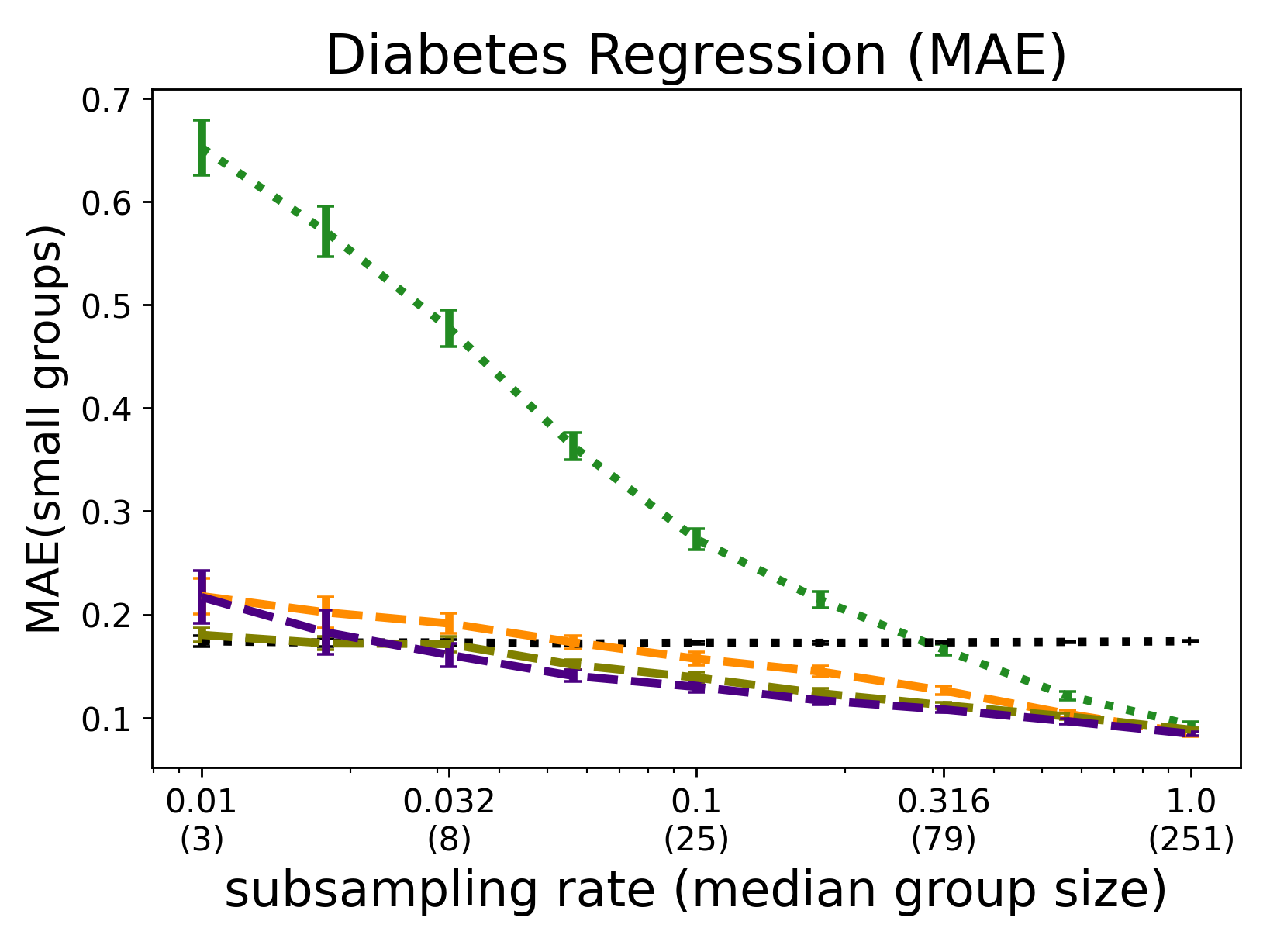}
	\caption{\label{app:diabetes}
		Evaluations on the regression variant of Diabetes, using MAE as the target metric, disaggregating by race, sex, and age.
		On the left the MAE is taken across all groups, while in the center it is only over large groups and on the left over small groups.
		Large and small are defined as the top and bottom half of all groups, respectively.
	}
\end{figure}

\begin{figure}[!t]
	\centering
	\includegraphics[width=.325\linewidth]{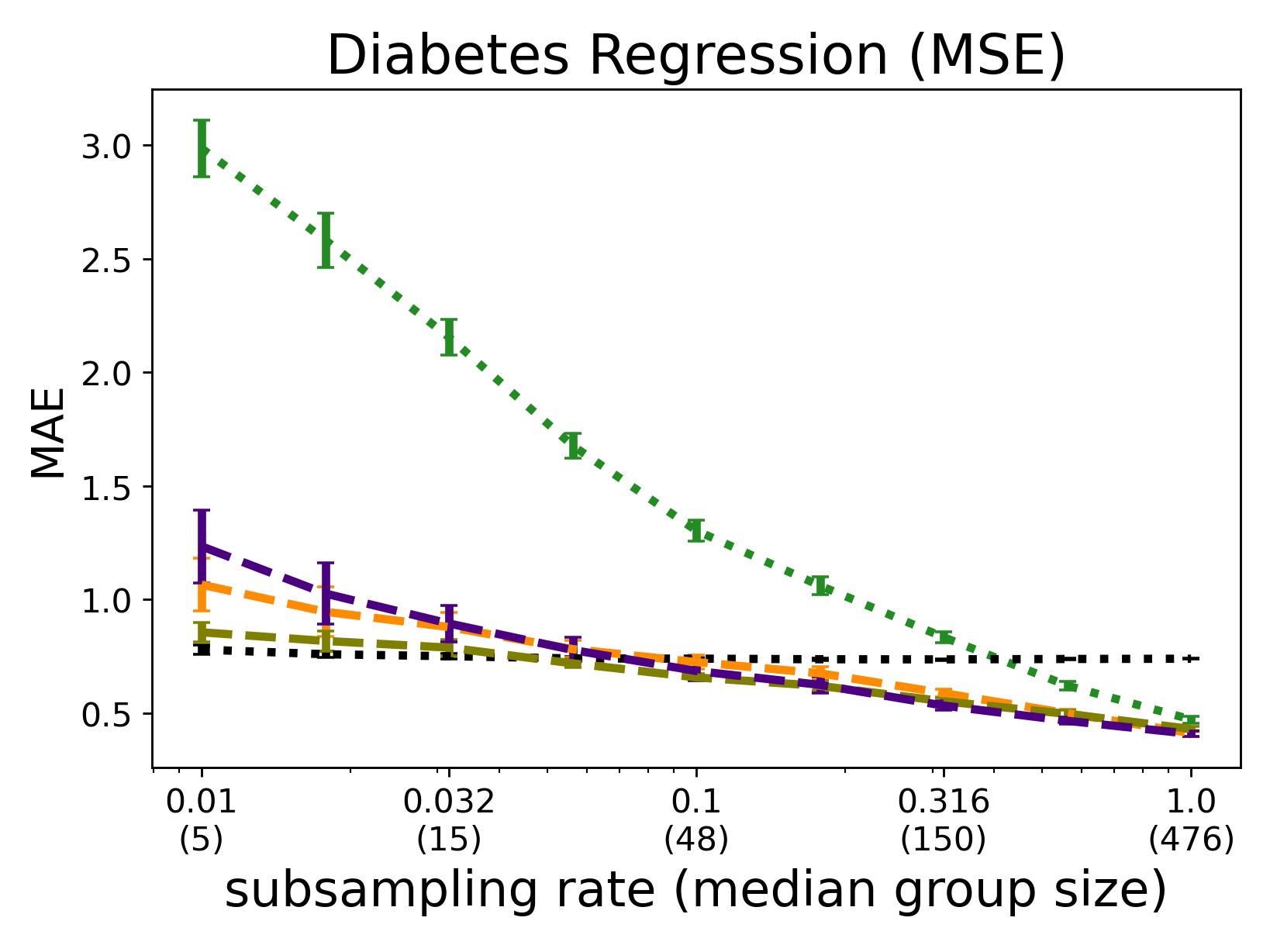}
	\includegraphics[width=.325\linewidth]{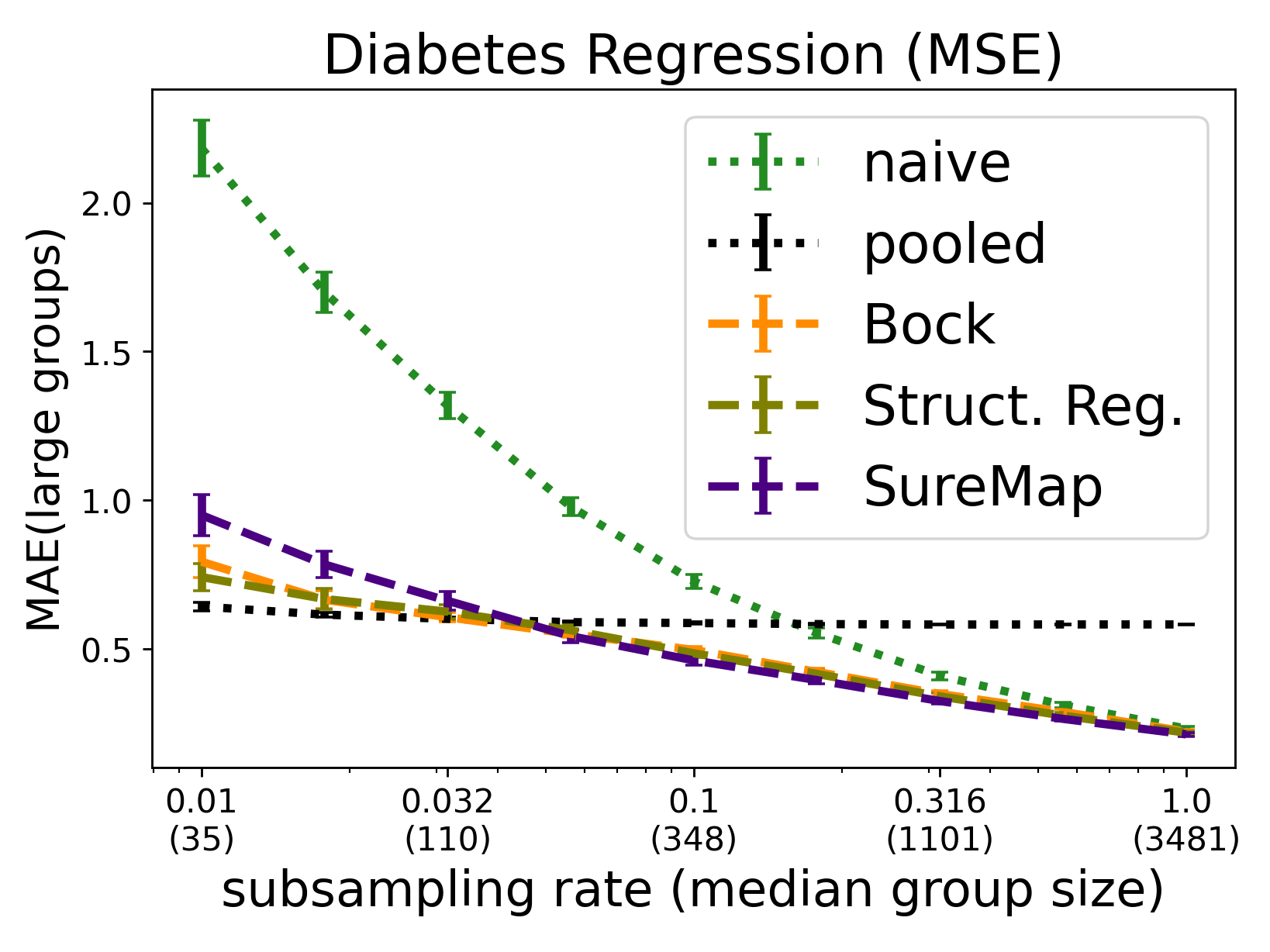}
	\includegraphics[width=.325\linewidth]{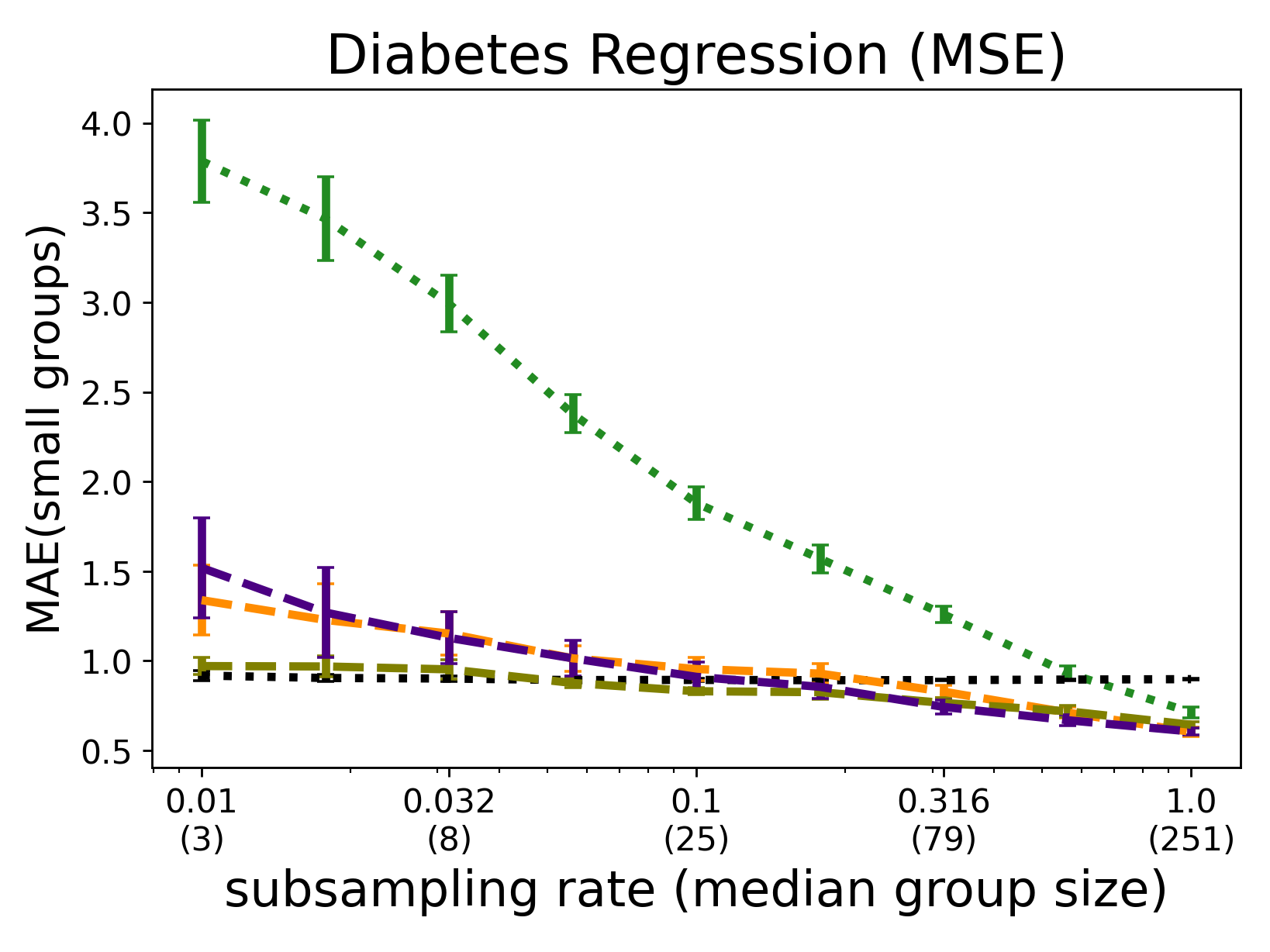}
	\caption{\label{app:diabetes-mse}
		Evaluations on the regression variant of Diabetes, using MSE as the target metric, disaggregating by race, sex, and age.
		On the left the MAE is taken across all groups, while in the center it is only over large groups and on the left over small groups.
		Large and small are defined as the top and bottom half of all groups, respectively.
	}
\end{figure}

\begin{figure}[!t]
	\centering
	\includegraphics[width=.325\linewidth]{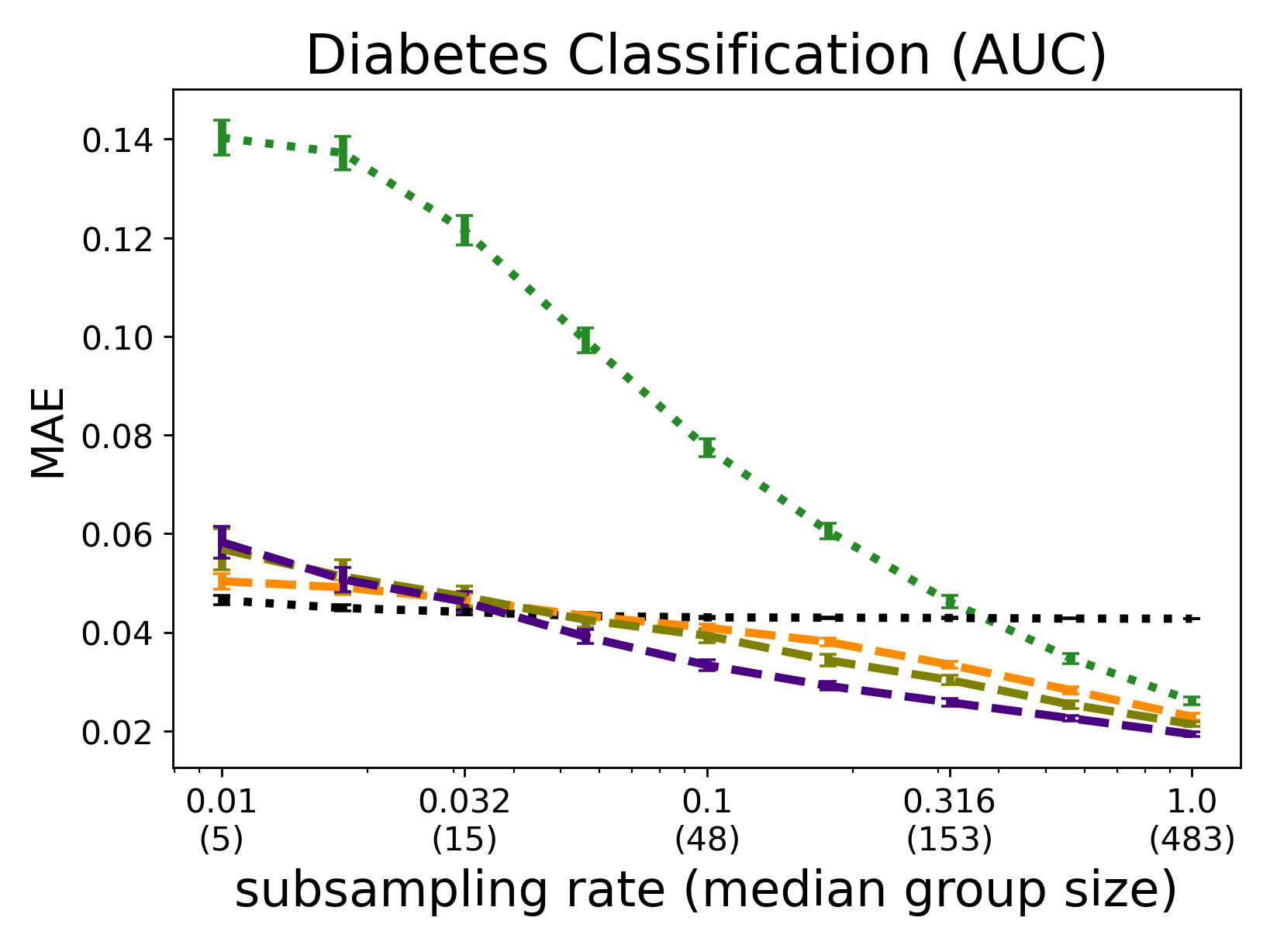}
	\includegraphics[width=.325\linewidth]{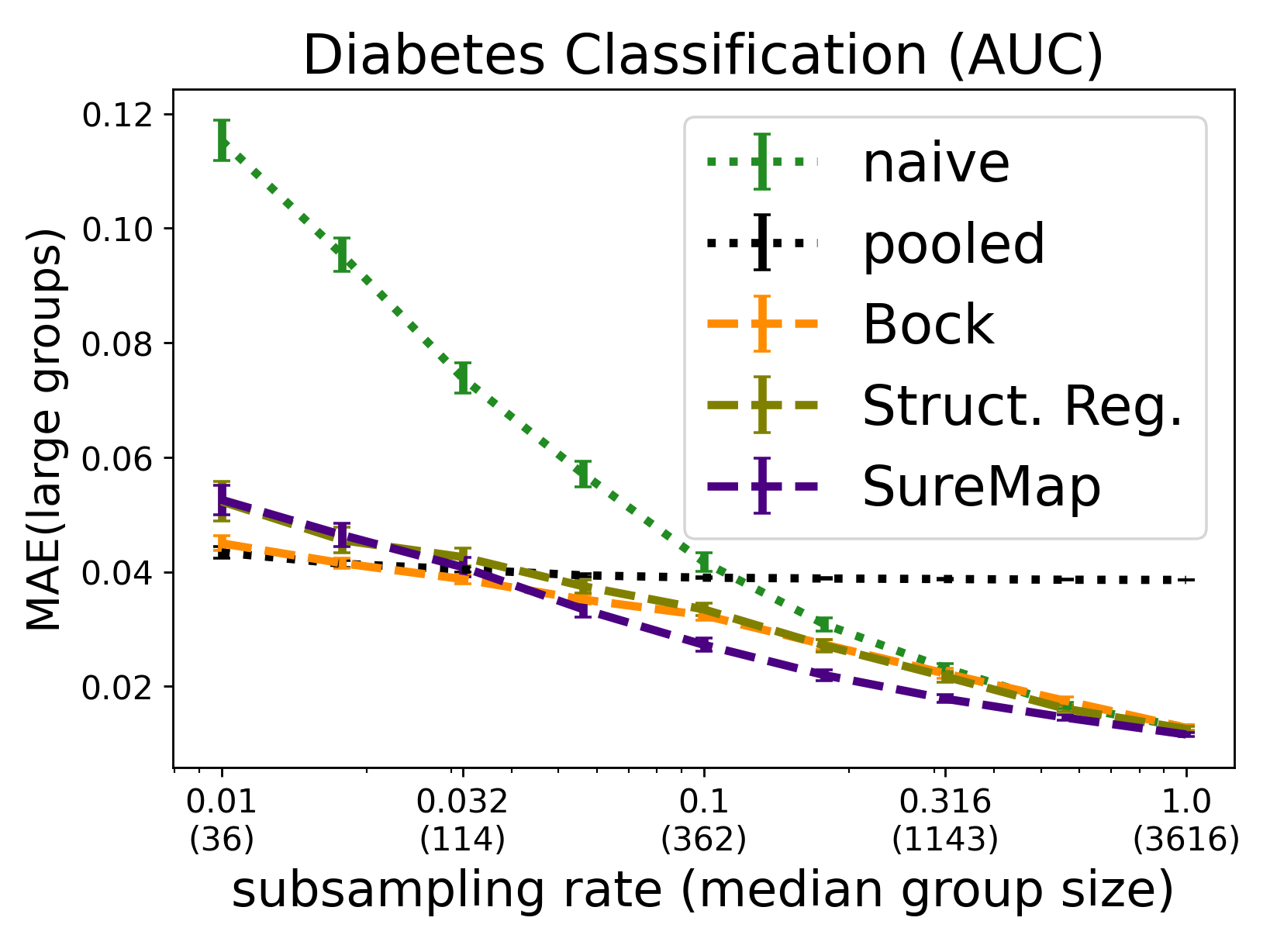}
	\includegraphics[width=.325\linewidth]{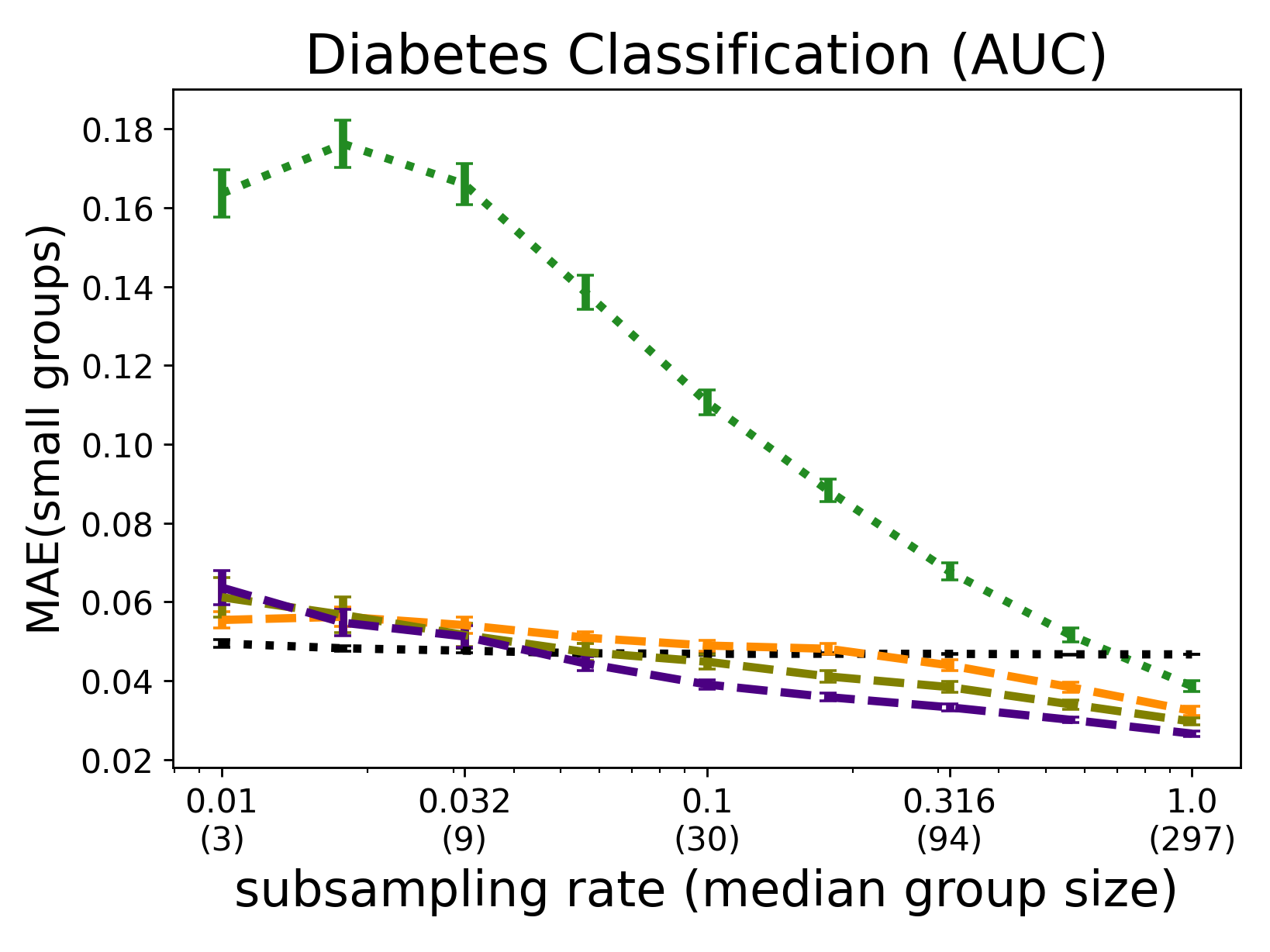}\vspace{-3mm}
	\caption{\label{app:diabetes-auc}
		Single-task evaluations on the Diabetes classification setting~(disaggregating by race, sex, and age) when using AUC as the target metric.
		In the left column the RMSE is taken across all groups, while in the center it is only over large groups and on the right over small groups.
		Large and small are defined as the top and bottom half of all groups, respectively.\looseness-1
	}\vspace{-2mm}
\end{figure}

\begin{figure}[!t]
	\centering
	\includegraphics[width=.325\linewidth]{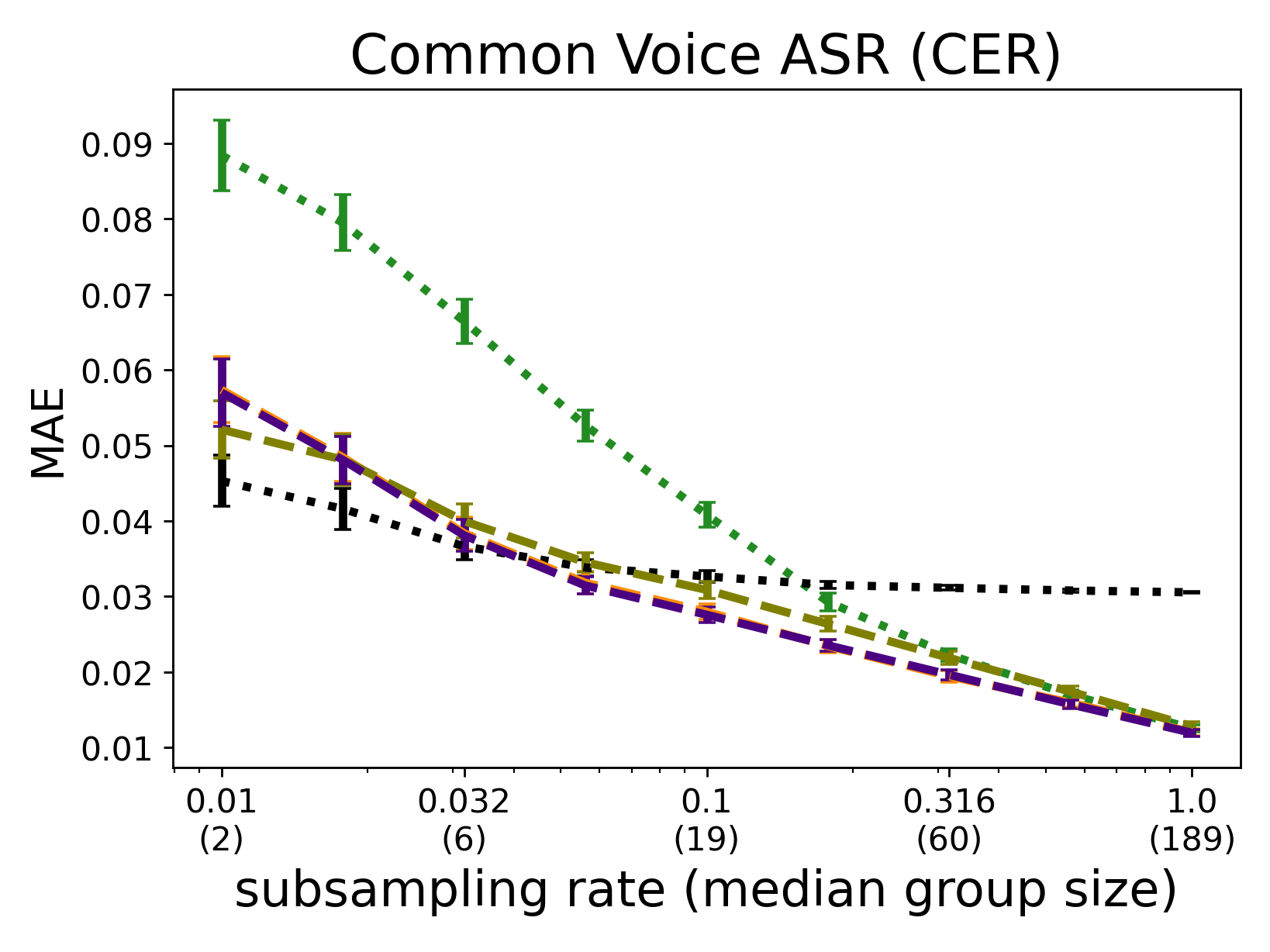}
	\includegraphics[width=.325\linewidth]{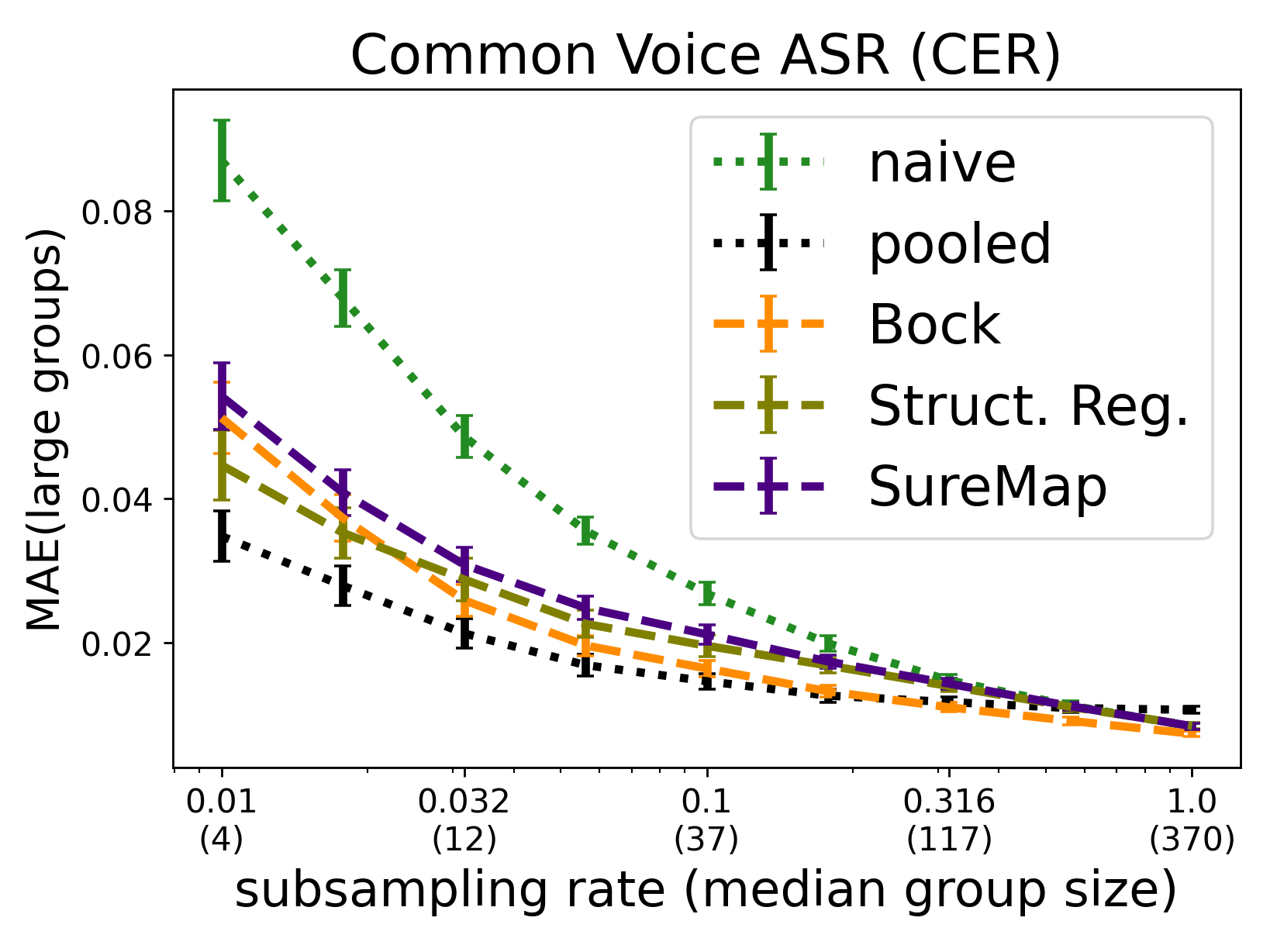}
	\includegraphics[width=.325\linewidth]{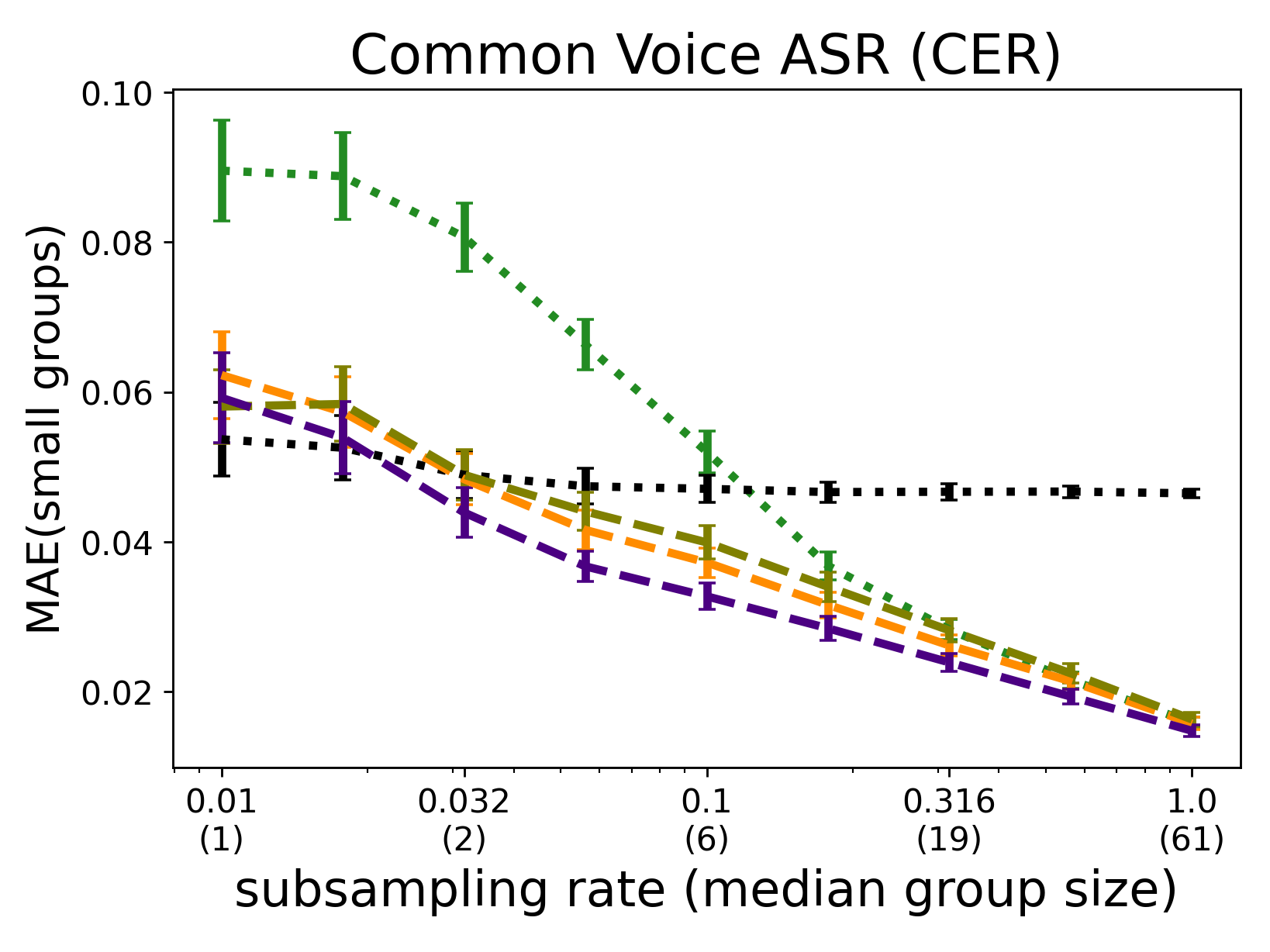}\vspace{-3mm}
	\caption{\label{app:single-cer}
		Single-task evaluation on the Common Voice ASR setting~(bottom, disaggregating by sex and age) when using CER as the target metric.
		In the left column the MAE is taken across all groups, while in the center it is only over large groups and on the right over small groups.
		Large and small are defined as the top and bottom half of all groups, respectively.\looseness-1
	}\vspace{-2mm}
\end{figure}

\begin{figure}[!t]
	\centering
	\includegraphics[width=.450\linewidth]{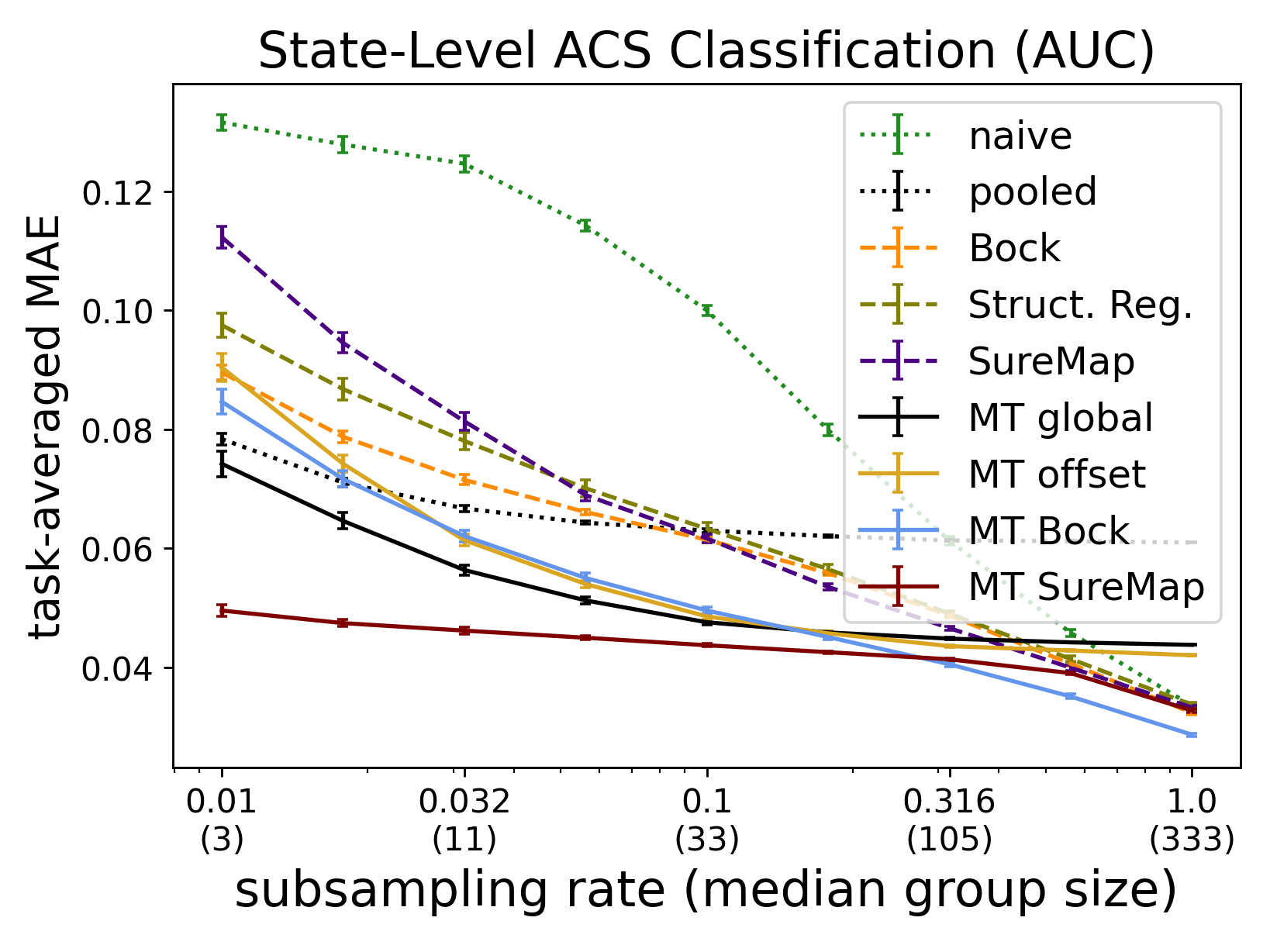}
	\hfill
	\includegraphics[width=.450\linewidth]{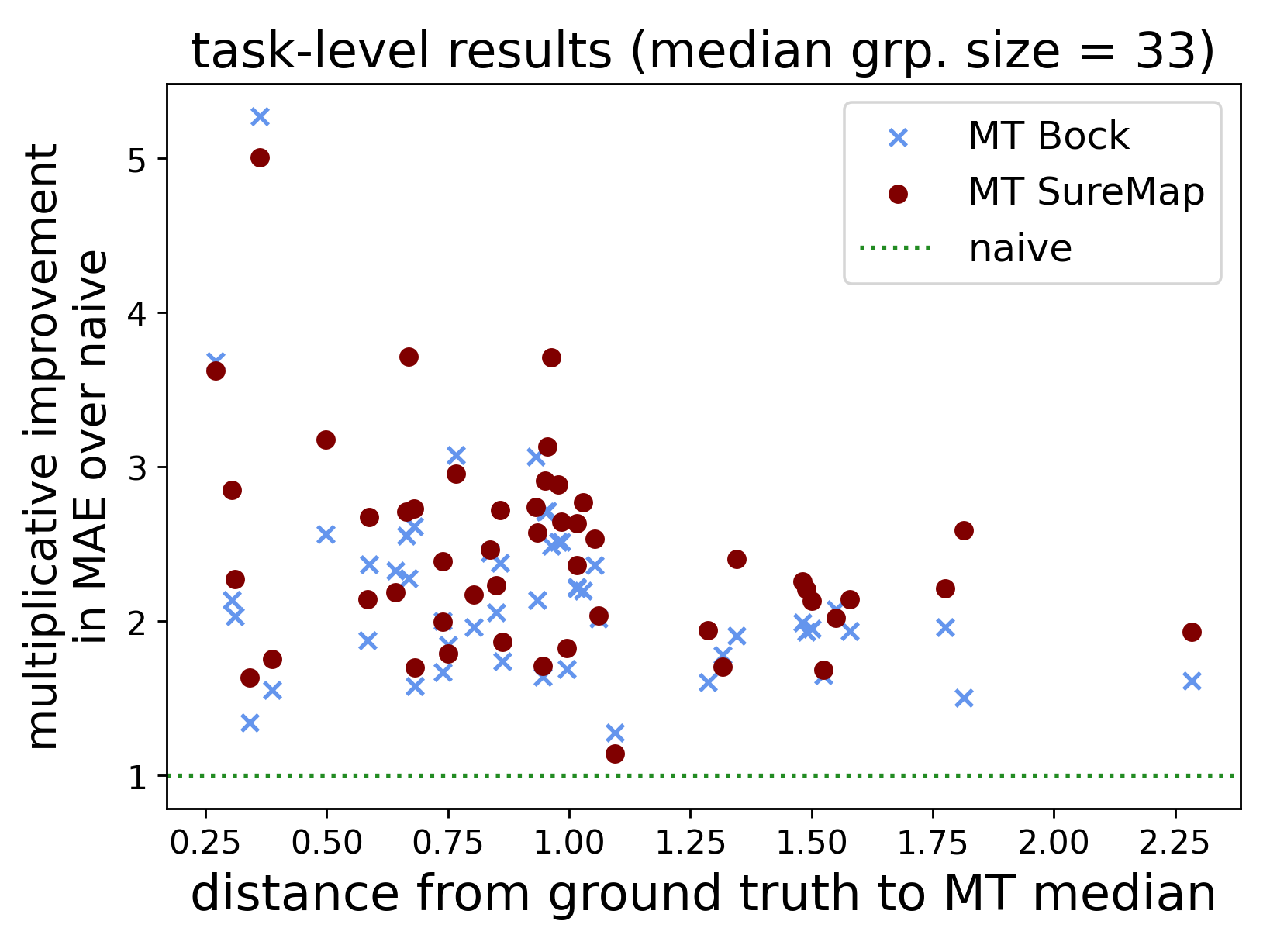}\vspace{-3mm}
	\caption{\label{app:slacs-auc}
		Multi-task evaluations on state-level ACS data (disaggregating by race, sex, and age) when using AUC as the target metric.
		On the left is the performance across different subsampling rates while on the right we show (multiplicative) performance improvement over the naive estimator on different tasks at subsampling rate 0.1.\looseness-1
	}\vspace{-2mm}
\end{figure}

\begin{figure}[!t]
	\centering
	\includegraphics[width=.450\linewidth]{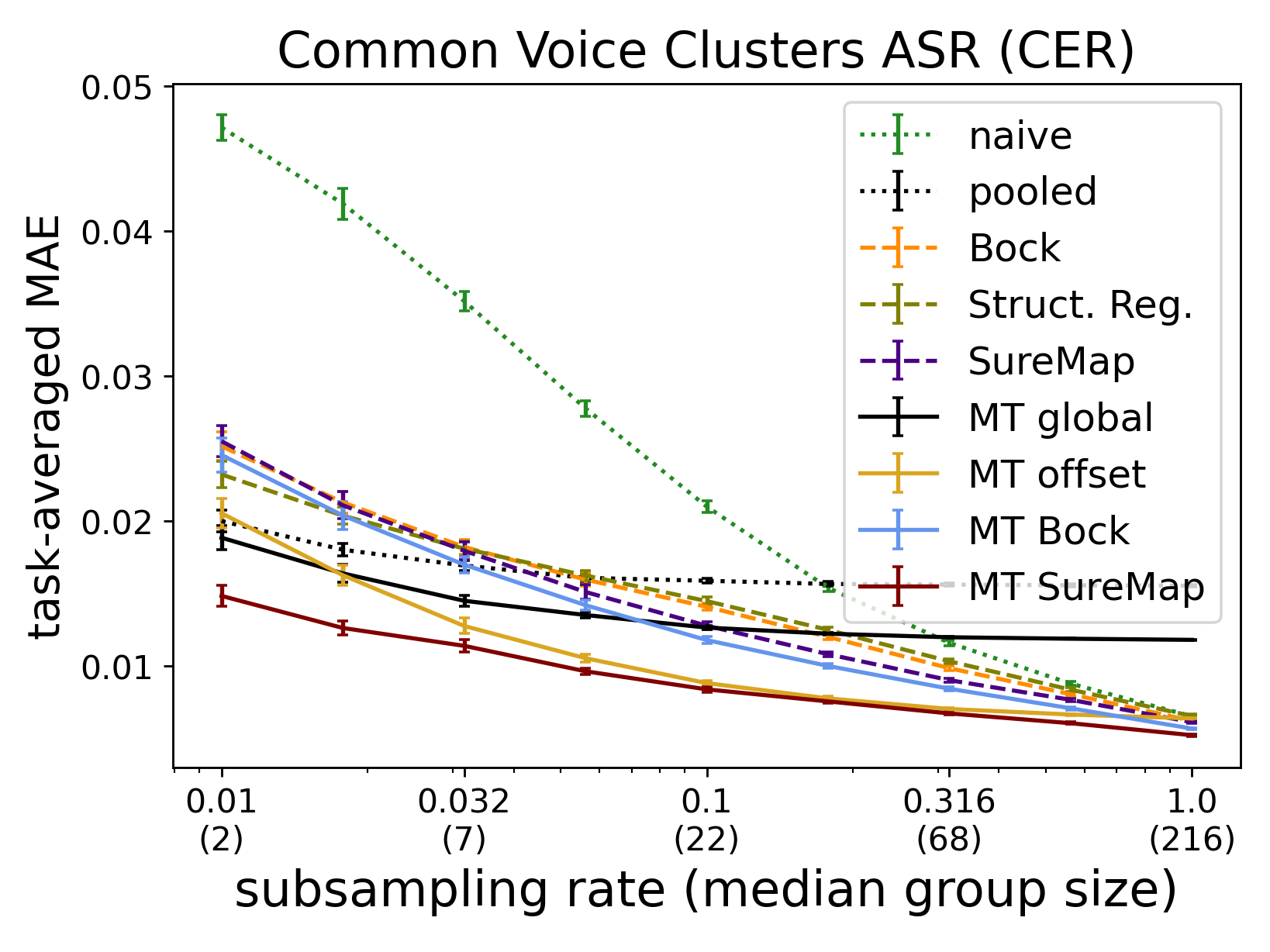}
	\hfill
	\includegraphics[width=.450\linewidth]{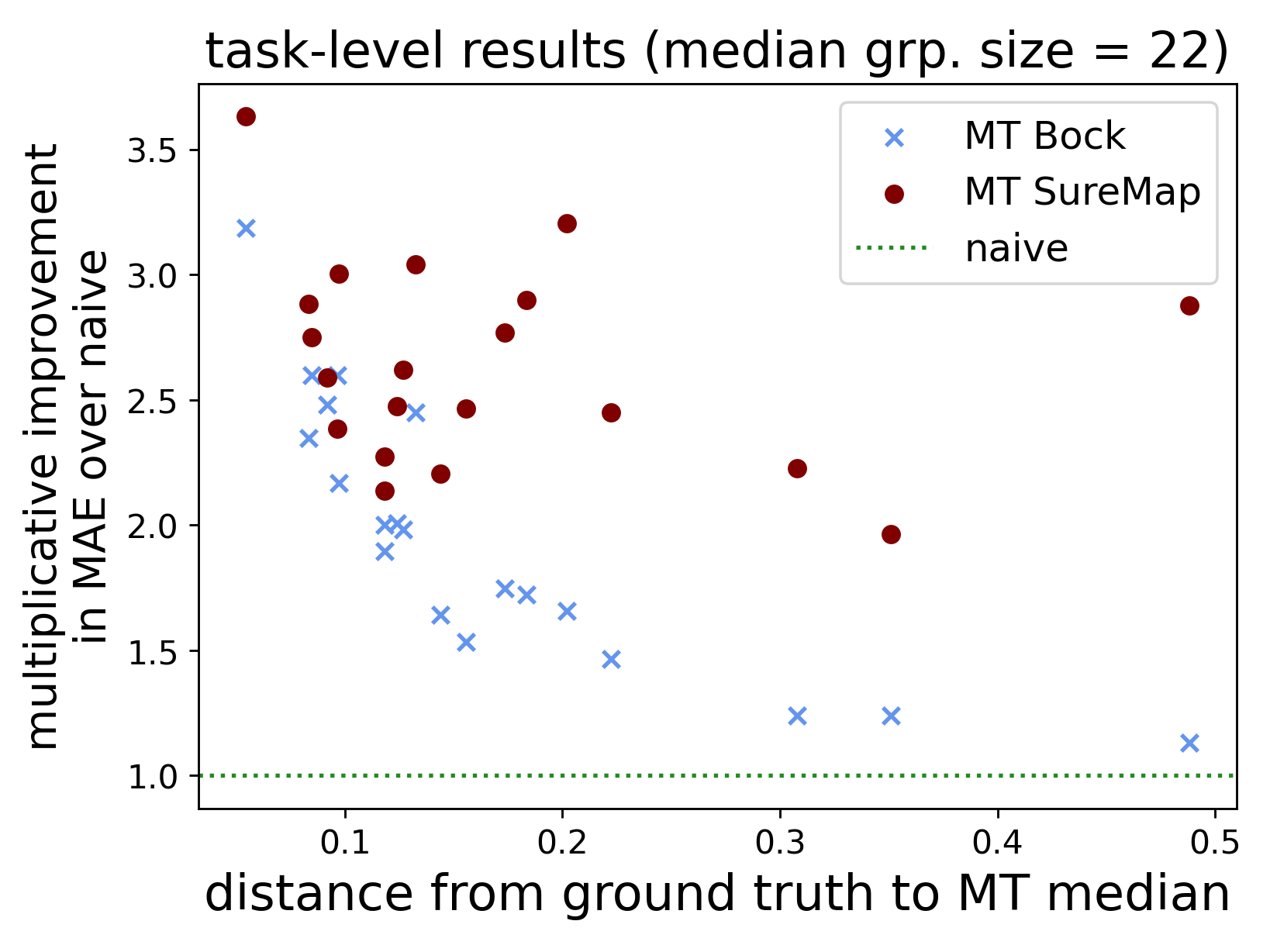}\vspace{-3mm}
	\caption{\label{app:multi-cer}
		Multi-task evaluations on Common Voice clusters (disaggregating by sex and age) when using CER as the target metric.
		On the left is the performance across different subsampling rates while on the right we show (multiplicative) performance improvement over the naive estimator on different tasks at subsampling rate 0.1.\looseness-1
	}\vspace{-2mm}
\end{figure}

\begin{figure}[!t]
	\centering
	\includegraphics[width=.325\linewidth]{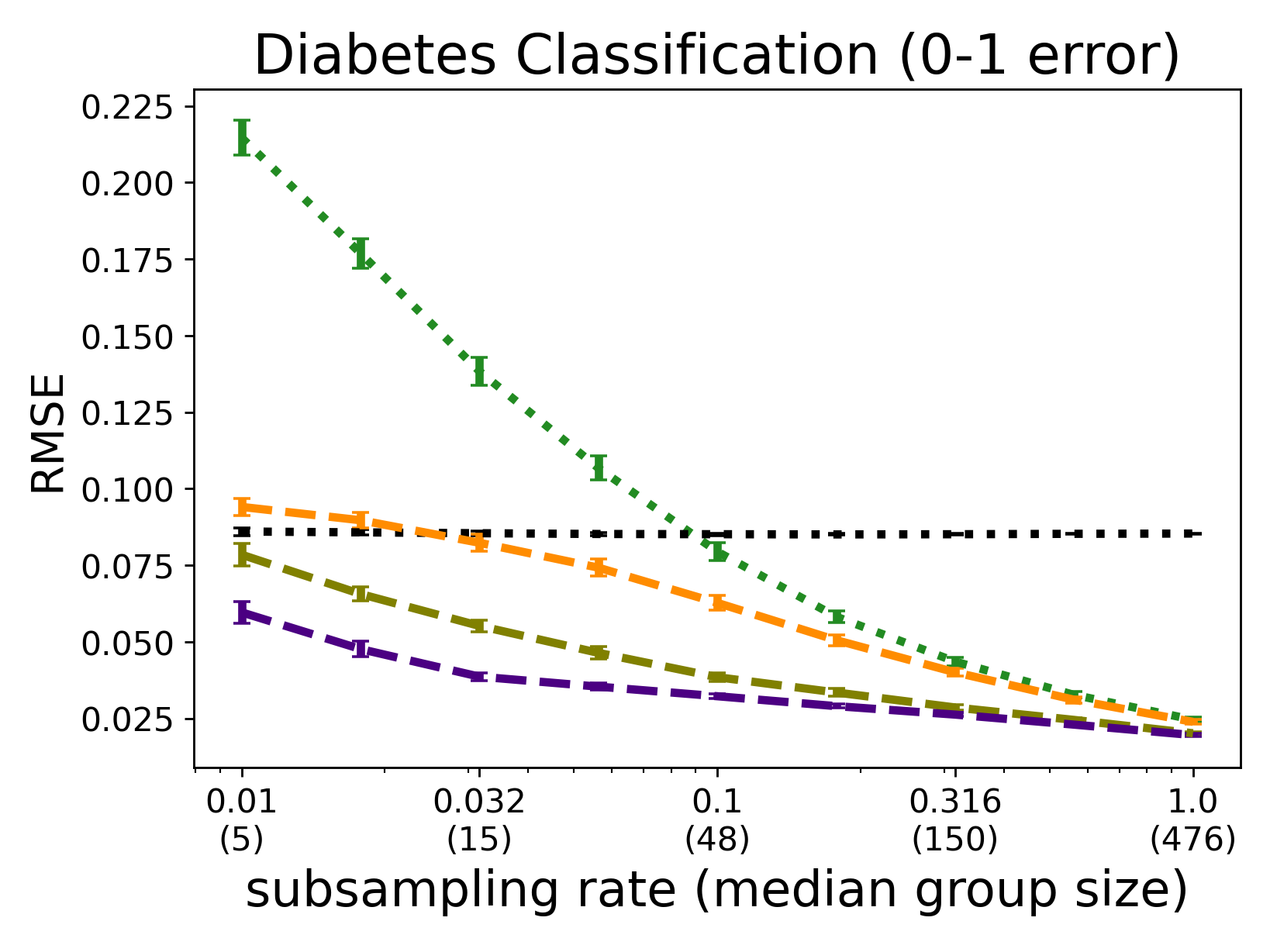}
	\includegraphics[width=.325\linewidth]{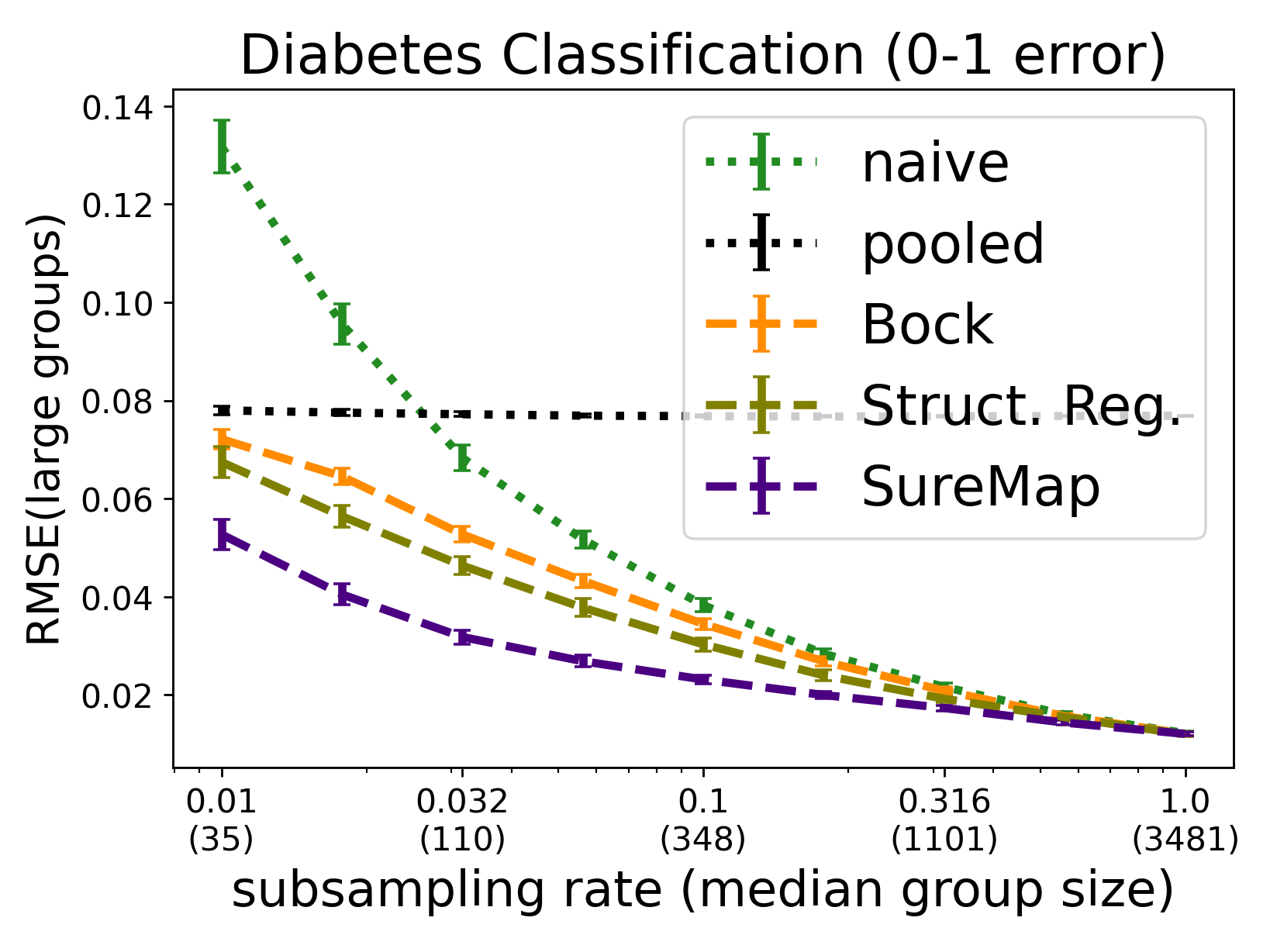}
	\includegraphics[width=.325\linewidth]{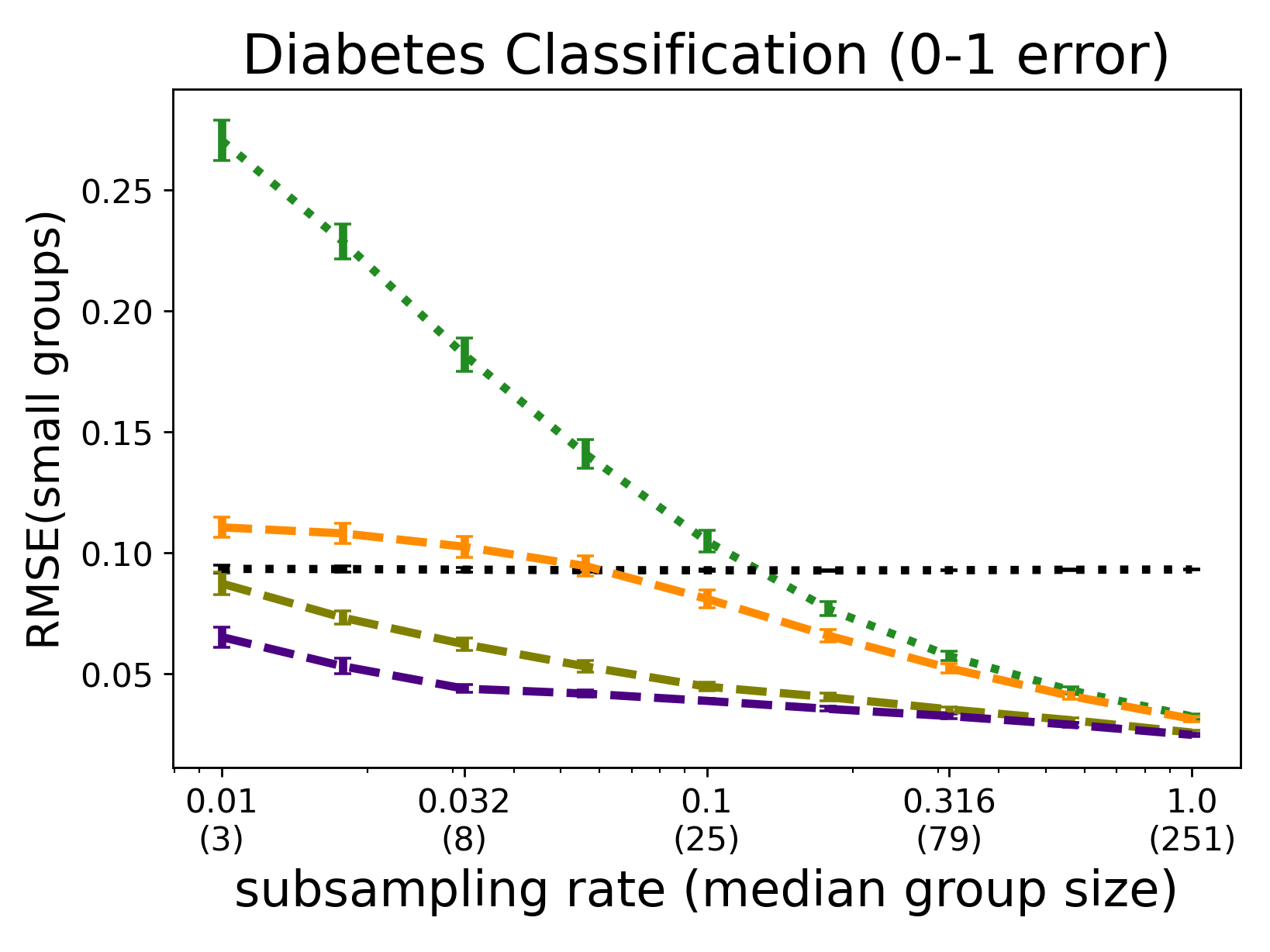}\vspace{-3.5mm}
	\includegraphics[width=.325\linewidth]{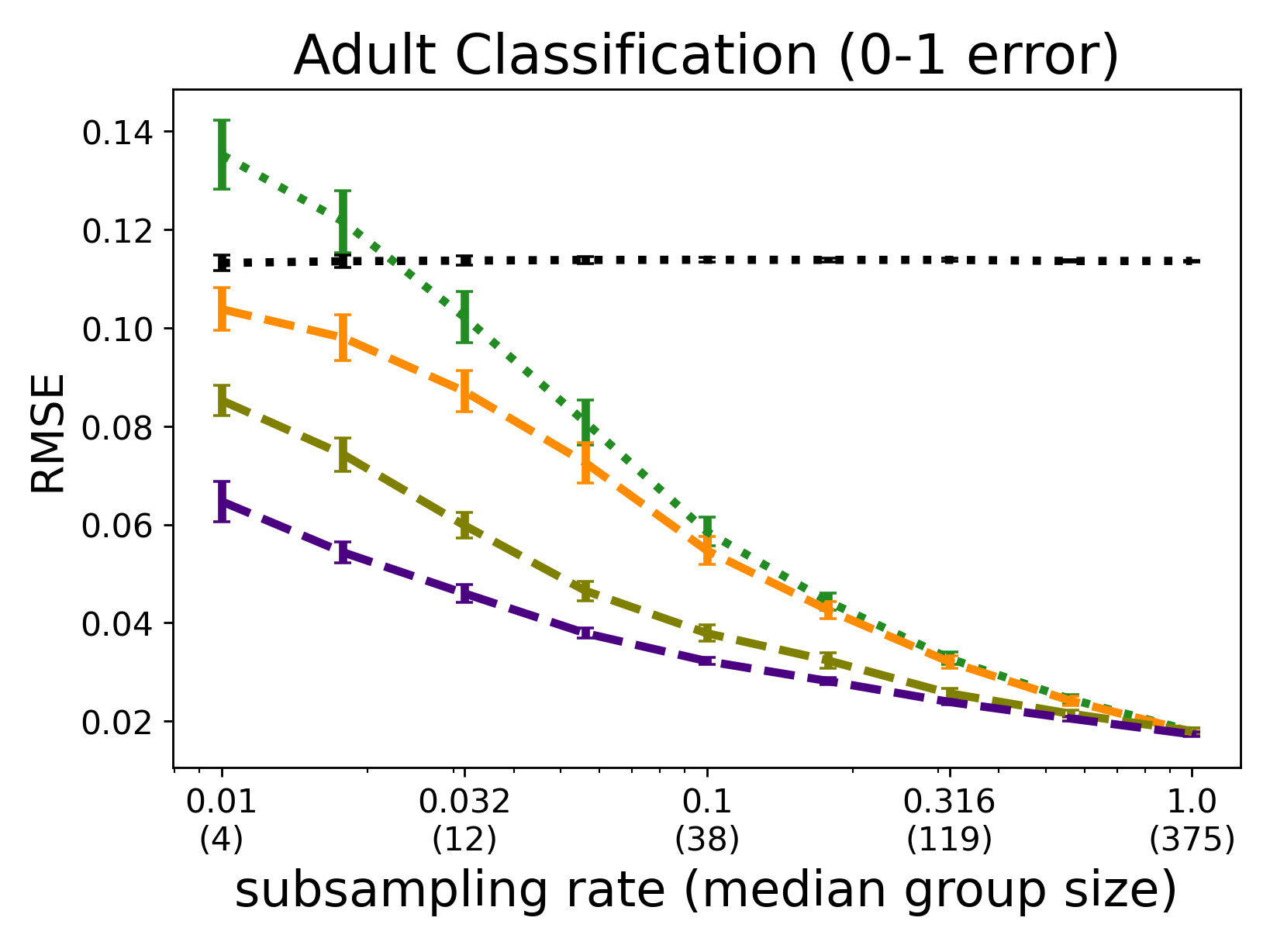}
	\includegraphics[width=.325\linewidth]{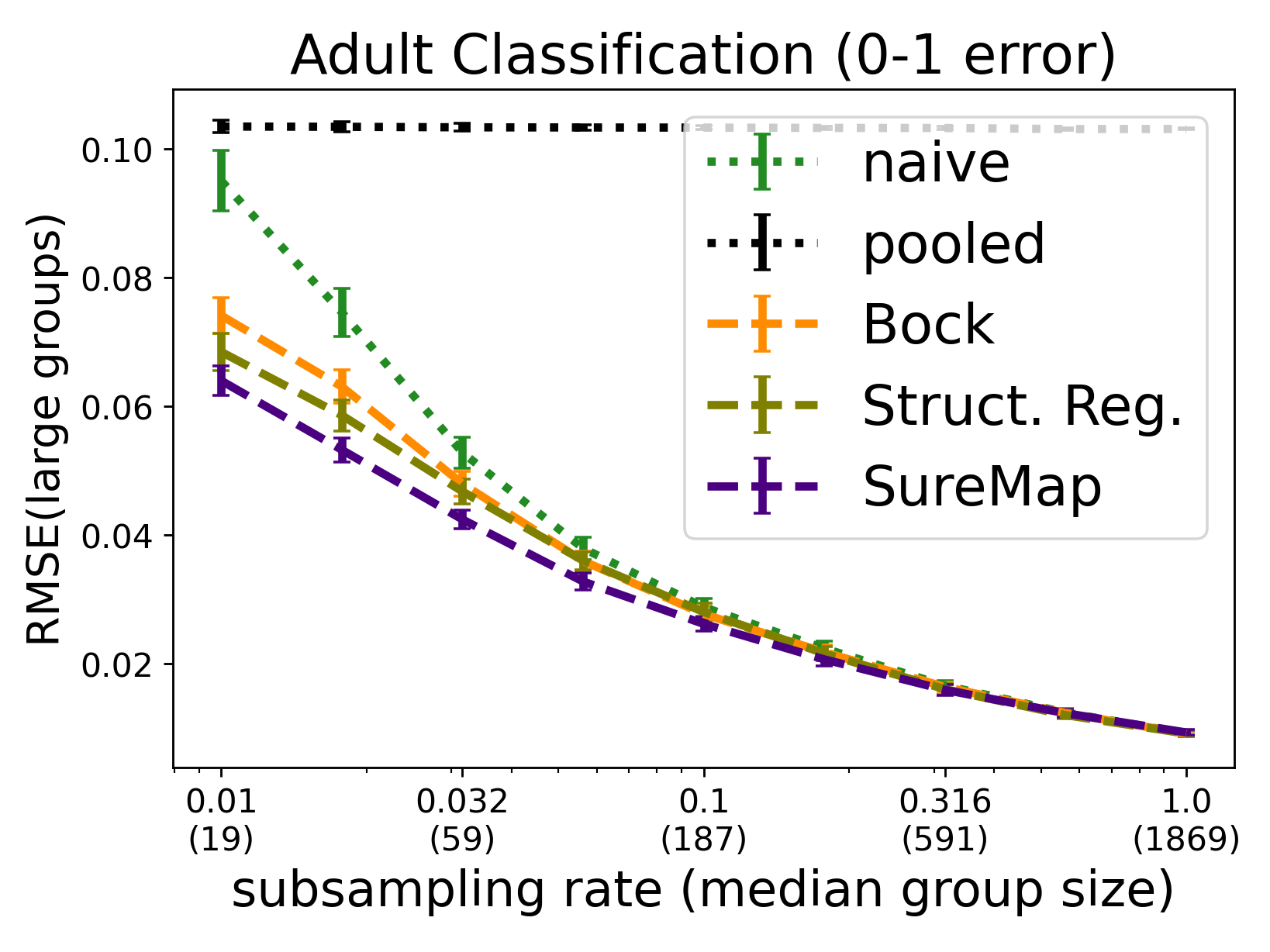}
	\includegraphics[width=.325\linewidth]{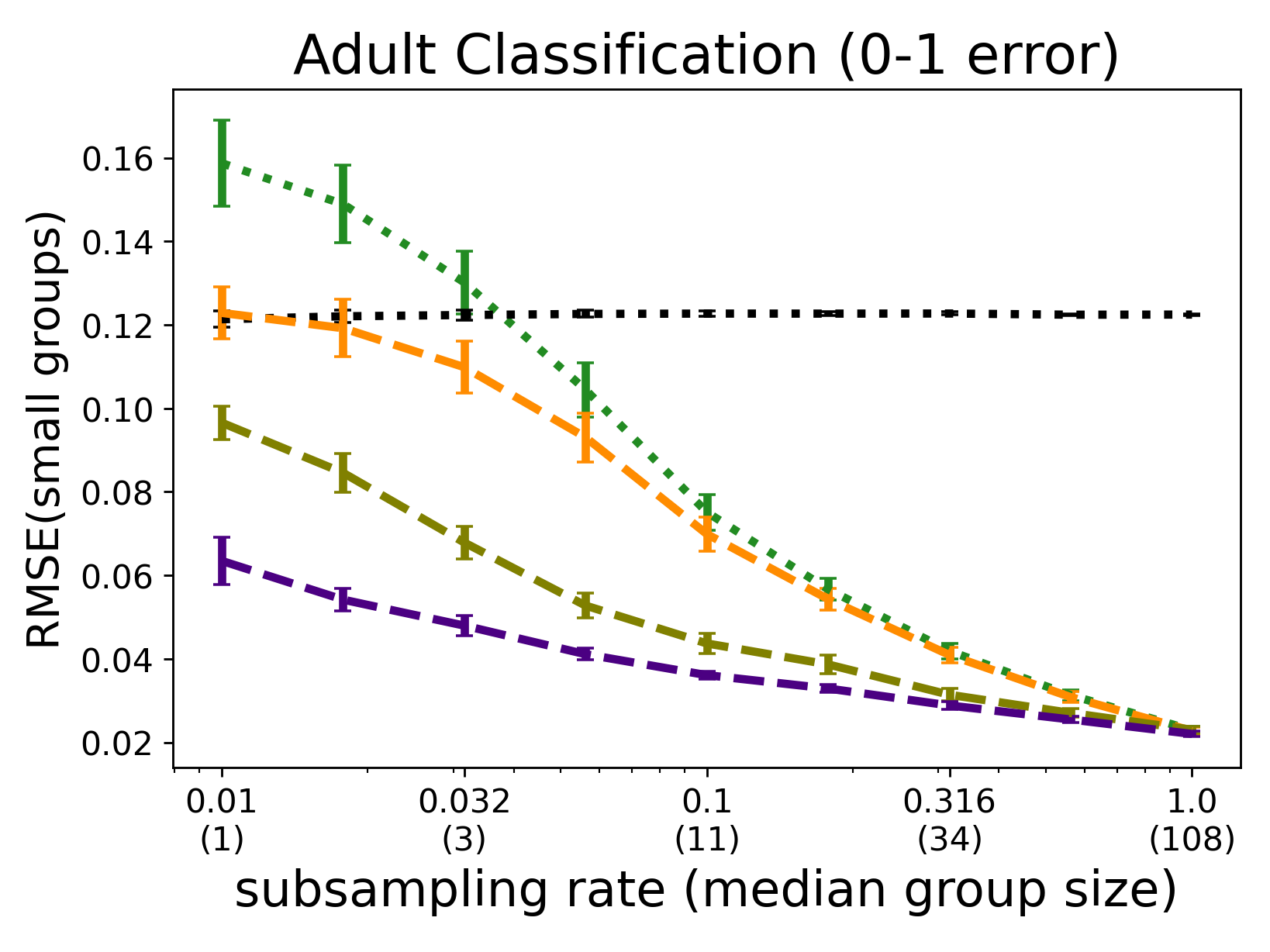}\vspace{-3.5mm}
	\includegraphics[width=.325\linewidth]{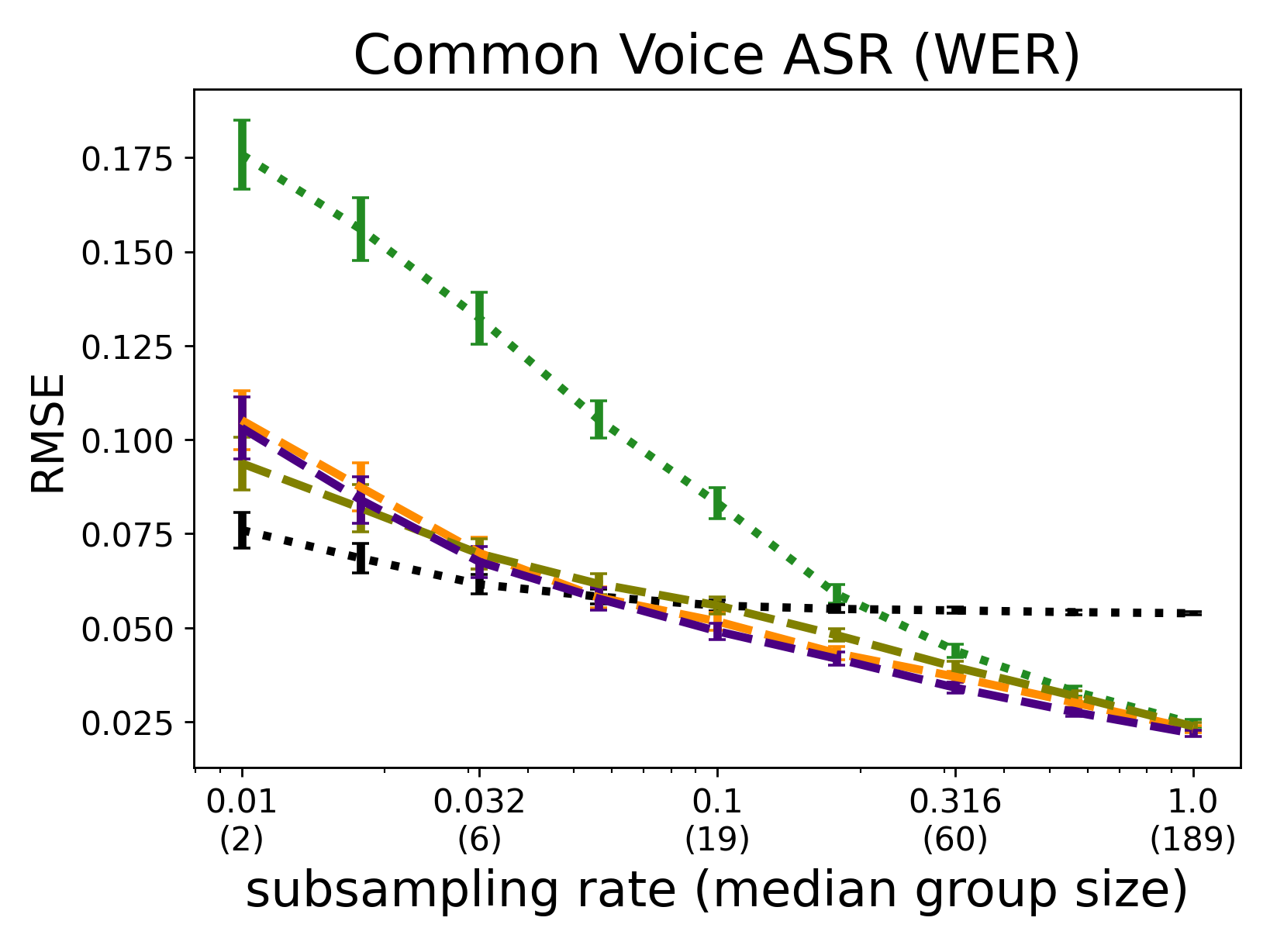}
	\includegraphics[width=.325\linewidth]{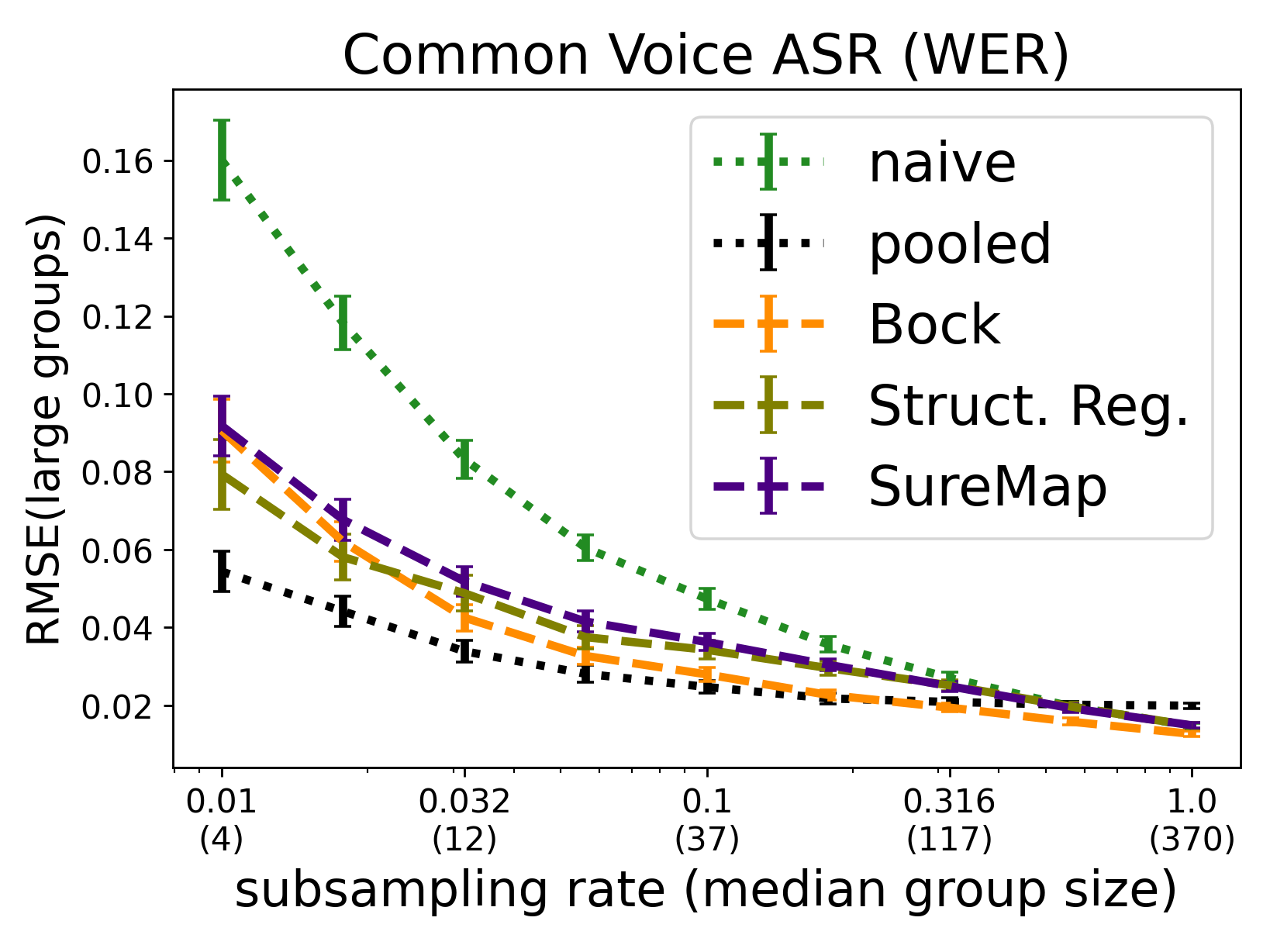}
	\includegraphics[width=.325\linewidth]{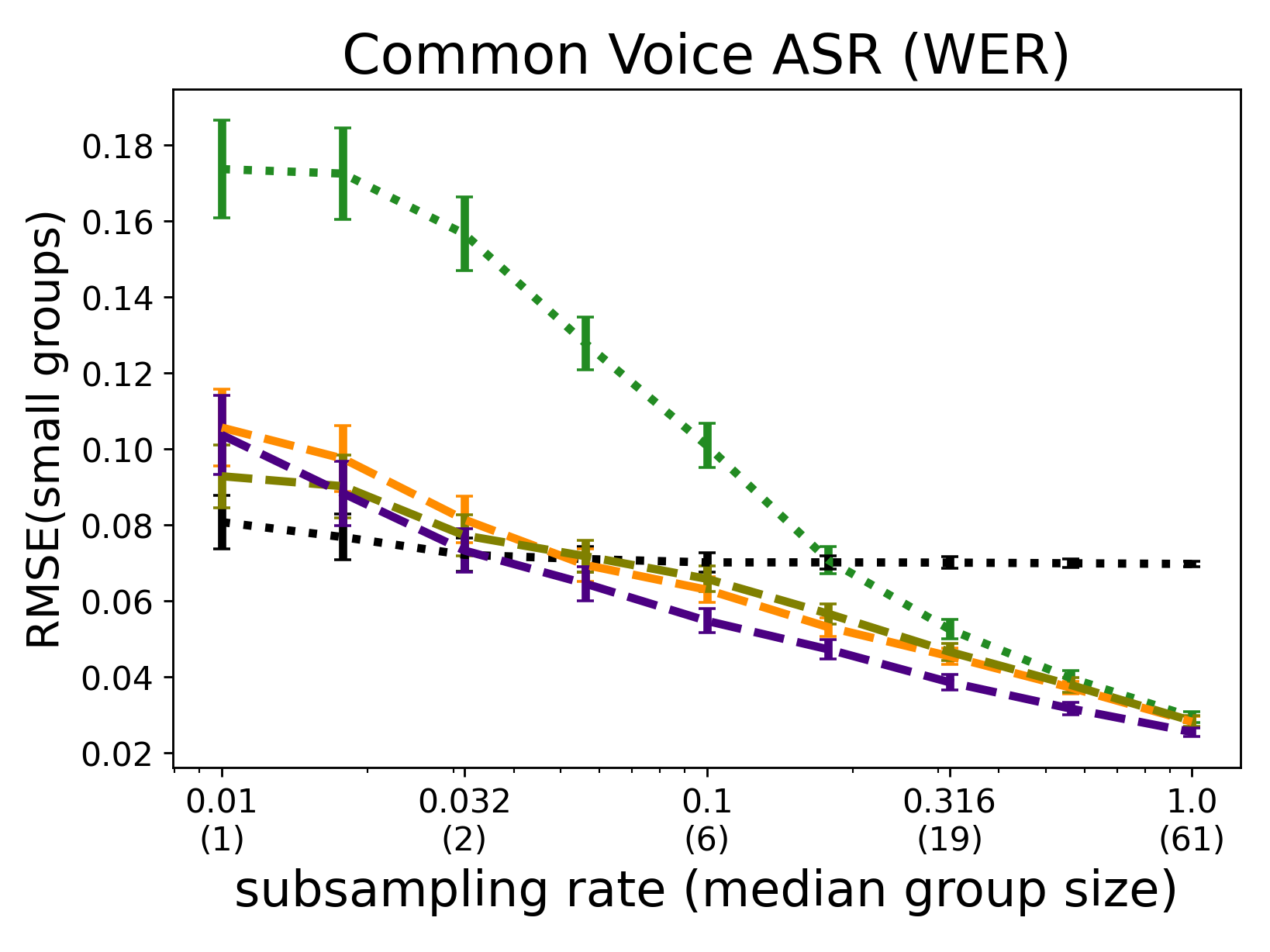}\vspace{-3mm}
	\caption{\label{app:single-rmse}
		Single-task evaluations on the Diabetes classification setting~(top, disaggregating by race, sex, and age), the Adult in-context classification setting~(middle, disaggregating by race, sex, and age), and the Common Voice ASR setting~(bottom, disaggregating by sex and age);
		these are the same evaluations as Figure~\ref{fig:single} except with RMSE instead of MAE as the performance measure.
		In the left column the RMSE is taken across all groups, while in the center it is only over large groups and on the right over small groups.
		Large and small are defined as the top and bottom half of all groups, respectively.\looseness-1
	}\vspace{-2mm}
\end{figure}

\begin{figure}[!t]
	\centering
	\includegraphics[width=.450\linewidth]{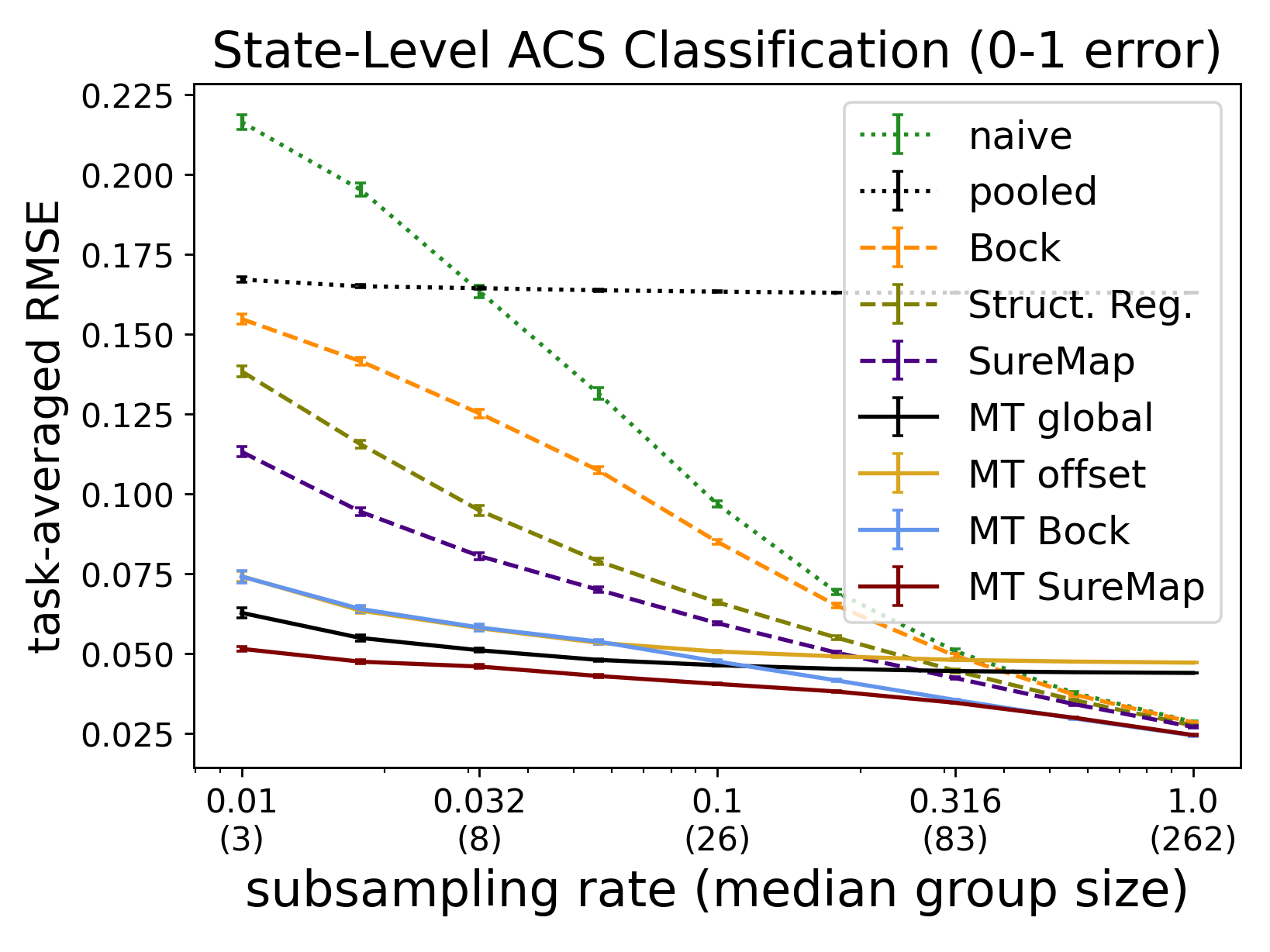}
	\hfill
	\includegraphics[width=.450\linewidth]{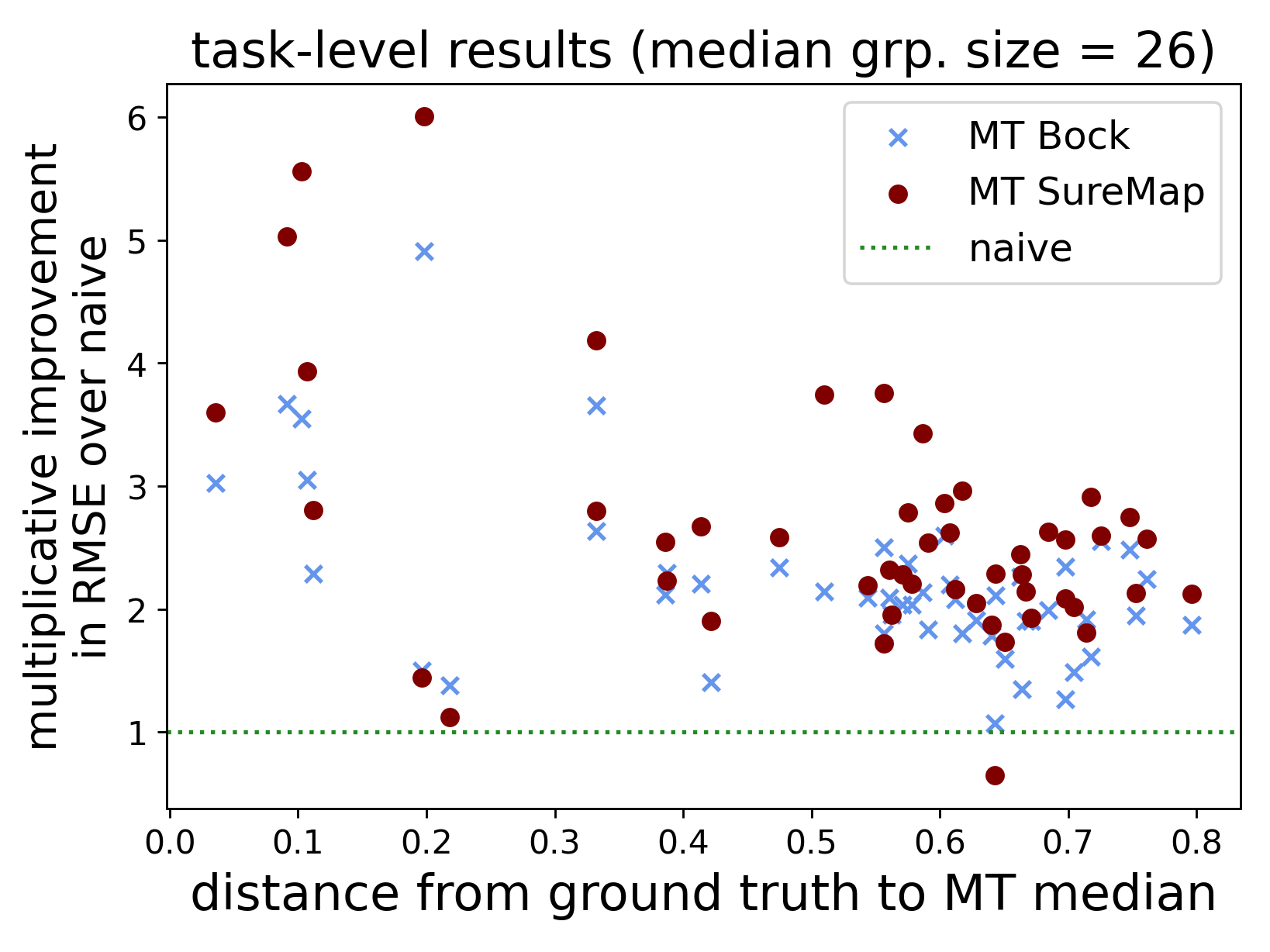}\vspace{-5mm}
	\includegraphics[width=.450\linewidth]{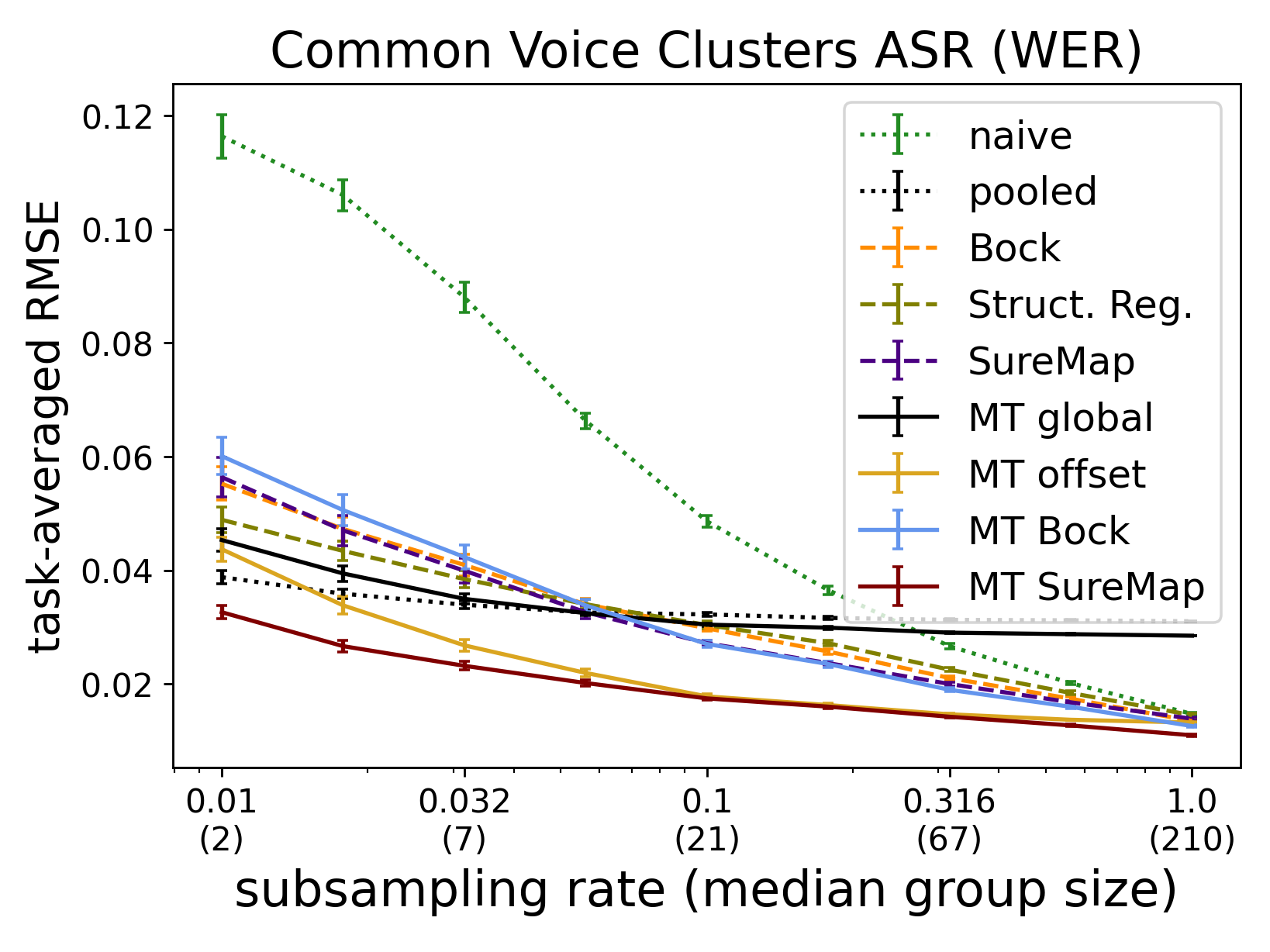}
	\hfill
	\includegraphics[width=.450\linewidth]{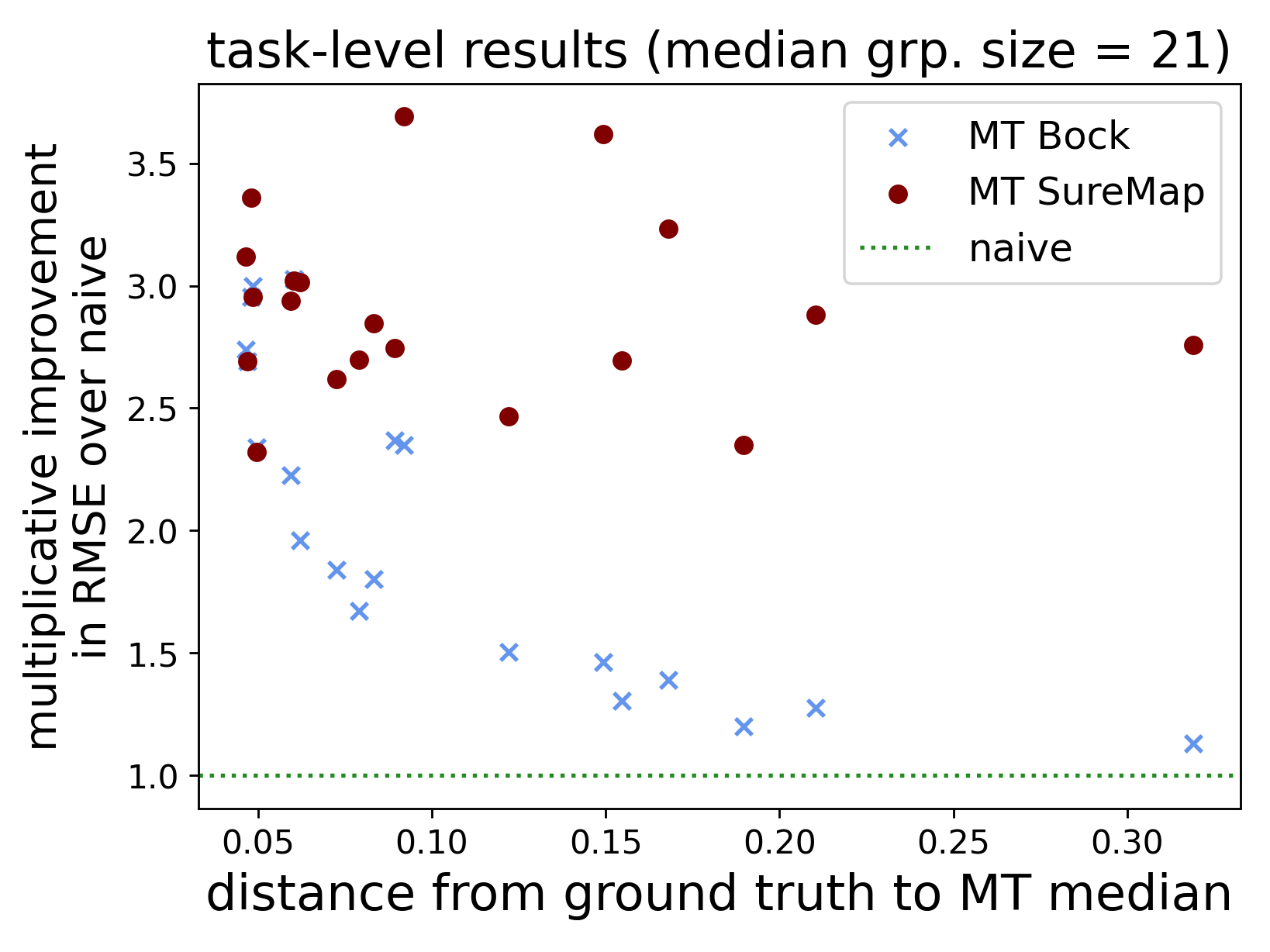}\vspace{-3mm}
	\caption{\label{app:multi-rmse}
		Multi-task evaluations on state-level ACS data (top, disaggregating by race, sex, and age) and Common Voice clusters (bottom, disaggregating by sex and age);
		these plots visualize the same evaluations as Figure~\ref{fig:multi} except they use RMSE instead of MAE as the performance measure.
		On the left is the performance across different subsampling rates while on the right we show (multiplicative) performance improvement over the naive estimator on different tasks at subsampling rate 0.1.\looseness-1
	}\vspace{-2mm}
\end{figure}

\newpage
\section{Additional evaluations}\label{app:evaluation}

In Figures~\ref{app:diabetes} \&~\ref{app:diabetes-mse}, we compare evaluation methods on the regression variant of the Diabetes task, where the goal is to predict a patient's length of stay using ridge regression.
In Figures~\ref{app:diabetes-auc} \&~\ref{app:slacs-auc}, we compare evaluation methods on Diabetes and SLACS with AUC as the target metric.
In Figures~\ref{app:single-cer} \&~\ref{app:multi-cer}, we compare evaluation methods on Common Voice and CVC with the character error rate~(CER) as the target metric.
And finally, in Figures~\ref{app:single-rmse} \&~\ref{app:multi-rmse}, we compare evaluation methods on Diabetes, Adult, Common Voice, SLACS, and CVC according to RMSE instead of MAE.

\clearpage
\section*{NeurIPS Paper Checklist}

\begin{enumerate}

\item {\bf Claims}
   \item[] Question: Do the main claims made in the abstract and introduction accurately reflect the paper's contributions and scope?
   \item[] Answer: \answerYes{} %
   \item[] Justification: our abstract and introduction both describe the introduction of a new method for disaggregated evaluation, which we do.
   \item[] Guidelines:
   \begin{itemize}
       \item The answer NA means that the abstract and introduction do not include the claims made in the paper.
       \item The abstract and/or introduction should clearly state the claims made, including the contributions made in the paper and important assumptions and limitations. A No or NA answer to this question will not be perceived well by the reviewers.
       \item The claims made should match theoretical and experimental results, and reflect how much the results can be expected to generalize to other settings.
       \item It is fine to include aspirational goals as motivation as long as it is clear that these goals are not attained by the paper.
   \end{itemize}

\item {\bf Limitations}
   \item[] Question: Does the paper discuss the limitations of the work performed by the authors?
   \item[] Answer: \answerYes{} %
   \item[] Justification: c.f. Sections~\ref{sec:limitations} and~\ref{sec:ablations}.
   \item[] Guidelines:
   \begin{itemize}
       \item The answer NA means that the paper has no limitation while the answer No means that the paper has limitations, but those are not discussed in the paper.
       \item The authors are encouraged to create a separate "Limitations" section in their paper.
       \item The paper should point out any strong assumptions and how robust the results are to violations of these assumptions (e.g., independence assumptions, noiseless settings, model well-specification, asymptotic approximations only holding locally). The authors should reflect on how these assumptions might be violated in practice and what the implications would be.
       \item The authors should reflect on the scope of the claims made, e.g., if the approach was only tested on a few datasets or with a few runs. In general, empirical results often depend on implicit assumptions, which should be articulated.
       \item The authors should reflect on the factors that influence the performance of the approach. For example, a facial recognition algorithm may perform poorly when image resolution is low or images are taken in low lighting. Or a speech-to-text system might not be used reliably to provide closed captions for online lectures because it fails to handle technical jargon.
       \item The authors should discuss the computational efficiency of the proposed algorithms and how they scale with dataset size.
       \item If applicable, the authors should discuss possible limitations of their approach to address problems of privacy and fairness.
       \item While the authors might fear that complete honesty about limitations might be used by reviewers as grounds for rejection, a worse outcome might be that reviewers discover limitations that aren't acknowledged in the paper. The authors should use their best judgment and recognize that individual actions in favor of transparency play an important role in developing norms that preserve the integrity of the community. Reviewers will be specifically instructed to not penalize honesty concerning limitations.
   \end{itemize}

\item {\bf Theory Assumptions and Proofs}
   \item[] Question: For each theoretical result, does the paper provide the full set of assumptions and a complete (and correct) proof?
   \item[] Answer: \answerYes{} %
   \item[] Justification: c.f. Appendix~\ref{app:expressivity}.
   \item[] Guidelines:
   \begin{itemize}
       \item The answer NA means that the paper does not include theoretical results.
       \item All the theorems, formulas, and proofs in the paper should be numbered and cross-referenced.
       \item All assumptions should be clearly stated or referenced in the statement of any theorems.
       \item The proofs can either appear in the main paper or the supplemental material, but if they appear in the supplemental material, the authors are encouraged to provide a short proof sketch to provide intuition.
       \item Inversely, any informal proof provided in the core of the paper should be complemented by formal proofs provided in appendix or supplemental material.
       \item Theorems and Lemmas that the proof relies upon should be properly referenced.
   \end{itemize}

   \item {\bf Experimental Result Reproducibility}
   \item[] Question: Does the paper fully disclose all the information needed to reproduce the main experimental results of the paper to the extent that it affects the main claims and/or conclusions of the paper (regardless of whether the code and data are provided or not)?
   \item[] Answer: \answerYes{} %
   \item[] Justification: c.f. Appendix~\ref{app:computation}
   \item[] Guidelines:
   \begin{itemize}
       \item The answer NA means that the paper does not include experiments.
       \item If the paper includes experiments, a No answer to this question will not be perceived well by the reviewers: Making the paper reproducible is important, regardless of whether the code and data are provided or not.
       \item If the contribution is a dataset and/or model, the authors should describe the steps taken to make their results reproducible or verifiable.
       \item Depending on the contribution, reproducibility can be accomplished in various ways. For example, if the contribution is a novel architecture, describing the architecture fully might suffice, or if the contribution is a specific model and empirical evaluation, it may be necessary to either make it possible for others to replicate the model with the same dataset, or provide access to the model. In general. releasing code and data is often one good way to accomplish this, but reproducibility can also be provided via detailed instructions for how to replicate the results, access to a hosted model (e.g., in the case of a large language model), releasing of a model checkpoint, or other means that are appropriate to the research performed.
       \item While NeurIPS does not require releasing code, the conference does require all submissions to provide some reasonable avenue for reproducibility, which may depend on the nature of the contribution. For example
       \begin{enumerate}
           \item If the contribution is primarily a new algorithm, the paper should make it clear how to reproduce that algorithm.
           \item If the contribution is primarily a new model architecture, the paper should describe the architecture clearly and fully.
           \item If the contribution is a new model (e.g., a large language model), then there should either be a way to access this model for reproducing the results or a way to reproduce the model (e.g., with an open-source dataset or instructions for how to construct the dataset).
           \item We recognize that reproducibility may be tricky in some cases, in which case authors are welcome to describe the particular way they provide for reproducibility. In the case of closed-source models, it may be that access to the model is limited in some way (e.g., to registered users), but it should be possible for other researchers to have some path to reproducing or verifying the results.
       \end{enumerate}
   \end{itemize}

\item {\bf Open access to data and code}
   \item[] Question: Does the paper provide open access to the data and code, with sufficient instructions to faithfully reproduce the main experimental results, as described in supplemental material?
   \item[] Answer: \answerYes{} %
   \item[] Justification: \url{https://github.com/mkhodak/SureMap}
   \item[] Guidelines:
   \begin{itemize}
       \item The answer NA means that paper does not include experiments requiring code.
       \item Please see the NeurIPS code and data submission guidelines (\url{https://nips.cc/public/guides/CodeSubmissionPolicy}) for more details.
       \item While we encourage the release of code and data, we understand that this might not be possible, so “No” is an acceptable answer. Papers cannot be rejected simply for not including code, unless this is central to the contribution (e.g., for a new open-source benchmark).
       \item The instructions should contain the exact command and environment needed to run to reproduce the results. See the NeurIPS code and data submission guidelines (\url{https://nips.cc/public/guides/CodeSubmissionPolicy}) for more details.
       \item The authors should provide instructions on data access and preparation, including how to access the raw data, preprocessed data, intermediate data, and generated data, etc.
       \item The authors should provide scripts to reproduce all experimental results for the new proposed method and baselines. If only a subset of experiments are reproducible, they should state which ones are omitted from the script and why.
       \item At submission time, to preserve anonymity, the authors should release anonymized versions (if applicable).
       \item Providing as much information as possible in supplemental material (appended to the paper) is recommended, but including URLs to data and code is permitted.
   \end{itemize}

\item {\bf Experimental Setting/Details}
   \item[] Question: Does the paper specify all the training and test details (e.g., data splits, hyperparameters, how they were chosen, type of optimizer, etc.) necessary to understand the results?
   \item[] Answer: \answerYes{} %
   \item[] Justification: c.f. Section~\ref{sec:evaluation}, Appendix~\ref{app:computation}, and the linked code.
   \item[] Guidelines:
   \begin{itemize}
       \item The answer NA means that the paper does not include experiments.
       \item The experimental setting should be presented in the core of the paper to a level of detail that is necessary to appreciate the results and make sense of them.
       \item The full details can be provided either with the code, in appendix, or as supplemental material.
   \end{itemize}

\item {\bf Experiment Statistical Significance}
   \item[] Question: Does the paper report error bars suitably and correctly defined or other appropriate information about the statistical significance of the experiments?
   \item[] Answer: \answerYes{} %
   \item[] Justification: all figures except scatter plots have 95\% confidence intervals.
   \item[] Guidelines:
   \begin{itemize}
       \item The answer NA means that the paper does not include experiments.
       \item The authors should answer "Yes" if the results are accompanied by error bars, confidence intervals, or statistical significance tests, at least for the experiments that support the main claims of the paper.
       \item The factors of variability that the error bars are capturing should be clearly stated (for example, train/test split, initialization, random drawing of some parameter, or overall run with given experimental conditions).
       \item The method for calculating the error bars should be explained (closed form formula, call to a library function, bootstrap, etc.)
       \item The assumptions made should be given (e.g., Normally distributed errors).
       \item It should be clear whether the error bar is the standard deviation or the standard error of the mean.
       \item It is OK to report 1-sigma error bars, but one should state it. The authors should preferably report a 2-sigma error bar than state that they have a 96\% CI, if the hypothesis of Normality of errors is not verified.
       \item For asymmetric distributions, the authors should be careful not to show in tables or figures symmetric error bars that would yield results that are out of range (e.g. negative error rates).
       \item If error bars are reported in tables or plots, The authors should explain in the text how they were calculated and reference the corresponding figures or tables in the text.
   \end{itemize}

\item {\bf Experiments Compute Resources}
   \item[] Question: For each experiment, does the paper provide sufficient information on the computer resources (type of compute workers, memory, time of execution) needed to reproduce the experiments?
   \item[] Answer: \answerYes{} %
   \item[] Justification: c.f. Appendix~\ref{app:computational}.
   \item[] Guidelines:
   \begin{itemize}
       \item The answer NA means that the paper does not include experiments.
       \item The paper should indicate the type of compute workers CPU or GPU, internal cluster, or cloud provider, including relevant memory and storage.
       \item The paper should provide the amount of compute required for each of the individual experimental runs as well as estimate the total compute.
       \item The paper should disclose whether the full research project required more compute than the experiments reported in the paper (e.g., preliminary or failed experiments that didn't make it into the paper).
   \end{itemize}

\item {\bf Code Of Ethics}
   \item[] Question: Does the research conducted in the paper conform, in every respect, with the NeurIPS Code of Ethics \url{https://neurips.cc/public/EthicsGuidelines}?
   \item[] Answer: \answerYes{} %
   \item[] Justification: we have reviewed the Code of Ethics and our work conforms.
   \item[] Guidelines:
   \begin{itemize}
       \item The answer NA means that the authors have not reviewed the NeurIPS Code of Ethics.
       \item If the authors answer No, they should explain the special circumstances that require a deviation from the Code of Ethics.
       \item The authors should make sure to preserve anonymity (e.g., if there is a special consideration due to laws or regulations in their jurisdiction).
   \end{itemize}

\item {\bf Broader Impacts}
   \item[] Question: Does the paper discuss both potential positive societal impacts and negative societal impacts of the work performed?
   \item[] Answer: \answerYes{} %
   \item[] Justification: c.f. Section~\ref{sec:conclusion}.
   \item[] Guidelines:
   \begin{itemize}
       \item The answer NA means that there is no societal impact of the work performed.
       \item If the authors answer NA or No, they should explain why their work has no societal impact or why the paper does not address societal impact.
       \item Examples of negative societal impacts include potential malicious or unintended uses (e.g., disinformation, generating fake profiles, surveillance), fairness considerations (e.g., deployment of technologies that could make decisions that unfairly impact specific groups), privacy considerations, and security considerations.
       \item The conference expects that many papers will be foundational research and not tied to particular applications, let alone deployments. However, if there is a direct path to any negative applications, the authors should point it out. For example, it is legitimate to point out that an improvement in the quality of generative models could be used to generate deepfakes for disinformation. On the other hand, it is not needed to point out that a generic algorithm for optimizing neural networks could enable people to train models that generate Deepfakes faster.
       \item The authors should consider possible harms that could arise when the technology is being used as intended and functioning correctly, harms that could arise when the technology is being used as intended but gives incorrect results, and harms following from (intentional or unintentional) misuse of the technology.
       \item If there are negative societal impacts, the authors could also discuss possible mitigation strategies (e.g., gated release of models, providing defenses in addition to attacks, mechanisms for monitoring misuse, mechanisms to monitor how a system learns from feedback over time, improving the efficiency and accessibility of ML).
   \end{itemize}

\item {\bf Safeguards}
   \item[] Question: Does the paper describe safeguards that have been put in place for responsible release of data or models that have a high risk for misuse (e.g., pretrained language models, image generators, or scraped datasets)?
   \item[] Answer: \answerNA{} %
   \item[] Justification: The one asset released is just evaluations of an open-source model on open-source data and so unlikely to be misused.
   \item[] Guidelines:
   \begin{itemize}
       \item The answer NA means that the paper poses no such risks.
       \item Released models that have a high risk for misuse or dual-use should be released with necessary safeguards to allow for controlled use of the model, for example by requiring that users adhere to usage guidelines or restrictions to access the model or implementing safety filters.
       \item Datasets that have been scraped from the Internet could pose safety risks. The authors should describe how they avoided releasing unsafe images.
       \item We recognize that providing effective safeguards is challenging, and many papers do not require this, but we encourage authors to take this into account and make a best faith effort.
   \end{itemize}

\item {\bf Licenses for existing assets}
   \item[] Question: Are the creators or original owners of assets (e.g., code, data, models), used in the paper, properly credited and are the license and terms of use explicitly mentioned and properly respected?
   \item[] Answer: \answerYes{} %
   \item[] Justification: c.f. Appendix~\ref{app:data}.
   \item[] Guidelines:
   \begin{itemize}
       \item The answer NA means that the paper does not use existing assets.
       \item The authors should cite the original paper that produced the code package or dataset.
       \item The authors should state which version of the asset is used and, if possible, include a URL.
       \item The name of the license (e.g., CC-BY 4.0) should be included for each asset.
       \item For scraped data from a particular source (e.g., website), the copyright and terms of service of that source should be provided.
       \item If assets are released, the license, copyright information, and terms of use in the package should be provided. For popular datasets, \url{paperswithcode.com/datasets} has curated licenses for some datasets. Their licensing guide can help determine the license of a dataset.
       \item For existing datasets that are re-packaged, both the original license and the license of the derived asset (if it has changed) should be provided.
       \item If this information is not available online, the authors are encouraged to reach out to the asset's creators.
   \end{itemize}

\item {\bf New Assets}
   \item[] Question: Are new assets introduced in the paper well documented and is the documentation provided alongside the assets?
   \item[] Answer: \answerYes{} %
   \item[] Justification: we release new datasets in conjunction with three of the tasks described in Section~\ref{sec:datasets}; see that section and Appendix~\ref{app:data} for further details.
   This data is provided at the code link.
   \item[] Guidelines:
   \begin{itemize}
       \item The answer NA means that the paper does not release new assets.
       \item Researchers should communicate the details of the dataset/code/model as part of their submissions via structured templates. This includes details about training, license, limitations, etc.
       \item The paper should discuss whether and how consent was obtained from people whose asset is used.
       \item At submission time, remember to anonymize your assets (if applicable). You can either create an anonymized URL or include an anonymized zip file.
   \end{itemize}

\item {\bf Crowdsourcing and Research with Human Subjects}
   \item[] Question: For crowdsourcing experiments and research with human subjects, does the paper include the full text of instructions given to participants and screenshots, if applicable, as well as details about compensation (if any)?
   \item[] Answer: \answerNA{} %
   \item[] Justification:
   \item[] Guidelines:
   \begin{itemize}
       \item The answer NA means that the paper does not involve crowdsourcing nor research with human subjects.
       \item Including this information in the supplemental material is fine, but if the main contribution of the paper involves human subjects, then as much detail as possible should be included in the main paper.
       \item According to the NeurIPS Code of Ethics, workers involved in data collection, curation, or other labor should be paid at least the minimum wage in the country of the data collector.
   \end{itemize}

\item {\bf Institutional Review Board (IRB) Approvals or Equivalent for Research with Human Subjects}
   \item[] Question: Does the paper describe potential risks incurred by study participants, whether such risks were disclosed to the subjects, and whether Institutional Review Board (IRB) approvals (or an equivalent approval/review based on the requirements of your country or institution) were obtained?
   \item[] Answer: \answerNA{} %
   \item[] Justification:
   \item[] Guidelines:
   \begin{itemize}
       \item The answer NA means that the paper does not involve crowdsourcing nor research with human subjects.
       \item Depending on the country in which research is conducted, IRB approval (or equivalent) may be required for any human subjects research. If you obtained IRB approval, you should clearly state this in the paper.
       \item We recognize that the procedures for this may vary significantly between institutions and locations, and we expect authors to adhere to the NeurIPS Code of Ethics and the guidelines for their institution.
       \item For initial submissions, do not include any information that would break anonymity (if applicable), such as the institution conducting the review.
   \end{itemize}

\end{enumerate}

\end{document}